\def\eqref#1{equation~\ref{#1}}
\def\1{\bm{1}}
\DeclareMathAlphabet{\mathsfit}{\encodingdefault}{\sfdefault}{m}{sl}
\SetMathAlphabet{\mathsfit}{bold}{\encodingdefault}{\sfdefault}{bx}{n}
\newcommand{\E}{\mathbb{E}}
\newcommand{\Var}{\mathrm{Var}}
\newcommand{\bx}{{\boldsymbol x}}
\newcommand{\bbeta}{{\boldsymbol \beta}}
\newcommand{\bxi}{{\boldsymbol \xi}}
\newcommand{\bW}{{\boldsymbol W}}
\def\cE{{\cal E}}
\def\m{\mu}
\def\n{\nu}
\def\Si{\Sigma}
\def\cE{{\cal E}}
\def\cL{{\cal L}}
\def\hR{\mathbb{R}}
\def\cd{\cdot}
\def\bx{{\bf x}}
\newtheorem{theorem}{Theorem}
\newtheorem{lemma}{Lemma}
\newtheorem{remark}{Remark}
\newtheorem{assumption}{Assumption}
\edef\oldassumption{\the\numexpr\value{assumption}+1}
\title{Accelerating Convergence of Replica Exchange Stochastic Gradient MCMC via Variance Reduction}
\author{Wei Deng \thanks{Equal contribution}\\
Department of Mathematics\\
Purdue University\\
West Lafayette, IN, USA \\
\texttt{weideng056@gmail.com} \\

\And
Qi Feng \footnote[1]{Equal contribution}\\
Department of Mathematics \\
University of Southern California \\
Los Angeles, CA, USA \\
\texttt{qif@usc.edu} \\
\AND
Georgios Karagiannis \\
Department of Mathematical Sciences \\
Durham University \\
Durham, UK \\
\texttt{georgios.karagiannis@durham.ac.uk} \\
\And
Guang Lin \\
Departments of Mathematics \& \\
School of Mechanical Engineering \\
Purdue University \\
West Lafayette, IN, USA \\
\texttt{guanglin@purdue.edu} \\
\AND
Faming Liang \\
Departments of Statistics \\
Purdue University \\
West Lafayette, IN, USA \\
\texttt{fmliang@purdue.edu} \\
}
\begin{document}

\maketitle

\begin{abstract}
Replica exchange stochastic gradient Langevin dynamics (reSGLD) has shown promise in accelerating the convergence in non-convex learning; however, an excessively large correction for avoiding biases from noisy energy estimators has limited the potential of the acceleration. To address this issue, we study the variance reduction for noisy energy estimators, which promotes much more effective swaps. Theoretically, we provide a non-asymptotic analysis on the exponential convergence for the underlying continuous-time Markov jump process; moreover, we consider a generalized Girsanov theorem which includes the change of Poisson measure to overcome the crude discretization based on the Gr\"{o}nwall's inequality and yields a much tighter error in the 2-Wasserstein ($\mathcal{W}_2$) distance. Numerically, we conduct extensive experiments and obtain state-of-the-art results in optimization and uncertainty estimates for synthetic experiments and image data.
\end{abstract}

\section{Introduction}

Stochastic gradient Monte Carlo methods \citep{Welling11, Chen14, Li16} are the golden standard for Bayesian inference in deep learning due to their theoretical guarantees in uncertainty quantification \citep{VollmerZW2016, Chen15} and non-convex optimization \citep{Yuchen17}. However, despite their scalability with respect to the data size, their mixing rates are often extremely slow for complex deep neural networks with rugged energy landscapes \citep{landscape18}. To speed up the convergence, several techniques have been proposed in the literature in order to accelerate their exploration of multiple modes on the energy landscape, for example, dynamic temperatures \citep{nanxiang17} and cyclic learning rates \citep{ruqi2020}, % and dynamic importance sampling \citep{CSGLD}, 
to name a few. However, such strategies only explore contiguously a limited region around a few informative modes. Inspired by the successes of replica exchange, also known as parallel tempering, in traditional Monte Carlo methods \citep{PhysRevLett86, parallel_tempering05}, reSGLD \citep{deng2020} uses multiple processes based on stochastic gradient Langevin dynamics (SGLD) where interactions between different SGLD chains are conducted in a manner that encourages large jumps. In addition to the ideal utilization of parallel computation, the resulting process is able to jump to more informative modes for more robust uncertainty quantification. However, the noisy energy estimators in mini-batch settings lead to a large bias in the na\"{i}ve swaps, and a large correction is required to reduce the bias, which yields few effective swaps and insignificant accelerations. Therefore, how to reduce the variance of noisy energy estimators becomes essential in speeding up the convergence.

A long standing technique for variance reduction is the control variates method. The key to reducing the variance is to properly design correlated control variates so as to counteract some noise. Towards this direction, \citet{Dubey16, Xu18} proposed to update the control variate periodically for the stochastic gradient estimators and \citet{baker17} studied the construction of control variates using local modes. Despite the advantages in near-convex problems, a natural discrepancy between theory \citep{Niladri18, Xu18, SVRG_HMC_Zou} and practice \citep{kaiming15, bert} is \emph{whether we should avoid the gradient noise in non-convex problems}. To fill in the gap, we only focus on the variance reduction of noisy energy estimators to exploit the theoretical accelerations but no longer consider the variance reduction of the noisy gradients so that the empirical experience from stochastic gradient descents with momentum (M-SGD) can be naturally imported.

In this paper we propose the variance-reduced replica exchange stochastic gradient Langevin dynamics (VR-reSGLD) algorithm to accelerate convergence by reducing the variance of the noisy energy estimators. This algorithm not only \emph{shows the potential of exponential acceleration} via much more effective swaps in the non-asymptotic analysis but also \emph{demonstrates remarkable performance in practical tasks} where a limited time is required; while others \citep{Xu18, SAGA_LD_Zou} may only work well when the dynamics is sufficiently mixed and the discretization error becomes a major component. Moreover, the existing discretization error of the Langevin-based Markov jump processes \citep{chen2018accelerating, deng2020, Futoshi2020} is exponentially dependent on time due to the limitation of Gr\"{o}nwall's inequality. To avoid such a crude estimate, we consider the generalized Girsanov theorem and a change of Poisson measure. As a result, we obtain a much \emph{tighter discretization error only polynomially dependent on time}. Empirically, we test the algorithm through extensive experiments and achieve state-of-the-art performance in both optimization and uncertainty estimates.

\begin{figure*}[!ht]
  \centering
  \vskip -0.1in
  \subfigure[Gibbs measures at three temperatures $\tau$.]{\includegraphics[scale=0.17]{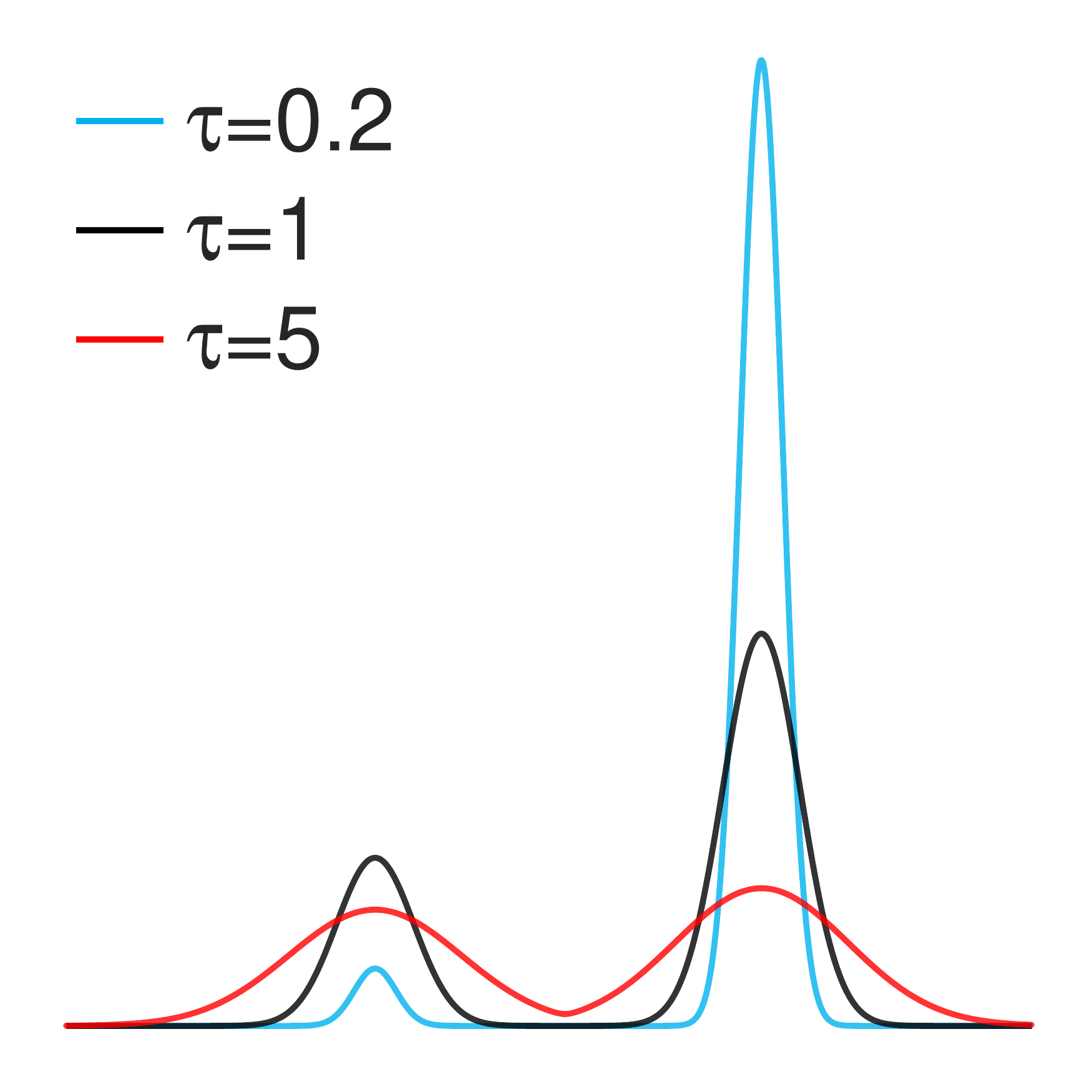}}\label{fig: 3a}\quad\quad
  \hspace{0.3cm}
  \subfigure[Sample trajectories on a energy landscape.]{\includegraphics[scale=0.12]{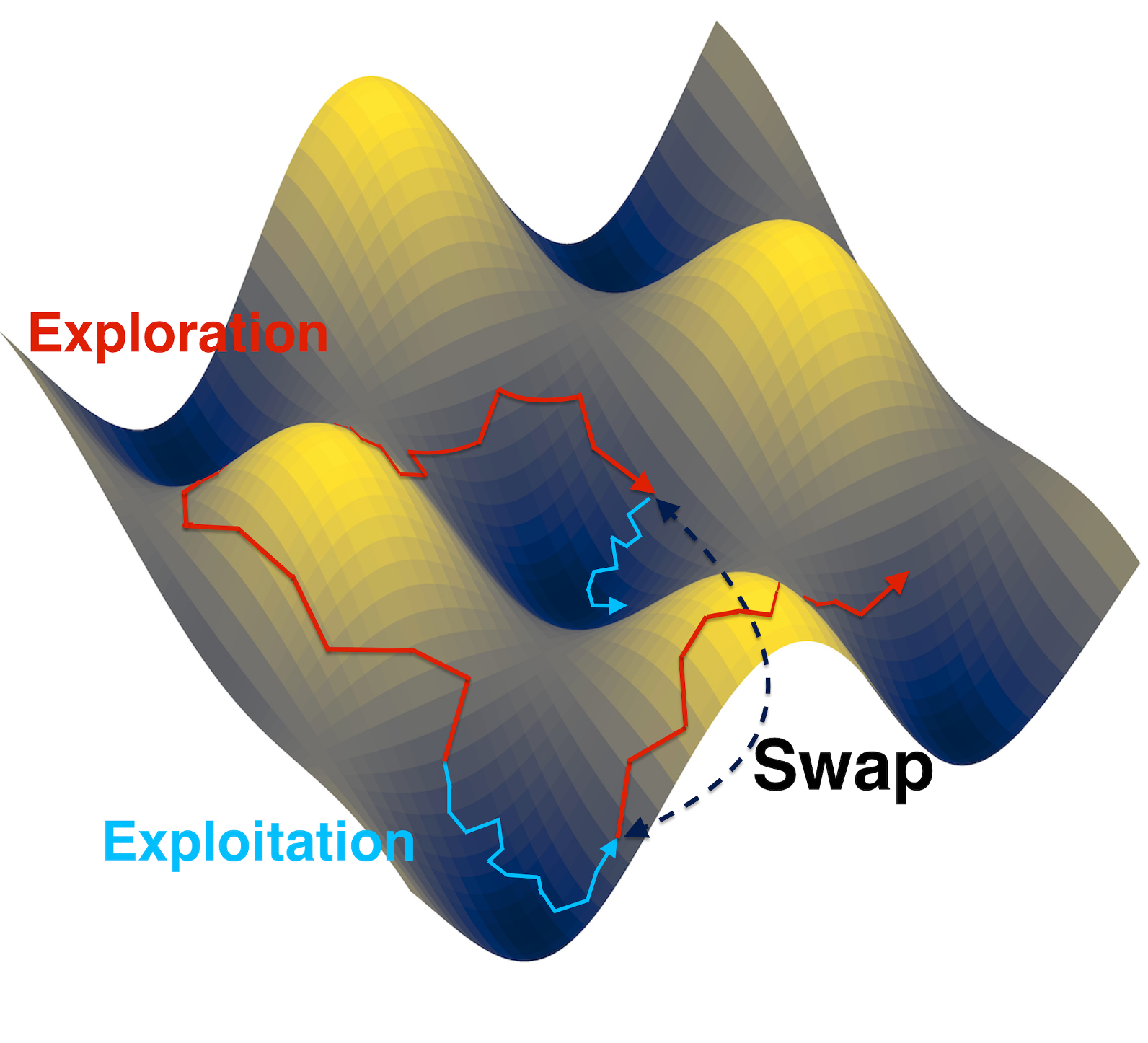}}\label{fig: 3b}\quad\quad
%   \hspace{0.3cm}
  \subfigure[Faster exponential convergence in $\mathcal{W}_2$]{\includegraphics[scale=0.17]{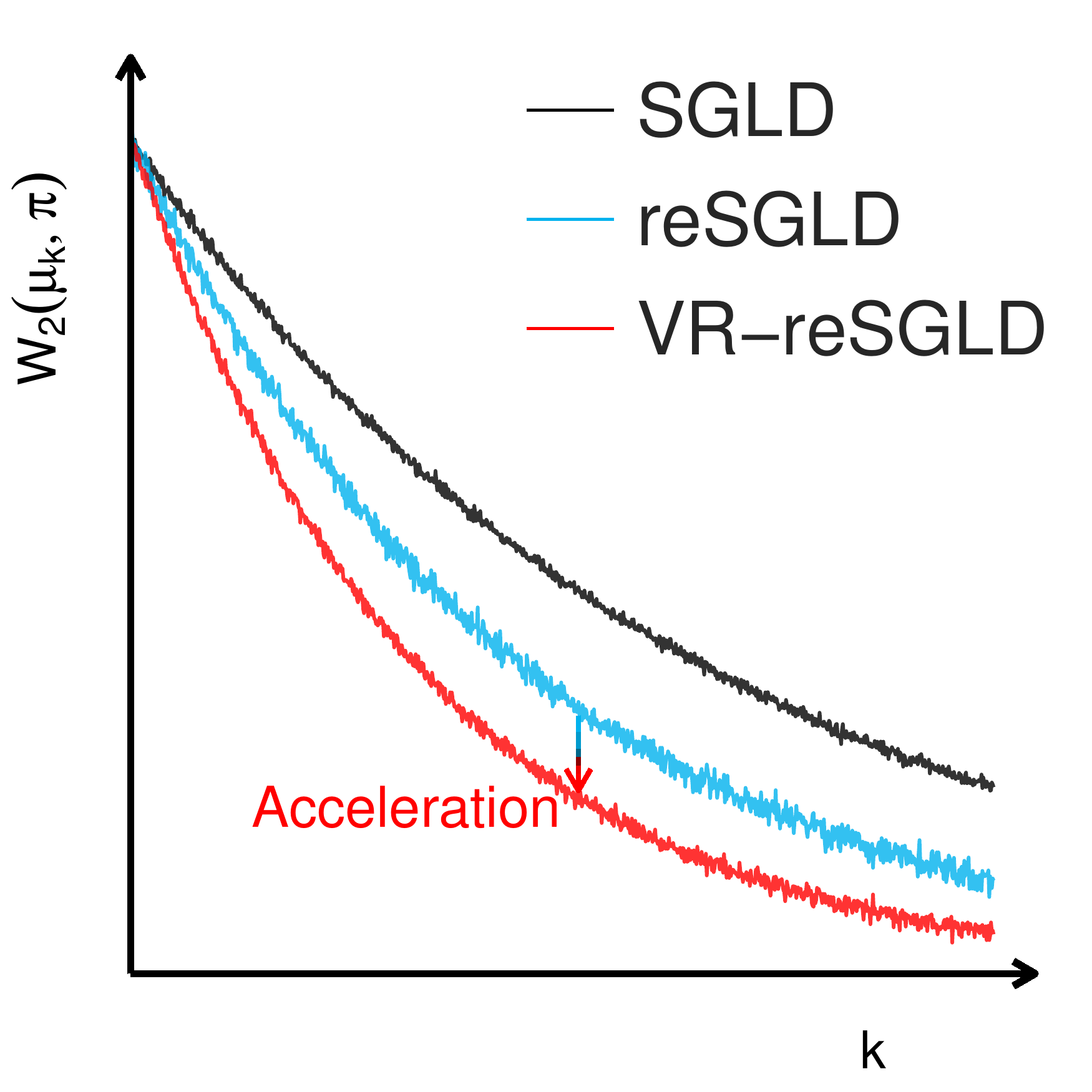}}
  \vspace{-0.5em}
  \caption{An illustration of replica exchange Monte Carlo algorithms for non-convex learning.}
  \label{demo_reld}
  \vspace{-1em}
%   \vskip -0.05in
\end{figure*}

\section{Preliminaries}
\label{reSGLD_section}

A common problem, in Bayesian inference,  is the simulation from a posterior $\mathrm{P}(\bbeta|\bm{X})\propto \mathrm{P}(\bbeta)\prod_{i=1}^N \mathrm{P}(\bx_i|\bbeta)$, where $\mathrm{P}(\bbeta)$ is a proper prior, $\prod_{i=1}^N \mathrm{P}(\bx_i|\bbeta)$ is the likelihood function and $N$ is the number of data points. %Let $\bm{X}=\{\bx_i\}_{i=1}^{N}$ be the entire data points and $\bbeta_k\in \mathbb{R}^d$ be the model parameter at iteration $k$.
When $N$ is large, the standard Langevin dynamics is too costly in evaluating the gradients. To tackle this issue, stochastic gradient Langevin dynamics (SGLD) \citep{Welling11} was proposed to make the algorithm scalable by approximating the gradient through a mini-batch data $B$ of size $n$ such that
\begin{equation}
     \bbeta_{k}=  \bbeta_{k-1}- \eta_k \frac{N}{n}\sum_{i\in B_{k}}\nabla  L( \bx_i|\bbeta_{k-1})+\sqrt{2\eta_k\tau} \bxi_{k},
\end{equation}
where $\bbeta_{k}\in\mathbb{R}^d$, $\tau$ denotes the temperature, $\eta_k$ is the learning rate at iteration $k$, $\bxi_{k}$ is a standard Gaussian vector, and $L(\cdot):=-\log \mathrm{P}(\bbeta|\bm{X})$ is the energy function. SGLD is known to converge weakly to a stationary Gibbs measure $\pi_{\tau}(\bbeta)\propto \exp\left(-L(\bbeta)/\tau\right)$ as $\eta_k$ decays to $0$ \citep{Teh16}. 

% Plots (a: Gibbs; b. paths; c. metastability; d; convergence in W2)

The temperature $\tau$ is the key to accelerating the computations in multi-modal distributions. On the one hand, a high temperature flattens the Gibbs distribution $\exp\left(-{L(\bbeta)}/{\tau}\right)$ (see the red curve in Fig.\ref{demo_reld}(a)) and accelerates mixing by facilitating exploration of the whole domain, but the resulting distribution becomes much less concentrated around the global optima. On the other hand, a low temperature exploits the local region rapidly; however, it may cause the particles to stick in a local region for an exponentially long time, as shown in the blue curve in Fig.\ref{demo_reld}(a,b). %Therefore, SGLD may be quite inefficient in sampling and optimization when the target distribution is highly multi-modal. 
To  bridge the gap between global exploration and local exploitation, \citet{deng2020} proposed the replica exchange SGLD algorithm (reSGLD), which consists of a low-temperature SGLD to encourage exploitation and a high-temperature SGLD to support exploration 
\begin{equation*}
\begin{split}
    \bbeta^{(1)}_{k} &=  \bbeta^{(1)}_{k-1}- \eta_k \frac{N}{n}\sum_{i\in B_{k}}\nabla  L( \bx_i|\bbeta^{(1)}_{k-1})+\sqrt{2\eta_k\tau^{(1)}} \bxi_{k}^{(1)} \\
    \bbeta^{(2)}_{k} &=  \bbeta^{(2)}_{k-1}- \eta_k \frac{N}{n}\sum_{i\in B_{k}}\nabla  L( \bx_i|\bbeta^{(2)}_{k-1})+\sqrt{2\eta_k\tau^{(2)}} \bxi_{k}^{(2)},
\end{split}
\end{equation*}
where the invariant measure is known to be $\pi(\bbeta^{(1)}, \bbeta^{(2)})\propto\exp\left(-\frac{L(\bbeta^{(1)})}{\tau^{(1)}}-\frac{L(\bbeta^{(2)})}{\tau^{(2)}}\right)$ as $\eta_k\rightarrow 0$ and $\tau^{(1)}< \tau^{(2)}$. Moreover, the two processes may swap the positions to allow tunneling between different modes. To avoid inducing a large bias in mini-batch settings, a corrected swapping rate $\widehat S$ is developed such that
\begin{equation*}
\label{vanilla_S}
\small
\widehat S=\exp\Big\{ \left(\frac{1}{\tau^{(1)}}-\frac{1}{\tau^{(2)}}\right)\Big(  \frac{N}{n}\sum_{i\in B_k} L(\bx_i|\bbeta^{(1)}_k)-  \frac{N}{n}\sum_{i\in B_k} L(\bx_i|\bbeta^{(2)}_k)-\frac{\left(\frac{1}{\tau^{(1)}}-\frac{1}{\tau^{(2)}}\right)\widehat \sigma^2}{F}\Big)\Big\},
\end{equation*}
where $\widehat \sigma^2$ is an estimator of the variance of $\frac{N}{n}\sum_{i\in B_k} L(\bx_i|\bbeta^{(1)}_k)-  \frac{N}{n}\sum_{i\in B_k} L(\bx_i|\bbeta^{(2)}_k)$ and $F$ is the correction factor to balance between acceleration and bias. In other words, the parameters switch the positions from $(\bbeta_k^{(1)}, \bbeta_k^{(2)})$ to $(\bbeta_k^{(2)}, \bbeta_k^{(1)})$ with a probability $r(1\wedge \widehat S)\eta_k$, where the constant $r$ is the swapping intensity and can set to $\frac{1}{\eta_k}$ for simplicity.

From a probabilistic point of view, reSGLD is a discretization scheme of replica exchange Langevin diffusion (reLD) in mini-batch settings. %\textcolor{red}{Given a test function $f(\cdot, \cdot)$} and the swapping rate $S(\cdot, \cdot)$,
Given a smooth test function $f$ and a swapping-rate function $S$, the infinitesimal generator $\cL_{S}$ associated with the continuous-time reLD follows
\begin{equation*}
\small
\begin{split}
    &\cL_{S}f(\bbeta^{(1)}, \bbeta^{(2)})=-\langle\nabla_{\bbeta^{(1)}}f(\bbeta^{(1)},\bbeta^{(2)}),\nabla L(\bbeta^{(1)})\rangle-\langle \nabla_{\bbeta^{(2)}}f(\bbeta^{(1)},\bbeta^{(2)}),\nabla L(\bbeta^{(2)})\rangle\\
    &\ \ \ +\tau^{(1)}\Delta_{\bbeta^{(1)}}f(\bbeta^{(1)},\bbeta^{(2)})+\tau^{(2)}\Delta_{\bbeta^{(2)}}f(\bbeta^{(1)},\bbeta^{(2)})+ rS(\bbeta^{(1)},\bbeta^{(2)})\cd (f(\bbeta^{(2)},\bbeta^{(1)})-f(\bbeta^{(1)},\bbeta^{(2)})),
\end{split}
\end{equation*}
where the last term arises from swaps and $\Delta_{\bbeta^{(\cdot)}}$ is the the Laplace operator with respect to $\bbeta^{(\cdot)}$. Note that the infinitesimal generator is closely related to Dirichlet forms in characterizing the evolution of a stochastic process. By standard calculations in Markov semigroups \citep{chen2018accelerating}, the Dirichlet form $\cE_{S}$ associated with the infinitesimal generator $\cL_{S}$ follows
\begin{equation}
\label{dirichlet_forms_main}
\small
\begin{split}
    \cE_{S}(f)=&\underbrace{\int \Big(\tau^{(1)}\|\nabla_{\bbeta^{(1)}}f(\bbeta^{(1)}, \bbeta^{(2)})\|^2+\tau^{(2)}\|\nabla_{\bbeta^{(2)}}f(\bbeta^{(1)}, \bbeta^{(2)})\|^2 \Big)d\pi(\bbeta^{(1)},\bbeta^{(2)})}_{\text{vanilla term } \cE(f)}\\
    &\ +\underbrace{\frac{r}{2}\int S(\bbeta^{(1)},\bbeta^{(2)})\cd (f(\bbeta^{(2)},\bbeta^{(1)})-f(\bbeta^{(1)},\bbeta^{(2)}))^2d\pi(\bbeta^{(1)},\bbeta^{(2)})}_{\text{acceleration term}},
\end{split}
\end{equation}
which leads to a strictly positive acceleration under mild conditions and is crucial for the exponentially accelerated convergence in the $\mathcal{W}_2$ distance (see Fig.\ref{demo_reld}(c)). However, the acceleration depends on the swapping-rate function $S$ and becomes much smaller given a noisy estimate of $\frac{N}{n}\sum_{i\in B} L(\bx_i|\bbeta)$ due to the demand of large corrections to reduce the bias.
 
\section{Variance Reduction in Replica Exchange Stochastic Gradient Langevin Dynamics}
\label{vrresgld}

The desire to obtain more effective swaps and larger accelerations drives us to design more efficient energy estimators. A na\"{i}ve idea would be to apply a large batch size $n$, which reduces the variance of the noisy energy estimator proportionally. However, this comes with a significantly increased memory overhead and computations and therefore is inappropriate for big data problems.

A natural idea to propose more effective swaps is to reduce the variance of the noisy energy estimator $L(B|\bbeta^{(h)})=\frac{N}{n}\sum_{i\in B}L(\bx_i|\bbeta^{(h)})$ for $h\in\{1,2\}$. Considering an unbiased estimator $L(B|\widehat\bbeta^{(h)})$ for $\sum_{i=1}^N L(\bx_i|\widehat\bbeta^{(h)})$ and a constant $c$, we see that a new estimator $\widetilde L(B| \bbeta^{(h)})$, which follows
\begin{equation}
    \widetilde L(B|\bbeta^{(h)})= L(B|\bbeta^{(h)}) +c\left( L(B|\widehat\bbeta^{(h)}) -\sum_{i=1}^N L (\bx_i| \widehat \bbeta^{(h)})\right),
\end{equation}
is still the unbiased estimator for $\sum_{i=1}^N L(\bx_i| \bbeta^{(h)})$. By decomposing the variance, we have
\begin{equation*}
\small
\begin{split}
    % \footnotesize
    &\Var(\widetilde L(B|\bbeta^{(h)}))=\Var\left( L(B|\bbeta^{(h)})\right)+c^2 \Var\left( L(B|\widehat\bbeta^{(h)})\right)+2c\text{Cov}\left( L(B|\bbeta^{(h)}),  L(B|\widehat\bbeta^{(h)})\right).
\end{split}
\end{equation*}
In such a case, $\Var(\widetilde L(B|\bbeta^{(h)}))$ achieves the minimum variance $(1-\rho^2)\Var(L(B|\bbeta^{(h)}))$ given $c^{\star}:=-\frac{\text{Cov}( L(B|\bbeta^{(h)}),  L(B|\widehat\bbeta^{(h)}))}{ \Var(L(B|\widehat \bbeta^{(h)}))}$, where $\text{Cov}(\cdot, \cdot)$ denotes the covariance and $\rho$ is the correlation coefficient of $ L(B|\bbeta^{(h)})$ and $ L(B|\widehat\bbeta^{(h)})$. To propose a correlated control variate, we follow \citet{SVRG} and update $\widehat \bbeta^{(h)}=\bbeta^{(h)}_{m\lfloor \frac{k}{m}\rfloor}$ every $m$ iterations. %Apparently, a smaller $m$ leads to a larger $|\bbeta|$. 
Moreover, the optimal $c^{\star}$ is often unknown in practice. To handle this issue, a well-known solution \citep{SVRG} is to fix $c=-1$ given a high correlation $|\rho|$ of the estimators and then we can present the VR-reSGLD algorithm in Algorithm \ref{alg}. Since the exact variance for correcting the stochastic swapping rate is unknown and even time-varying, we follow \citet{deng2020} and propose to use stochastic approximation \citep{Robbins51} to adaptively update the unknown variance.

%%%%% Conside version

\begin{algorithm}[tb]
  \caption{Variance-reduced replica exchange stochastic gradient Langevin dynamics (VR-reSGLD). The learning rate and temperature can be set to dynamic to speed up the computations. A larger smoothing factor $\gamma$ captures the trend better but becomes less robust. $\mathbb{T}$ is the thinning factor to avoid a cumbersome system.}
  \label{alg}
\begin{algorithmic}
% \footnotesize
% \STATE{\textbf{Input } Control variate $ \bbeta^{(h)}$, learning rate $\eta^{(h)}$ and temperature $\tau^{(h)}$ for $h\in\{1,2\}$, correction factor $F$.}
\STATE{\textbf{Input } The initial parameters $\bbeta_0^{(1)}$ and $\bbeta_0^{(2)}$, learning rate $\eta$, temperatures $\tau^{(1)}$ and $\tau^{(2)}$, correction factor $F$ and smoothing factor $\gamma$. }
\REPEAT
  \STATE{\textbf{Parallel sampling} \text{Randomly pick a mini-batch set $B_{k}$ of size $n$.}}
  \begin{equation}
      \bbeta^{(h)}_{k} =  \bbeta^{(h)}_{k-1}- \eta \frac{N}{n}\sum_{i\in B_{k}}\nabla  L( \bx_i|\bbeta^{(h)}_{k-1})+\sqrt{2\eta\tau^{(h)}} \bxi_{k}^{(h)}, \text{ for } h\in\{1,2\}.
  \end{equation}
%   \vskip -0.5in
  \STATE{\textbf{Variance-reduced energy estimators} \small{Update $\tiny{\widehat L^{(h)}=\sum_{i=1}^N L\left(\bx_i\Big| \bbeta^{(h)}_{m\lfloor \frac{k}{m}\rfloor}\right)}$ every $m$ iterations.}}
  \begin{equation}
  \label{__vr_loss}
      \widetilde L(B_{k}|\bbeta_{k}^{(h)})=\frac{N}{n}\sum_{i\in B_{k}}\left[ L(\bx_i| \bbeta_{k}^{(h)}) - L\left(\bx_i\Big| \bbeta^{(h)}_{m\lfloor \frac{k}{m}\rfloor}\right) \right]+\widehat L^{(h)} , \text{ for } h\in\{1,2\}. 
  \end{equation}
  \IF{$k\ \text{mod}\ m=0$} 
%   \STATE{Compute sample variance $\tilde \sigma_k^2$ based on $\widetilde L(\widetilde B_{i}|\bbeta_{k}^{(1)})$ and $\widetilde L(\widetilde B_{i}|\bbeta_{k}^{(2)})$, where $i\in\{1,\cdots, \tilde m\}$.}
%   \STATE{Obtain an estimator $\tilde \sigma_k^2$ for $\Var(\widetilde L( B_k|\bbeta_{k}^{(1)})-\widetilde L( B_{k}|\bbeta_{k}^{(2)}))$.}
  \STATE{\footnotesize{Update $\widetilde \sigma^2_{k} = (1-\gamma)\widetilde \sigma^2_{k-m}+\gamma \sigma^2_{k}$, where $\sigma_k^2$ is an estimate for $\Var\left(\widetilde L( B_k|\bbeta_{k}^{(1)})-\widetilde L( B_{k}|\bbeta_{k}^{(2)})\right)$.}}
  \ENDIF
   
%   \STATE{Obtain an unbiased estimate $\tilde \sigma_{m+1}^2$ for $\sigma^2$.}
   
  \STATE{\textbf{Bias-reduced swaps} \small{Swap $ \bbeta_{k+1}^{(1)}$ and $ \bbeta_{k+1}^{(2)}$ if $u<\widetilde S_{\eta,m,n}$, where $u\sim \text{Unif }[0,1]$, and $\widetilde S_{\eta,m,n}$ follows}}
  \begin{equation}
  \label{stochastic_jump_rate}
  \small
      \textstyle \widetilde S_{\eta,m,n}=\exp\left\{ \left(\frac{1}{\tau^{(1)}}-\frac{1}{\tau^{(2)}}\right)\left(  \widetilde L( B_{k+1}|\bbeta_{k+1}^{(1)})-  \widetilde L( B_{k+1}|\bbeta_{k+1}^{(2)})-\frac{1}{F}\left(\frac{1}{\tau^{(1)}}-\frac{1}{\tau^{(2)}}\right)\widetilde \sigma^2_{m\lfloor \frac{k}{m}\rfloor}\right)\right\}.
  \end{equation}
%   \IF{$u<\widehat S$}
%   \STATE Swap $ \bbeta_{k+1}^{(1)}$ and $ \bbeta_{k+1}^{(2)}$.
%   \ENDIF
  \UNTIL{$k=k_{\max}$.}
\STATE{\textbf{Output:}  The low-temperature process $\{\bbeta_{i\mathbb{T}}^{(1)}\}_{i=1}^{\lfloor k_{\max}/\mathbb{T}\rfloor}$, where $\mathbb{T}$ is the thinning factor.}
% \vskip -.1 inch
% \vspace{-0.3em}
\end{algorithmic}
% \vspace{-0.3em}
\end{algorithm}

\paragraph{Variants of VR-reSGLD} The number of iterations $m$ to update the control variate $\widehat \bbeta^{(h)}$ gives rise to a trade-off in computations and variance reduction. A small $m$ introduces a highly correlated control variate at the cost of expensive computations; a large $m$, however, may yield a less correlated control variate and setting $c=-1$ fails to reduce the variance. In spirit of the adaptive variance in \citet{deng2020} to estimate the unknown variance, we explore the idea of the adaptive coefficient $\widetilde {c}_k=(1-\gamma_k)\widetilde {c}_{k-m}+\gamma_k c_k$ such that the unknown optimal $c^{\star}$ is well approximated. We present the adaptive VR-reSGLD in Algorithm \ref{adaptive_alg} in Appendix \ref{adaptive_c} and show empirically later that the adaptive VR-reSGLD leads to a significant improvement over VR-reSGLD for the less correlated estimators.

A parallel line of research is to exploit the SAGA algorithm \citep{SAGA} in the study of variance reduction. Despite the most effective performance in variance reduction \citep{Niladri18}, the SAGA type of sampling algorithms require an excessively memory storage of $\mathcal{O}(Nd)$, which is too costly for big data problems. Therefore, we leave the study of the lightweight SAGA algorithm inspired by \citet{pracSVRG, Dongruo} for future works.

\paragraph{Related work} Although our VR-reSGLD is, in spirit, similar to VR-SGLD \citep{Dubey16, Xu18}, it differs from VR-SGLD in two aspects: First, VR-SGLD conducts variance reduction on the gradient and only shows promises in the nearly log-concave distributions or when the Markov process is sufficiently converged; however, our VR-reSGLD solely focuses on the variance reduction of the energy estimator to propose more effective swaps, and therefore we can import the empirical experience in hyper-parameter tuning from M-SGD to our proposed algorithm. Second, VR-SGLD doesn't accelerate the continuous-time Markov process but only focuses on reducing the discretization error; VR-reSGLD possesses a larger acceleration term in the Dirichlet form (\ref{dirichlet_forms_main}) and shows a potential in exponentially speeding up the convergence of the continuous-time process in the early stage, in addition to the improvement on the discretization error. In other words, our algorithm is not only theoretically sound but also more empirically appealing for a wide variety of problems in non-convex learning.

\section{Theoretical properties}

The large variance of noisy energy estimators directly limits the potential of the acceleration and significantly slows down the convergence compared to the replica exchange Langevin dynamics. As a result, VR-reSGLD may lead to a more efficient energy estimator with a much smaller variance.

\begin{lemma}[Variance-reduced energy estimator]
\label{vr-estimator_main}
Under the smoothness
and dissipativity assumptions \ref{assump: lip and alpha beta} and \ref{assump: dissipitive} in Appendix \ref{prelim}, the variance of the variance-reduced energy estimator $\widetilde L(B|\bbeta^{(h)})$, where $h\in\{1,2\}$, is upper bounded by
\begin{equation*}
    \Var\left(\widetilde L(B|\bbeta^{(h)})\right)\leq \min\Big\{ \mathcal{O}\left(\frac{m^2 \eta}{n}\right), \Var\Big(\frac{N}{n}\sum_{i\in B} L(\bx_i| \bbeta^{(h)})\Big)+\Var\Big(\frac{N}{n}\sum_{i\in B} L(\bx_i| \widehat\bbeta^{(h)})\Big)\Big\},
\end{equation*}
where the detailed $\mathcal{O}(\cdot)$ constants is shown in Lemma \ref{vr-estimator} in the appendix.
\end{lemma}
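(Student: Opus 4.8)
The plan is to establish the two bounds in the minimum separately, then combine them. The second bound is the easy one: since $\widetilde L(B|\bbeta^{(h)}) = L(B|\bbeta^{(h)}) - L(B|\widehat\bbeta^{(h)}) + \widehat L^{(h)}$ with $\widehat L^{(h)} = \sum_{i=1}^N L(\bx_i|\widehat\bbeta^{(h)})$ a deterministic constant (given the control-variate anchor), we have $\Var(\widetilde L(B|\bbeta^{(h)})) = \Var(L(B|\bbeta^{(h)}) - L(B|\widehat\bbeta^{(h)}))$. Applying the elementary inequality $\Var(X-Y)\le 2\Var(X)+2\Var(Y)$ — or the slightly weaker $\Var(X-Y)\le (\sqrt{\Var X}+\sqrt{\Var Y})^2$, whichever matches the constant the authors want — immediately yields the second term inside the $\min$. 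This part needs no assumptions beyond finiteness of the relevant second moments.

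The first bound, $\mathcal{O}(m^2\eta/n)$, is the substantive one. Here I would write $L(B|\bbeta^{(h)}) - L(B|\widehat\bbeta^{(h)}) = \frac{N}{n}\sum_{i\in B}\big(L(\bx_i|\bbeta^{(h)}) - L(\bx_i|\widehat\bbeta^{(h)})\big)$, a sum over the mini-batch of i.i.d.\ (under uniform sampling) zero-meanable terms, so that $\Var\big(\tfrac{N}{n}\sum_{i\in B}Z_i\big) \le \frac{N^2}{n}\,\mathbb{E}\big[\|L(\bx_i|\bbeta^{(h)}) - L(\bx_i|\widehat\bbeta^{(h)})\|^2\big]$ up to the usual sampling-without-replacement bookkeeping. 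By the smoothness assumption (Assumption~\ref{assump: lip and alpha beta}), $|L(\bx_i|\bbeta^{(h)}) - L(\bx_i|\widehat\bbeta^{(h)})| \le C\|\bbeta^{(h)} - \widehat\bbeta^{(h)}\|$ for some Lipschitz-type constant, so the problem reduces to bounding $\mathbb{E}\|\bbeta^{(h)}_k - \bbeta^{(h)}_{m\lfloor k/m\rfloor}\|^2$. Since the anchor is the iterate at most $m$ steps in the past, I would unroll the SGLD recursion over at most $m$ steps: the drift contributes a term of order $(m\eta)^2$ times a bound on $\mathbb{E}\|\tfrac{N}{n}\sum\nabla L\|^2$, and the accumulated Gaussian noise contributes a term of order $m\eta\tau^{(h)}$ (the noise variances add). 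Using dissipativity (Assumption~\ref{assump: dissipitive}) to get a uniform-in-$k$ bound $\sup_k\mathbb{E}\|\bbeta^{(h)}_k\|^2 \le \mathcal{O}(1)$ — and hence a uniform bound on $\mathbb{E}\|\nabla L(\bbeta^{(h)}_k)\|^2$ via the linear growth implied by smoothness — the drift term is $\mathcal{O}(m^2\eta^2)$ and the noise term is $\mathcal{O}(m\eta)$, so $\mathbb{E}\|\bbeta^{(h)}_k - \bbeta^{(h)}_{m\lfloor k/m\rfloor}\|^2 = \mathcal{O}(m\eta)$ for $\eta$ small (the $m^2\eta^2$ term is absorbed). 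Feeding this back gives $\Var(\widetilde L(B|\bbeta^{(h)})) \le \frac{N^2}{n}\cdot C^2\cdot \mathcal{O}(m\eta) = \mathcal{O}(m^2\eta/n)$ once the $N^2$ and $m$ are tracked through the constants (one factor of $m$ from the step count, a residual $m$ hidden in how the $N^2$ scaling is normalized in the $\mathcal{O}$; I would follow the appendix's convention).

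The main obstacle is obtaining the \emph{uniform-in-$k$} moment bound $\sup_k \mathbb{E}\|\bbeta^{(h)}_k\|^2 < \infty$ with constants independent of the horizon: this is where dissipativity is essential and where one must be careful that the stochastic-gradient noise (not just the injected Gaussian noise) does not cause the second moment to blow up. The standard argument is a one-step contraction $\mathbb{E}\|\bbeta_k\|^2 \le (1-\rho\eta)\mathbb{E}\|\bbeta_{k-1}\|^2 + \mathcal{O}(\eta)$ for small $\eta$, using $\langle \bbeta, \nabla L(\bbeta)\rangle \ge a\|\bbeta\|^2 - b$ from dissipativity together with the linear-growth bound on $\nabla L$ and the bounded variance of the mini-batch gradient; iterating gives the uniform bound. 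A secondary technical point is handling the mini-batch covariance structure (sampling without replacement) cleanly, but that only affects constants. Everything else is routine unrolling and Cauchy–Schwarz, which I would relegate to the appendix as the referenced Lemma~\ref{vr-estimator}.
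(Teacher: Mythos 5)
Your proposal follows essentially the same route as the paper's own proof (Lemma \ref{vr-estimator} in Appendix B): reduce the variance to a per-sample second moment using the independence of the mini-batch draws, convert the energy difference into a parameter displacement via a local Lipschitz bound obtained from smoothness together with the gradient-growth and uniform-$L^2$ estimates that dissipativity supplies, and then bound $\E\big[\|\bbeta^{(h)}_k-\bbeta^{(h)}_{m\lfloor k/m\rfloor}\|^2\big]$ by unrolling at most $m$ SGLD steps. The only minor deviations are cosmetic: your independent-increment treatment of the injected Gaussian noise actually gives the sharper $\mathcal{O}(m\eta/n)$ displacement (the paper's cruder Cauchy--Schwarz over the $m$ increments is what produces the stated $m^2\eta/n$, so no extra factor of $m$ needs to be ``recovered'' through the $N^2$ normalization), and your factor-of-two bound for the second branch of the minimum is slightly looser than the stated $\Var\big(\frac{N}{n}\sum_{i\in B}L(\bx_i|\bbeta^{(h)})\big)+\Var\big(\frac{N}{n}\sum_{i\in B}L(\bx_i|\widehat\bbeta^{(h)})\big)$, which the paper asserts by exploiting the nonnegative covariance of the two correlated estimators.
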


The analysis shows the variance-reduced estimator $\widetilde L(B|\bbeta^{(h)})$ yields a much-reduced variance given a smaller learning rate $\eta$ and a smaller $m$ for updating control variates based on the batch size $n$. Although the truncated swapping rate $S_{\eta, m, n}=\min\{1, \widetilde S_{\eta, m, n}\}$ still satisfies the ``stochastic'' detailed balance given an unbiased swapping-rate estimator $\widetilde S_{\eta, m, n}$ \citep{deng2020} \footnote[2]{\citet{Pseudo-Marginal-method, Matias19} achieve a similar result based on the unbiased likelihood estimator for the Metropolis-hasting algorithm. See section 3.1 \citep{Matias19} for details.}, it doesn't mean the efficiency of the swaps is not affected. By contrast, we can show that the number of swaps may become \emph{exponentially smaller on average}.

\begin{lemma}[Variance reduction for larger swapping rates] \label{exp_S_main_body} Given a large enough batch size $n$, the variance-reduced energy estimator $\widetilde L(B_{k}|\bbeta_{k}^{(h)})$ yields a truncated swapping rate that satisfies
\begin{equation}
\label{larger_SSS}
     \E[S_{\eta, m, n}]\approx \min\Big\{1, S(\bbeta^{(1)}, \bbeta^{(2)})\Big(\mathcal{O}\Big(\frac{1}{n^2}\Big)+e^{-\mathcal{O}\left(\frac{m^2\eta}{n}+\frac{1}{n^2}\right)}\Big)\Big\},
\end{equation}
\end{lemma}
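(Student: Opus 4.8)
The plan is to start from the definition of the truncated swapping rate $S_{\eta,m,n}=\min\{1,\widetilde S_{\eta,m,n}\}$ and compare it, in expectation, to the idealized (noise-free) rate $S(\bbeta^{(1)},\bbeta^{(2)})$. Write $\Delta := \left(\frac{1}{\tau^{(1)}}-\frac{1}{\tau^{(2)}}\right)$ and decompose $\widetilde L(B|\bbeta^{(1)})-\widetilde L(B|\bbeta^{(2)})$ as its mean (which, by unbiasedness from the construction preceding Algorithm \ref{alg}, equals $\sum_i L(\bx_i|\bbeta^{(1)})-\sum_i L(\bx_i|\bbeta^{(2)})$) plus a zero-mean fluctuation $W$ whose variance is exactly $\widetilde\sigma^2$, the quantity controlled by Lemma \ref{vr-estimator_main} as $\mathcal{O}(m^2\eta/n)$. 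Then $\widetilde S_{\eta,m,n}=S(\bbeta^{(1)},\bbeta^{(2)})\cdot\exp\{\Delta W - \tfrac{1}{F}\Delta^2\widetilde\sigma^2\}$ up to the error between $\widetilde\sigma^2$ and its stochastic-approximation surrogate $\widetilde\sigma^2_{m\lfloor k/m\rfloor}$ (which I would absorb into the $\mathcal{O}(1/n^2)$ term, arguing the adaptive estimate concentrates at rate $1/n^2$ in the relevant regime, citing the stochastic approximation setup borrowed from \citet{deng2020}).

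Next I would take expectations. The subtlety is the $\min\{1,\cdot\}$: I would split on the event $\{\widetilde S_{\eta,m,n}\le 1\}$ versus its complement. On the first event, $\mathbb{E}[S_{\eta,m,n}\mathbf{1}]=\mathbb{E}[\widetilde S_{\eta,m,n}\mathbf{1}]$; on the complement it contributes at most the probability of that event. Treating the leading case where $S(\bbeta^{(1)},\bbeta^{(2)})$ itself is small enough that truncation at $1$ is typically inactive, the main term is $S(\bbeta^{(1)},\bbeta^{(2)})\,\mathbb{E}\big[\exp\{\Delta W\}\big]\,e^{-\frac{1}{F}\Delta^2\widetilde\sigma^2}$. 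For a (near-)Gaussian fluctuation $W$ with variance $\widetilde\sigma^2$ — justified because $W$ is a normalized sum over the mini-batch and a CLT/Berry–Esseen argument applies for large $n$, with the non-Gaussian remainder again of order $1/n^2$ after the sub-Gaussian tail bound from the smoothness and dissipativity Assumptions \ref{assump: lip and alpha beta}, \ref{assump: dissipitive} — the MGF gives $\mathbb{E}[\exp\{\Delta W\}]\approx e^{\frac12\Delta^2\widetilde\sigma^2}$. Choosing the correction factor so that $\frac1F\ge\frac12$ makes the net exponent $-\mathcal{O}(\Delta^2\widetilde\sigma^2)=-\mathcal{O}(m^2\eta/n)$; adding back the CLT-remainder and the stochastic-approximation error gives the exponent $-\mathcal{O}(m^2\eta/n+1/n^2)$ and the additive $\mathcal{O}(1/n^2)$ prefactor correction, which is precisely \eqref{larger_SSS}.

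The main obstacle, I expect, is handling the truncation $\min\{1,\cdot\}$ cleanly without the Gaussian/MGF approximation breaking down: $\exp\{\Delta W\}$ is heavy-tailed enough that its expectation is dominated by rare large deviations of $W$, exactly where a naive CLT is weakest, so the interchange of "take expectation of the exponential" with "approximate $W$ by a Gaussian" needs a genuine tail bound (sub-exponential concentration of $\widetilde L$, obtained from the energy's smoothness/dissipativity and the control-variate structure) rather than a distributional limit. A secondary technical point is that $\widetilde\sigma^2$ in the exponent and $\widetilde\sigma^2$ governing $W$ must be the same object up to controlled error, and that the running estimate $\widetilde\sigma^2_{m\lfloor k/m\rfloor}$ tracks it — both of which I would dispatch by the concentration/stochastic-approximation arguments above, deferring the fully quantitative constants to the appendix version of the lemma.
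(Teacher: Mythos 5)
There is a genuine gap, and it sits exactly where you place your "main term." In the paper's setting (the appendix analyzes $F=1$, i.e.\ the correction in the exponent is $\left(\frac{1}{\tau^{(1)}}-\frac{1}{\tau^{(2)}}\right)^2\frac{\widetilde\sigma^2}{2}$, which is your $\frac1F=\frac12$), the bias correction is chosen precisely so that $\E[\widetilde S_{\eta,m,n}]=S(\bbeta^{(1)},\bbeta^{(2)})$ \emph{exactly}, for any value of $\widetilde\sigma^2$. So if you assume the truncation is "typically inactive" and cancel the MGF factor $e^{\frac12\Delta^2\widetilde\sigma^2}$ against the correction, your net exponent is zero, not $-\mathcal{O}(\Delta^2\widetilde\sigma^2)$: your leading-order answer is $\E[S_{\eta,m,n}]\approx S$, with no dependence on the variance at all, which contradicts the lemma (and would make variance reduction irrelevant to the swap rate). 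Taking $\frac1F$ strictly larger than $\frac12$ is not a fix either, since the statement is about the bias-corrected rate the algorithm actually uses. The exponential factor $e^{-\mathcal{O}(m^2\eta/n)}$ does not come from any mismatch between the correction and the MGF; it comes entirely from the truncation $\min\{1,\cdot\}$ acting on a heavy-tailed (log-normal-type) variable. Because the typical value of $\widetilde S$ is $S\,e^{-\Delta^2\widetilde\sigma^2/2}$ while its mean $S$ is carried by exponentially rare excursions above $1$, capping at $1$ removes exactly the events that sustain the unbiased mean, and a direct Gaussian-integral computation (the paper's Lemma on exponential dependence on the variance) gives $\E[\min\{1,\mathbb{S}_{\eta,m,n}\}]=\mathcal{O}\big(S\,e^{-\Delta^2\bar\sigma^2/8}\big)$. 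You flagged the heavy-tail issue as a technical obstacle to the CLT interchange, but it is not a technicality: it changes the answer qualitatively, and the regime you propose to treat as leading ("truncation typically inactive") is precisely the regime where the claimed decay is absent.

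The paper's route, which you would need to adopt or replicate, is: (i) replace $\widetilde L$ by its normal limit and note the resulting $\mathbb{S}_{\eta,m,n}$ is log-normal with log-mean $\log S-\Delta^2\bar\sigma^2/2$ and log-variance $\Delta^2\bar\sigma^2$; (ii) compute $\E[\min\{1,\mathbb{S}_{\eta,m,n}\}]$ exactly by splitting the Gaussian integral at the truncation point, yielding the $e^{-\Delta^2\bar\sigma^2/8}$ factor, then insert the variance bound of Lemma \ref{vr-estimator_main} and the $\mathcal{O}(1/n^2)$ discrepancy between $\bar\sigma^2$ and $\widetilde\sigma^2$; (iii) control the non-Gaussian error via $\min\{1,\mathbb{A}+\mathbb{B}\}\le\min\{1,\mathbb{A}\}+|\mathbb{B}|$ together with the $\mathcal{O}(1/n^2)$ bias estimates for $|\E[\widetilde S]-S|$ and $|\E[\mathbb{S}]-S|$, which is where the additive $S\cdot\mathcal{O}(1/n^2)$ term in \eqref{larger_SSS} originates (not from the stochastic-approximation error of the running variance estimate, which the appendix sidesteps by working with the exact variance). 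Your decomposition into mean plus fluctuation and your concentration considerations are fine as far as they go, but without step (ii) the proof cannot produce the exponential factor that is the content of the lemma.
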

where $S(\bbeta^{(1)}, \bbeta^{(2)})$ is the deterministic swapping rate defined in Appendix \ref{exp_acc}. The proof is shown in Lemma.\ref{exp_S} in Appendix \ref{exp_acc}. Note that the above lemma doesn't require the normality assumption. As $n$ goes to infinity, where the asymptotic normality holds, the RHS of (\ref{larger_SSS}) changes to $\min\Big\{1, S(\bbeta^{(1)}, \bbeta^{(2)})e^{-\mathcal{O}\left(\frac{m^2\eta}{n}\right)}\Big\}$, which becomes exponentially larger as we use a smaller update frequency $m$ and learning rate $\eta$. Since the continuous-time reLD induces a jump operator in the infinitesimal generator, the resulting Dirichlet form potentially leads to a 
much larger acceleration term which linearly depends on the swapping rate $S_{\eta, m, n}$ and yields a faster exponential convergence. Now we are ready to present the first main result.

\begin{theorem}[Exponential convergence]\label{exponential decay_main_body}
Under the smoothness
and dissipativity assumptions \ref{assump: lip and alpha beta} and \ref{assump: dissipitive}, the probability measure associated with reLD at time $t$, denoted as $\nu_t$, converges exponentially fast to the invariant measure $\pi$:
\begin{equation}
    \mathcal{W}_2(\nu_t,\pi) \leq  D_0 \exp\left\{-t\left(1+\delta_{ S_{\eta, m, n}}\right)/c_{\text{LS}}\right\},
\end{equation}
where $D_0$ is a constant depending on the initialization, $\delta_{ S_{\eta, m, n}}:=\inf_{t>0}\frac{\cE_{ S_{\eta, m, n}}(\sqrt{\frac{d\n_t}{d\pi}})}{\cE(\sqrt{\frac{d\n_t}{d\pi}})}-1\geq 0$ depends on $S_{\eta, m, n}$, $\cE_{ S_{\eta, m, n}}$ and $\cE$ are the Dirichlet forms based on the swapping rate $S_{\eta, m, n}$ and are defined in (\ref{dirichlet_forms_main}), $c_{\text{LS}}$ is the constant of the log-Sobolev inequality for reLD without swaps.
\end{theorem}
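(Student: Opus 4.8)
The plan is to run the classical entropy--dissipation argument, with the twist that the decay of relative entropy along $\nu_t$ is driven by the full Dirichlet form $\cE_{S_{\eta,m,n}}$ of reLD \emph{with} swaps, whereas the log-Sobolev inequality is invoked only for the swap-free form $\cE$; the ratio between the two, taken along the whole trajectory, is exactly the constant $\delta_{S_{\eta,m,n}}$ appearing in the exponent. Write $h_t=d\nu_t/d\pi$. Under Assumptions~\ref{assump: lip and alpha beta}--\ref{assump: dissipitive} the reLD semigroup regularizes instantaneously, so $h_t$ is smooth and strictly positive for $t>0$; and since the swap move is reversible with respect to $\pi$ by construction of the swapping rate, $\cL_{S_{\eta,m,n}}$ is self-adjoint in $L^2(\pi)$, $\cE_{S_{\eta,m,n}}$ is a symmetric Dirichlet form, and $\partial_t h_t=\cL_{S_{\eta,m,n}}h_t$.

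Differentiating $\KL(\nu_t\|\pi)=\int h_t\log h_t\,d\pi$ and using $\int\cL_{S_{\eta,m,n}}h_t\,d\pi=0$ gives the dissipation identity $\frac{d}{dt}\KL(\nu_t\|\pi)=-\cE_{S_{\eta,m,n}}(h_t,\log h_t)$, with $\cE_{S_{\eta,m,n}}(\cdot,\cdot)$ the bilinear form polarizing (\ref{dirichlet_forms_main}). On the diffusion part one has the exact identity $|\nabla h_t|^2/h_t=4|\nabla\sqrt{h_t}|^2$ (coordinatewise, with the temperature weights carried through); on the jump part the elementary inequality $(a-b)(\log a-\log b)\ge 4(\sqrt a-\sqrt b)^2$ for $a,b>0$ gives a pointwise bound, so that altogether $\cE_{S_{\eta,m,n}}(h_t,\log h_t)\ge 4\,\cE_{S_{\eta,m,n}}(\sqrt{h_t})$. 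By the definition of $\delta_{S_{\eta,m,n}}$ as an infimum over $t>0$ we have $\cE_{S_{\eta,m,n}}(\sqrt{h_t})\ge(1+\delta_{S_{\eta,m,n}})\,\cE(\sqrt{h_t})$ (with $\delta_{S_{\eta,m,n}}\ge0$ since the acceleration term in (\ref{dirichlet_forms_main}) is nonnegative), while the log-Sobolev inequality for reLD without swaps gives $\cE(\sqrt{h_t})\ge\KL(\nu_t\|\pi)/(2c_{\mathrm{LS}})$. Chaining these,
\[
\tfrac{d}{dt}\KL(\nu_t\|\pi)\ \le\ -\,\frac{2\,(1+\delta_{S_{\eta,m,n}})}{c_{\mathrm{LS}}}\,\KL(\nu_t\|\pi),
\]
so Gr\"{o}nwall's lemma yields $\KL(\nu_t\|\pi)\le\KL(\nu_0\|\pi)\,\exp\{-2t(1+\delta_{S_{\eta,m,n}})/c_{\mathrm{LS}}\}$.

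Since $\pi$ satisfies that log-Sobolev inequality, the Otto--Villani theorem supplies the companion Talagrand transportation inequality $\mathcal{W}_2(\mu,\pi)^2\le 2c_{\mathrm{LS}}\,\KL(\mu\|\pi)$ (temperature weights in $\cE$ being absorbed into $c_{\mathrm{LS}}$). Applying it to $\mu=\nu_t$ and taking square roots,
\[
\mathcal{W}_2(\nu_t,\pi)\ \le\ \sqrt{2c_{\mathrm{LS}}\,\KL(\nu_0\|\pi)}\;\exp\{-t(1+\delta_{S_{\eta,m,n}})/c_{\mathrm{LS}}\},
\]
which is the claim with $D_0=\sqrt{2c_{\mathrm{LS}}\,\KL(\nu_0\|\pi)}$; the halving of the exponent relative to the entropy bound is precisely the square root. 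Lemma~\ref{exp_S_main_body} then accounts for the acceleration: variance reduction pushes $S_{\eta,m,n}$ exponentially close to the deterministic swapping rate, enlarging the acceleration term in (\ref{dirichlet_forms_main}) and hence $\delta_{S_{\eta,m,n}}$.

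The conceptual core above is short once the ingredients are assembled; the real work lies elsewhere. The chief obstacle is (i) establishing the log-Sobolev inequality for reLD without swaps with an explicit $c_{\mathrm{LS}}$ from only smoothness and dissipativity --- via a Holley--Stroock bounded-perturbation estimate combined with tensorization over the two replicas, the product Gibbs measure being a bounded perturbation of a strictly log-concave measure outside a compact set --- together with tracking how the two temperatures rescale the transportation metric in the Otto--Villani step; and (ii) the rigorous justification of the entropy-dissipation identity for a jump--diffusion: smoothness, positivity and integrability of $h_t$ for $t>0$, the legitimacy of differentiating $\KL(\nu_t\|\pi)$ under the integral, and finiteness of $\KL(\nu_0\|\pi)$ (if the latter fails one restarts from an arbitrarily small positive time, adjusting $D_0$). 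A last minor point: although $\delta_{S_{\eta,m,n}}$ is defined self-referentially through $\nu_t$, this is harmless --- fixing $\nu_0$ fixes the trajectory, so $\cE_{S_{\eta,m,n}}(\sqrt{h_t})\ge(1+\delta_{S_{\eta,m,n}})\cE(\sqrt{h_t})$ holds for all $t$ simultaneously and the differential inequality closes.
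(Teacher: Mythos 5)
Your proposal is correct and follows essentially the same route as the paper: decompose the Dirichlet form into the vanilla term plus the nonnegative acceleration term, absorb the ratio into an effective log-Sobolev constant $c_{\text{LS}}/(1+\delta_{S_{\eta,m,n}})$, deduce exponential decay of relative entropy, and convert to $\mathcal{W}_2$ via Otto--Villani with $D_0=\sqrt{2c_{\text{LS}}D(\nu_0\|\pi)}$. The only difference is one of detail: you unpack the entropy-dissipation step (de Bruijn identity plus the jump inequality $(a-b)(\log a-\log b)\ge 4(\sqrt a-\sqrt b)^2$) and sketch a Holley--Stroock route to the swap-free log-Sobolev inequality, where the paper simply cites \citet{Bakry2014} and Lemmas 4--5 of \citet{deng2020}.
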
{}

We detail the proof in Theorem.\ref{exponential decay} in Appendix \ref{exp_acc}. Note that $S_{\eta,m,n}=0$ leads to the same performance as the standard Langevin diffusion and $\delta_{ S_{\eta, m, n}}$ is strictly positive when $\frac{d\n_t}{d\pi}$ is asymmetric \citep{chen2018accelerating}; given a smaller $\eta$ and $m$ or a large $n$, the variance becomes much reduced according to Lemma \ref{vr-estimator_main}, yielding a much larger truncated swapping rate by Lemma \ref{exp_S_main_body} and a faster exponential convergence to the invariant measure $\pi$ compared to reSGLD.

Next, we estimate the upper bound of the 2-Wasserstein distance $\mathcal{W}(\mu_{k},\nu_{k\eta})$, where $\mu_{k}$ denotes the probability measure associated with VR-reSGLD at iteration $k$. 
We first bypass the Gr\"{o}nwall inequality and conduct the change of measure to upper bound the relative entropy $D_{KL}(\mu_{k}|\nu_{k\eta})$ following \citep{Maxim17}. In addition to the approximation in the standard Langevin diffusion \citet{Maxim17}, we also consider the change of Poisson measure following \citet{yin_zhu_10, Gikhman} to handle the error from the stochastic swapping rate. We then extend the distance of relative entropy $D_{KL}(\mu_{k}|\nu_{k\eta})$ to the Wasserstein distance $\mathcal{W}_2(\mu_{k},\nu_{k\eta})$ via a weighted transportation-cost inequality of \citet{bolley05}.

\begin{theorem}[Diffusion approximation]
\label{numerical_error_W2_main}
Assume the smoothness, the dissipativity and the gradient assumptions \ref{assump: lip and alpha beta}, \ref{assump: dissipitive} and \ref{assump: stochastic_noise} hold. Given a large enough batch size $n$, a small enough $m$ and $\eta$, we have
\begin{equation}
\label{second_last_W2_main}
\begin{split}
    \mathcal{W}_2(\mu_{k},\nu_{k\eta})&\leq  \mathcal{O}\Big(d k^{3/2}\eta \Big(\eta^{1/4}+\delta^{1/4}+\Big(\frac{m^2}{n}\eta\Big)^{1/8}\Big)\Big),\\
\end{split}
\end{equation}
\end{theorem}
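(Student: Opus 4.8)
The plan is to bound $\mathcal{W}_2(\mu_k,\nu_{k\eta})$ by first controlling the relative entropy $D_{KL}(\mu_k\|\nu_{k\eta})$ via a Girsanov-type change of measure, and then converting to $\mathcal{W}_2$ through a weighted transportation-cost inequality. Concretely, I would view both the VR-reSGLD iterates (interpolated into a continuous-time process) and the continuous-time reLD as path measures on the Skorokhod space of trajectories that carry both a diffusive component and a jump (swapping) component. The Radon--Nikodym derivative $\frac{d\mu_k}{d\nu_{k\eta}}$ then factorizes into two pieces: (i) the classical Girsanov density coming from the drift mismatch $\nabla L(\bbeta^{(h)}_{m\lfloor s/\eta\rfloor}) - \nabla L(\bbeta^{(h)}_s)$ between the piecewise-constant stochastic gradient drift and the true drift, handled as in \citet{Maxim17}; and (ii) a change-of-Poisson-measure density (following \citet{yin_zhu_10, Gikhman}) accounting for the discrepancy between the exact swapping intensity $rS$ and the noisy truncated intensity $rS_{\eta,m,n}$. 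Taking $\E[\log\,\cdot\,]$ and using the martingale structure, $D_{KL}(\mu_k\|\nu_{k\eta})$ splits into a diffusion term and a jump term.

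The diffusion term is bounded by $\E\int_0^{k\eta}\|\nabla L(\bbeta_{m\lfloor s/\eta\rfloor}) - \nabla L(\bbeta_s)\|^2\,ds$ up to constants; using the Lipschitz assumption~\ref{assump: lip and alpha beta}, the gradient-noise assumption~\ref{assump: stochastic_noise}, dissipativity~\ref{assump: dissipitive} (for uniform-in-time moment bounds on $\bbeta_s$), and a one-step size estimate $\E\|\bbeta_s-\bbeta_{m\lfloor s/\eta\rfloor}\|^2 = \mathcal{O}(m\eta\, d + \delta)$ — here the $m\eta$ factor is because the control-variate anchor is refreshed only every $m$ steps, not every step — this contributes $\mathcal{O}(d\, k\eta\,(\eta + \delta + m\eta))$ or so to the KL. The jump term requires bounding $\E\int_0^{k\eta} \phi(rS, rS_{\eta,m,n})\,ds$ where $\phi$ is the standard relative-entropy rate for Poisson intensities ($\phi(a,b) = b - a + a\log(a/b)$); since $|S - \E S_{\eta,m,n}|$ and the variance of $S_{\eta,m,n}$ are controlled by Lemmas~\ref{vr-estimator_main} and~\ref{exp_S_main_body} in terms of $\mathcal{O}(m^2\eta/n)$ and $\mathcal{O}(1/n^2)$, and $S_{\eta,m,n}$ is truncated (hence bounded), a second-order Taylor expansion of $\phi$ around $a=b$ gives a contribution of order $k\eta\cdot\mathcal{O}(m^2\eta/n + 1/n^2)$ to the KL. Summing, $D_{KL}(\mu_k\|\nu_{k\eta}) = \mathcal{O}\big(d\,k\eta\,(\eta + \delta + \frac{m^2}{n}\eta)\big)$ modulo the $1/n^2$ terms that are absorbed when $n$ is large enough, as the hypothesis permits.

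Finally I would pass from KL to $\mathcal{W}_2$. Because $\nu_{k\eta}$ need not satisfy a uniform log-Sobolev/$T_2$ inequality with a time-independent constant, I would use the weighted transportation-cost inequality of \citet{bolley05}: $\mathcal{W}_2(\mu,\nu)^2 \lesssim \big(C + H(\mu\|\nu) + \text{Fisher-type or exponential-integrability terms}\big)\cdot H(\mu\|\nu)$, where the prefactor grows polynomially in $k$ because the exponential-integrability constant of $\nu_{k\eta}$ deteriorates only polynomially once one has uniform-in-time Gaussian-type tail bounds on the diffusion (again from dissipativity). This is where the extra $k$-powers and the square-root in $\eta,\delta,(m^2\eta/n)$ enter: $\mathcal{W}_2 \lesssim (\text{poly}(k))^{1/2}\cdot H^{1/2}$, and $H^{1/2}$ already carries $(k\eta)^{1/2}$ and the one-half powers of the small parameters, so after collecting $\mathcal{W}_2(\mu_k,\nu_{k\eta}) = \mathcal{O}\big(d\,k^{3/2}\eta(\eta^{1/4}+\delta^{1/4}+(\frac{m^2}{n}\eta)^{1/8})\big)$, the eighth-root on the last term coming from a double application of Cauchy--Schwarz/Young when the control-variate term is the dominant one. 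The main obstacle I anticipate is the rigorous bookkeeping of the change-of-Poisson-measure density and verifying its Novikov-type integrability condition — i.e. that the exponential martingale from the jump part is a true martingale — since the swapping intensity $S_{\eta,m,n}$ is itself a random, time-varying, mini-batch-dependent quantity; controlling this uniformly requires carefully combining the truncation $S_{\eta,m,n}\le 1$ with the moment bounds on the energy estimator from Lemma~\ref{vr-estimator_main}. A secondary difficulty is ensuring the transportation-cost prefactor is genuinely polynomial (not exponential) in $t$, which hinges on getting uniform-in-time rather than Grönwall-type control on the reLD moments.
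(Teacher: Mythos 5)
Your high-level route is the same as the paper's (a generalized Girsanov change of measure combined with a change of Poisson measure to bound $D_{KL}(\mu_k\|\nu_{k\eta})$, then the weighted transportation-cost inequality of Bolley--Villani to pass to $\mathcal{W}_2$, with the prefactor controlled by exponential integrability of reLD so that the dependence on $k\eta$ stays polynomial). However, there are two concrete gaps. First, you mischaracterize the discretized dynamics: in VR-reSGLD the variance reduction is applied \emph{only} to the energy estimator entering the swap probability, while the drift is the ordinary stochastic gradient frozen over each interval of length $\eta$ (evaluated at $\bbeta_{k-1}$, not at the anchor $\bbeta_{m\lfloor k/m\rfloor}$). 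Hence the diffusion part of the KL involves $\|\nabla G(\widetilde\bbeta_t)-\nabla G(\widetilde\bbeta_{\lfloor t/\eta\rfloor\eta})\|^2$ plus the gradient-noise term from Assumption \ref{assump: stochastic_noise}, and contributes $\mathcal{O}(k d\eta^2+k\eta\delta(\cdot))$ with no $m$ at all; your claimed one-step bound $\E\|\bbeta_s-\bbeta_{m\lfloor s/\eta\rfloor}\|^2=\mathcal{O}(m\eta d+\delta)$ and the resulting $m\eta$ term in the KL are not what the algorithm produces, and that term is then silently dropped when you state the final KL bound (if it were kept, it would generate an $(m\eta)^{1/4}$ term that is not absorbed by the stated rate). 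The parameter $m$ enters the KL only through the jump part, via the variance bound of Lemma \ref{vr-estimator_main} on the swap-rate estimator.

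Second, your bookkeeping of the exponents in the final bound does not follow from your intermediate claims. The paper bounds the jump contribution by $\E|\widetilde S_{\eta,m,n}-S|$ and $\E|\log\widetilde S_{\eta,m,n}-\log S|$ (Taylor expansion of the exponential swap rates plus Jensen), which are of \emph{standard-deviation} order $\mathcal{O}\big((\tfrac{m^2}{n}\eta)^{1/2}d\big)$ per step and per jump --- not the variance order $\mathcal{O}(\tfrac{m^2}{n}\eta)$ you obtain from a second-order expansion of the Poisson entropy rate, an expansion whose first-order cancellation is not justified here because the expectation is under the path measure and only $|\E\widetilde S-S|=\mathcal{O}(1/n^2)$ is available. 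The fractional powers $\eta^{1/4},\delta^{1/4},(\tfrac{m^2}{n}\eta)^{1/8}$ then come from the $D_{KL}^{1/4}$ term in Corollary 2.3 of \citet{bolley05}, i.e. $\mathcal{W}_2\le C_\nu\big[\sqrt{D_{KL}}+(D_{KL}/2)^{1/4}\big]$ with $C_\nu^2=\mathcal{O}(1+d\,k\eta)$ from Lemma \ref{exp_int}; your conversion uses only $\mathcal{W}_2\lesssim \mathrm{poly}(k)^{1/2}D_{KL}^{1/2}$, which would give half-powers, and the appeal to a ``double Cauchy--Schwarz'' to manufacture the eighth root is not a proof. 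A further point you gloss over, which the paper must (and does) handle in constructing the Radon--Nikodym derivative, is that the swaps are restricted to the grid times $i\eta$ for both processes so that $\mu_T\ll\nu_T$ holds at all; with genuinely continuous state-dependent jump intensities the two path measures would have different jump times and the density you write down would not exist.
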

where $\delta$ is a constant that characterizes the scale of noise caused in mini-batch settings and the detail is given in Theorem \ref{numerical_error_W2} in Appendix \ref{discre_error} . Here the last term $\mathcal{O}\big(\big(\frac{m^2}{n}\eta\big)^{1/8}\big)$ comes from the error induced by the stochastic swapping rate, which disappears given a large enough batch size $n$ or a small enough update frequency $m$ and learning rate $\eta$. Note that our upper bound is linearly dependent on time approximately, which is much tighter than the exponential dependence using the Gr\"{o}nwall inequality. Admittedly, the result without swaps is slightly weaker than the diffusion approximation (3.1) in \citet{Maxim17} and we refer readers to Remark \ref{argument_raginsky} in Appendix \ref{discre_error}.

% since the Wiener measure under a new probability measure is not treated as a Brownian motion freely.

Applying the triangle inequality for $\mathcal{W}_2(\mu_{k},\nu_{k\eta})$ and $\mathcal{W}_2(\nu_{k\eta},\pi)$ leads to the final result
\begin{theorem} Assume the smoothness, the dissipativity and the gradient assumptions \ref{assump: lip and alpha beta}, \ref{assump: dissipitive} and \ref{assump: stochastic_noise} hold. Given a small enough learning rate $\eta$, update frequency $m$ and a large enough batch size $n$, we have 
\begin{equation*}
\label{last_W2}
\begin{split}
    \mathcal{W}_2(\mu_{k},\pi)&\leq  \mathcal{O}\Big(d k^{3/2}\eta \Big(\eta^{1/4}+\delta^{1/4}+\Big(\frac{m^2}{n}\eta\Big)^{1/8}\Big)\Big)+\mathcal{O}\Big(e^{\frac{-k\eta(1+\delta_{ S_{\eta, m, n}})}{c_{\text{LS}}}}\Big).\\
\end{split}
\end{equation*}
\end{theorem}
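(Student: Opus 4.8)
The plan is to obtain the bound as an immediate consequence of the two preceding theorems via the triangle inequality for the $\mathcal{W}_2$ metric. First I would note that under the dissipativity assumption \ref{assump: dissipitive} all three measures $\mu_k$, $\nu_{k\eta}$ and $\pi$ have finite second moments (standard Lyapunov/moment estimates for Langevin-type dynamics and their stationary Gibbs measures), so $\mathcal{W}_2$ is a genuine metric on the relevant space and the triangle inequality
\begin{equation*}
    \mathcal{W}_2(\mu_{k},\pi)\ \leq\ \mathcal{W}_2(\mu_{k},\nu_{k\eta})\ +\ \mathcal{W}_2(\nu_{k\eta},\pi)
\end{equation*}
is valid, where $\nu_{k\eta}$ is the law of the continuous-time reLD run for time $t=k\eta$ from the same initialization.

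Next I would control each term separately. For the first term, the smoothness, dissipativity and gradient assumptions \ref{assump: lip and alpha beta}, \ref{assump: dissipitive}, \ref{assump: stochastic_noise} are in force and $n$ is large, $m,\eta$ small, so Theorem \ref{numerical_error_W2_main} applies verbatim and gives
\begin{equation*}
    \mathcal{W}_2(\mu_{k},\nu_{k\eta})\ \leq\ \mathcal{O}\Big(d k^{3/2}\eta \Big(\eta^{1/4}+\delta^{1/4}+\big(\tfrac{m^2}{n}\eta\big)^{1/8}\Big)\Big).
\end{equation*}
For the second term, Theorem \ref{exponential decay_main_body} applies with $t=k\eta$, yielding $\mathcal{W}_2(\nu_{k\eta},\pi)\leq D_0\exp\{-k\eta(1+\delta_{S_{\eta,m,n}})/c_{\text{LS}}\}$; absorbing the initialization constant $D_0$ into the $\mathcal{O}(\cdot)$ notation turns this into $\mathcal{O}\big(e^{-k\eta(1+\delta_{S_{\eta,m,n}})/c_{\text{LS}}}\big)$. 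Adding the two bounds gives exactly the claimed estimate. I would also remark that the hypothesis set of the corollary is precisely the union of the hypotheses of the two invoked theorems, so there is no compatibility issue, and that Lemma \ref{vr-estimator_main} and Lemma \ref{exp_S_main_body} enter only implicitly, through the dependence of $\delta_{S_{\eta,m,n}}$ on $\eta,m,n$ that is already packaged inside Theorem \ref{exponential decay_main_body}.

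Since the argument is a direct assembly of results already established, there is no substantial obstacle; the only point requiring a line of justification is the finiteness of second moments needed to legitimately use the $\mathcal{W}_2$ triangle inequality (and to make sure $D_0$ is finite), which follows from the dissipativity assumption by a routine Gr\"onwall/Lyapunov argument. Everything else is bookkeeping of the $\mathcal{O}(\cdot)$ constants, which I would not spell out in detail.
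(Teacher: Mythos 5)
Your proposal is correct and follows exactly the paper's own route: the paper obtains this theorem by applying the $\mathcal{W}_2$ triangle inequality to $\mathcal{W}_2(\mu_{k},\nu_{k\eta})$ and $\mathcal{W}_2(\nu_{k\eta},\pi)$ and invoking Theorem \ref{numerical_error_W2_main} and Theorem \ref{exponential decay_main_body} with $t=k\eta$, just as you do. Your added remark on finite second moments (covered in the paper by the uniform $L^2$ bounds of Lemmas \ref{Uniform_bound} and \ref{L2_bound}) is a harmless extra justification.
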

This theorem implies that increasing the batch size $n$ or decreasing the update frequency $m$ not only reduces the numerical error but also potentially leads to a faster exponential convergence of the continuous-time dynamics via a much larger swapping rate $S_{\eta, m, n}$.

\section{Experiments}

% \newpage

\subsection{Simulations of Gaussian Mixture Distributions \label{subsec:Toy-multi-mo}}

We first study the proposed variance-reduced replica exchange stochastic gradient Langevin dynamics algorithm (VR-reSGLD) on a Gaussian mixture distribution \citep{Dubey16}. The distribution follows from $x_{i}|\beta\sim0.5\text{N}(\beta,\sigma^{2})+0.5\text{N}(\phi-\beta,\sigma^{2})$,
where $\phi=20$, $\sigma=5$ and $\beta=-5$. We use a training dataset of size $N=10^{5}$ and propose to estimate the posterior distribution over $\beta$.  
We compare the performance of VR-reSGLD against that of the standard stochastic gradient Langevin dynamics (SGLD), and replica exchange SGLD (reSGLD).

\begin{figure}
\center
\subfigure[Trace plot for $\bbeta^{(1)}$\label{fig:Ex1_recovery_svrg_csgld_theta_1}]{\includegraphics[width=0.24\columnwidth]{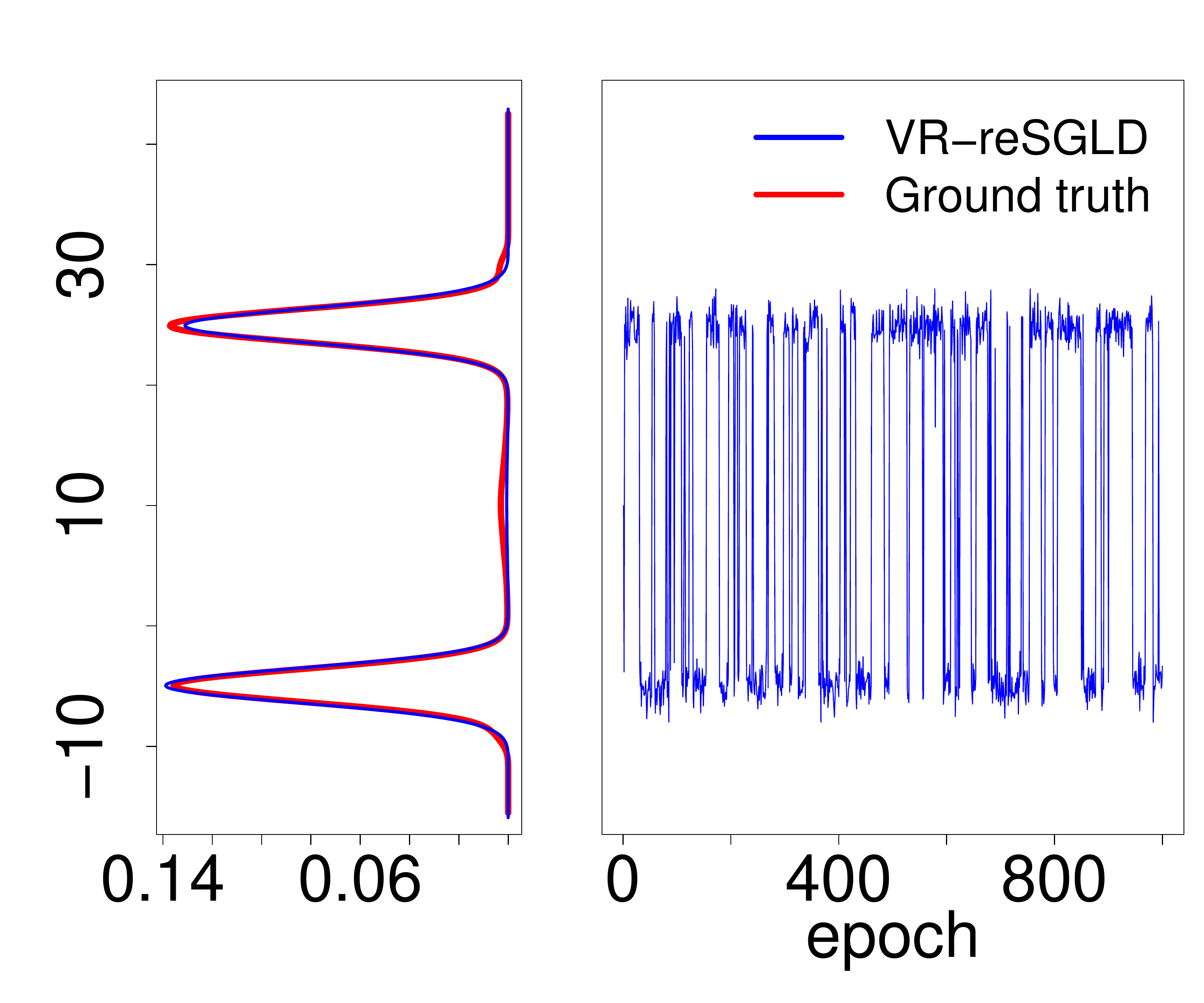}}
\enskip
\subfigure[Trace plot for $\bbeta^{(1)}$\label{fig:Ex1_recovery}]{\includegraphics[width=0.24\columnwidth]{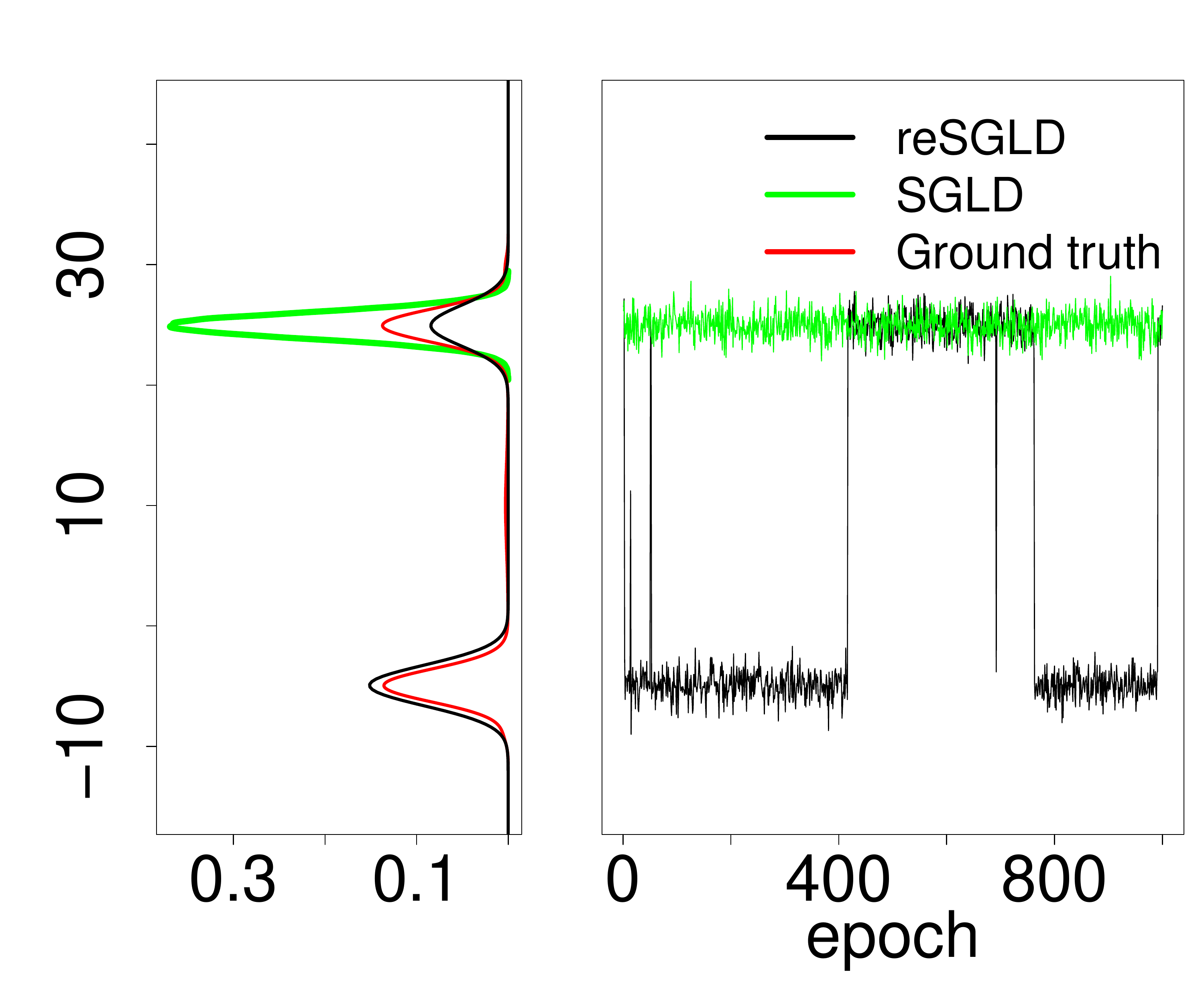}}
\enskip
\subfigure[Paths of $\log_{10}\widetilde{\sigma}^{2}$\label{fig:Ex1_variancetraceplot}]{\includegraphics[width=0.21\columnwidth]{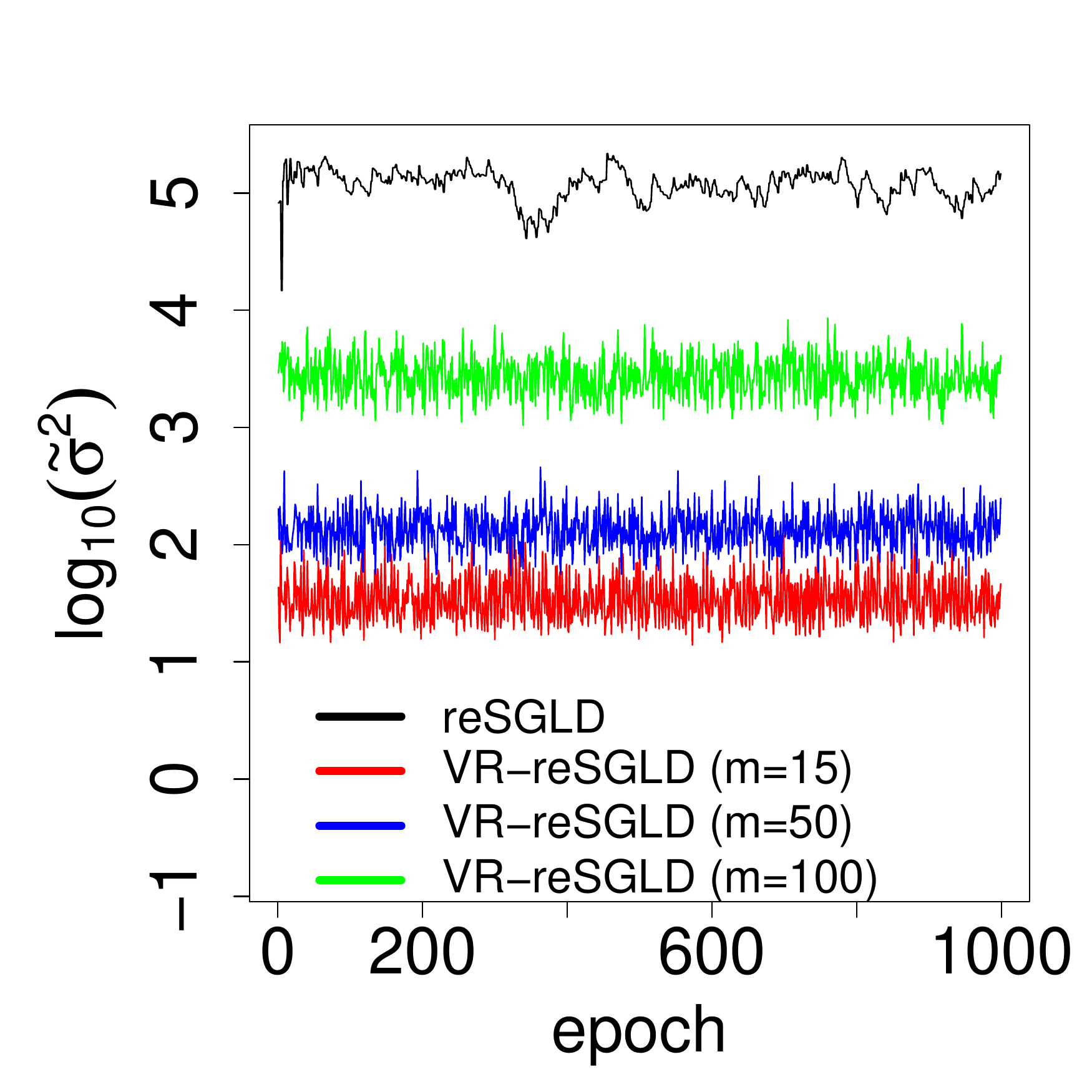}}
\enskip
\subfigure[Contour of $\log_{10}\widetilde{\sigma}^{2}$ \label{fig:Ex1_tau2_m_relation}]{\includegraphics[width=0.22\columnwidth]{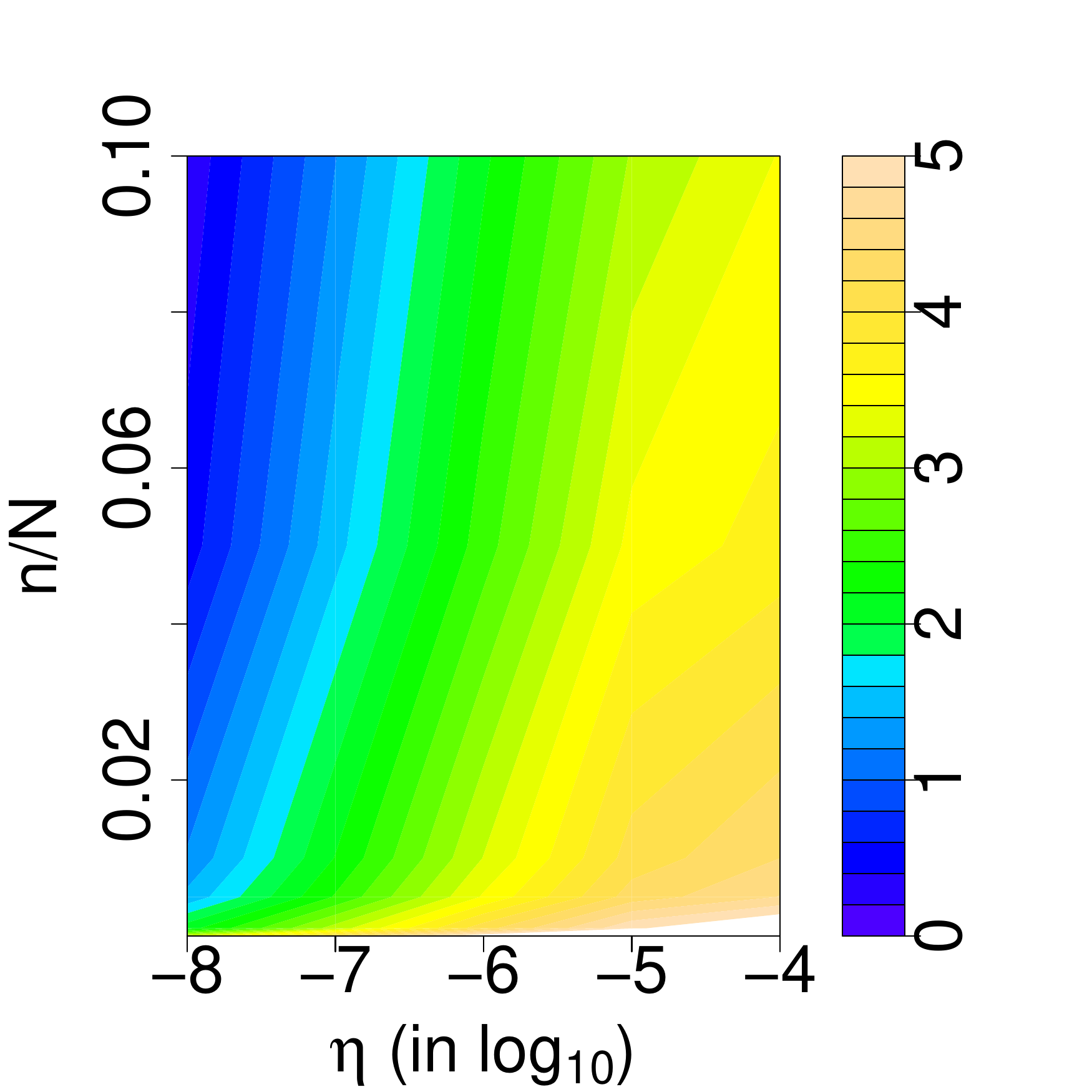}}
\vskip -0.2in
\caption{Trace plots, KDEs of $\bbeta^{(1)}$, and sensitivity study of $\widetilde{\sigma}^2$ with respect to $m, \eta$ and $n$.}
% \vspace{-1em}
\end{figure}

In Figs \ref{fig:Ex1_recovery_svrg_csgld_theta_1} and \ref{fig:Ex1_recovery},
we present trace plots and kernel density estimates (KDE) of samples generated 
from   VR-reSGLD with $m=40$, $\tau^{(1)}=10$ \footnote[2]{We choose $\tau^{(1)}=10$ instead of $1$ to avoid peaky modes for ease of illustration.}, $\tau^{(2)}=1000$,
$\eta=1e-7$, and $F=1$; reSGLD adopt the same hyper-parameters except for $F=100$ because a smaller $F$ may fail to propose any swaps;
SGLD uses $\eta=1e-7$ and $\tau=10$. As the posterior
density is intractable, we consider a ground truth
by running replica exchange Langevin dynamics with long enough iterations. We observe
that VR-reSGLD is able to fully recover the posterior density, and successfully jump between
the two modes passing the energy barrier frequently enough. By contrast, SGLD, initialized at $\beta_{0}=30$,
is attracted to the nearest mode and fails to escape throughout
the run; reSGLD manages to jump between the 
two modes, however,  $F$ is chosen as large as $100$, which induces a large bias and only yields three to five swaps and exhibits the metastability issue. 
In Figure \ref{fig:Ex1_variancetraceplot}, we present the evolution
of the variance for VR-reSGLD over a range of different $m$
and compare it with reSGLD. We see that the variance reduction mechanism has
successfully reduced the variance by hundreds of times. In Fig \ref{fig:Ex1_tau2_m_relation},
we present the sensitivity study of $\tilde{\sigma}^{2}$
as a function of the ratio $n/N$ and the
learning rate $\eta$; for this estimate we average out $10$ realizations
of VR-reSGLD, and our results agree with the theoretical analysis in Lemma \ref{vr-estimator_main}.

\subsection{Non-convex optimization for image data}
\label{nonconvex_optimization}

We further test the proposed algorithm on CIFAR10 and CIFAR100. We choose the 20, 32, 56-layer residual networks as the training models and denote them by ResNet-20, ResNet-32, and ResNet-56, respectively. Considering the wide adoption of M-SGD, stochastic gradient Hamiltonian Monte Carlo (SGHMC) is selected as the baseline. We refer to the standard replica exchange SGHMC algorithm as reSGHMC and the variance-reduced reSGHMC algorithm as VR-reSGHMC. We also include another baseline called cyclical stochastic gradient MCMC (cycSGHMC), which proposes a cyclical learning rate schedule. To make a fair comparison, we test the variance-reduced replica exchange SGHMC algorithm with cyclic learning rates and refer to it as cVR-reSGHMC.

\begin{figure*}[!ht]
  \centering
  \vskip -0.1in
  \subfigure[\footnotesize{CIFAR10: Original v.s. proposed (m=50)} ]{\includegraphics[width=3.2cm, height=3.2cm]{figures/cifar10_batch_256_variance_acceleration_no_2nd_y_V2.pdf}}\label{fig: c2a}\enskip
  \subfigure[\footnotesize{CIFAR100: Original v.s. proposed (m=50)} ]{\includegraphics[width=3.2cm, height=3.2cm]{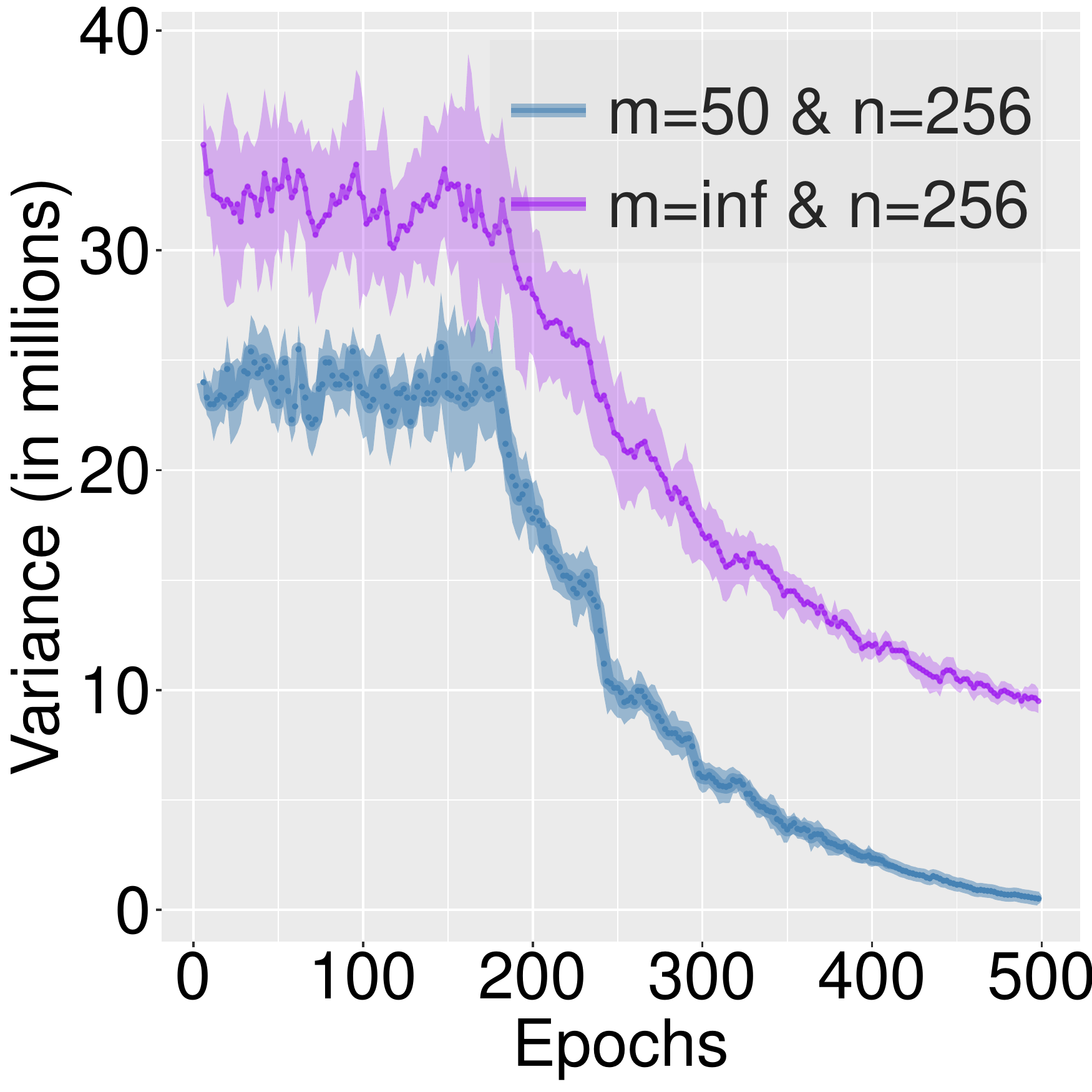}}\label{fig: c2b}\enskip
  \subfigure[Variance reduction setups on CIFAR10]{\includegraphics[width=3.2cm, height=3.2cm]{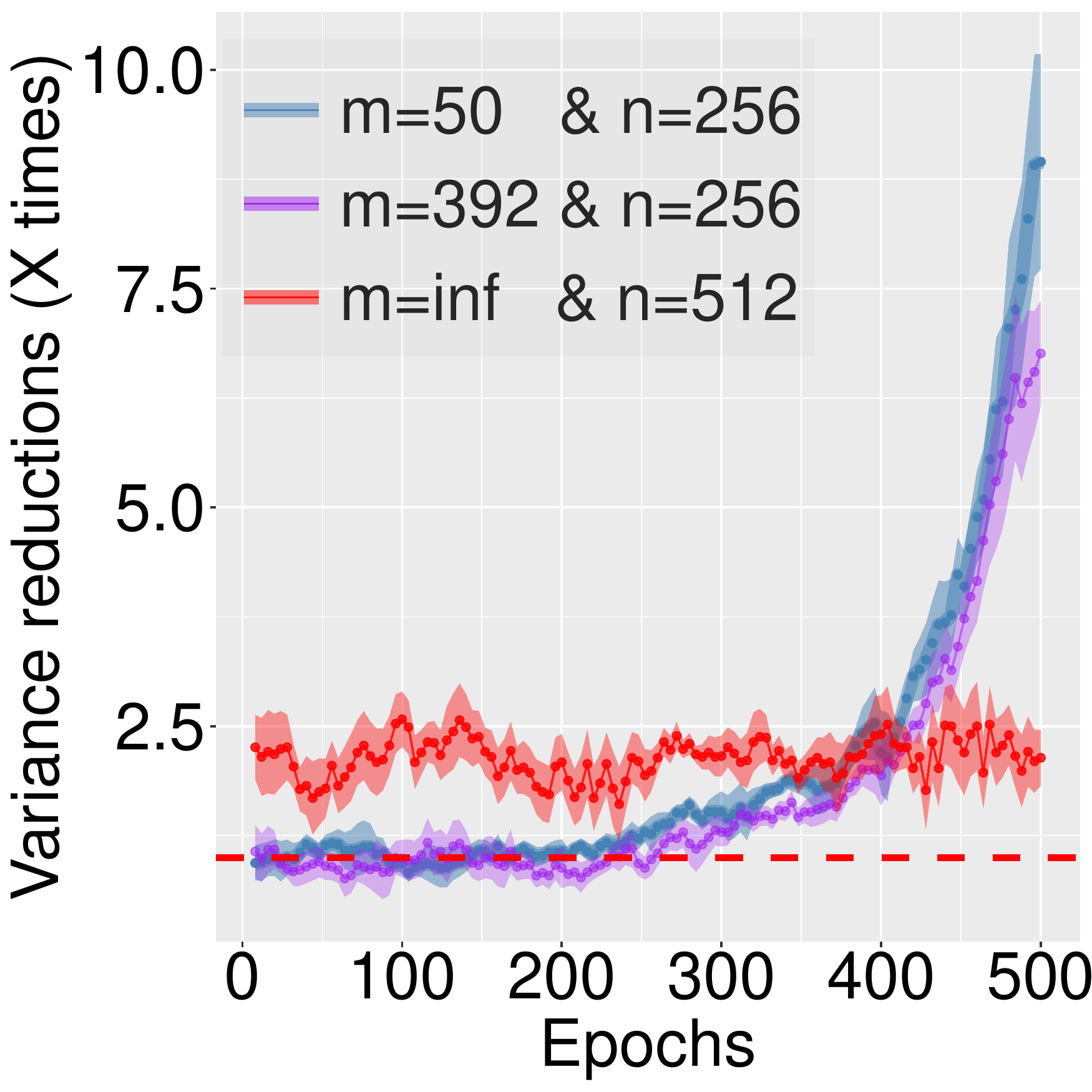}}\label{fig: c2c}\enskip
  \subfigure[Variance reduction setups on CIFAR100]{\includegraphics[width=3.2cm, height=3.2cm]{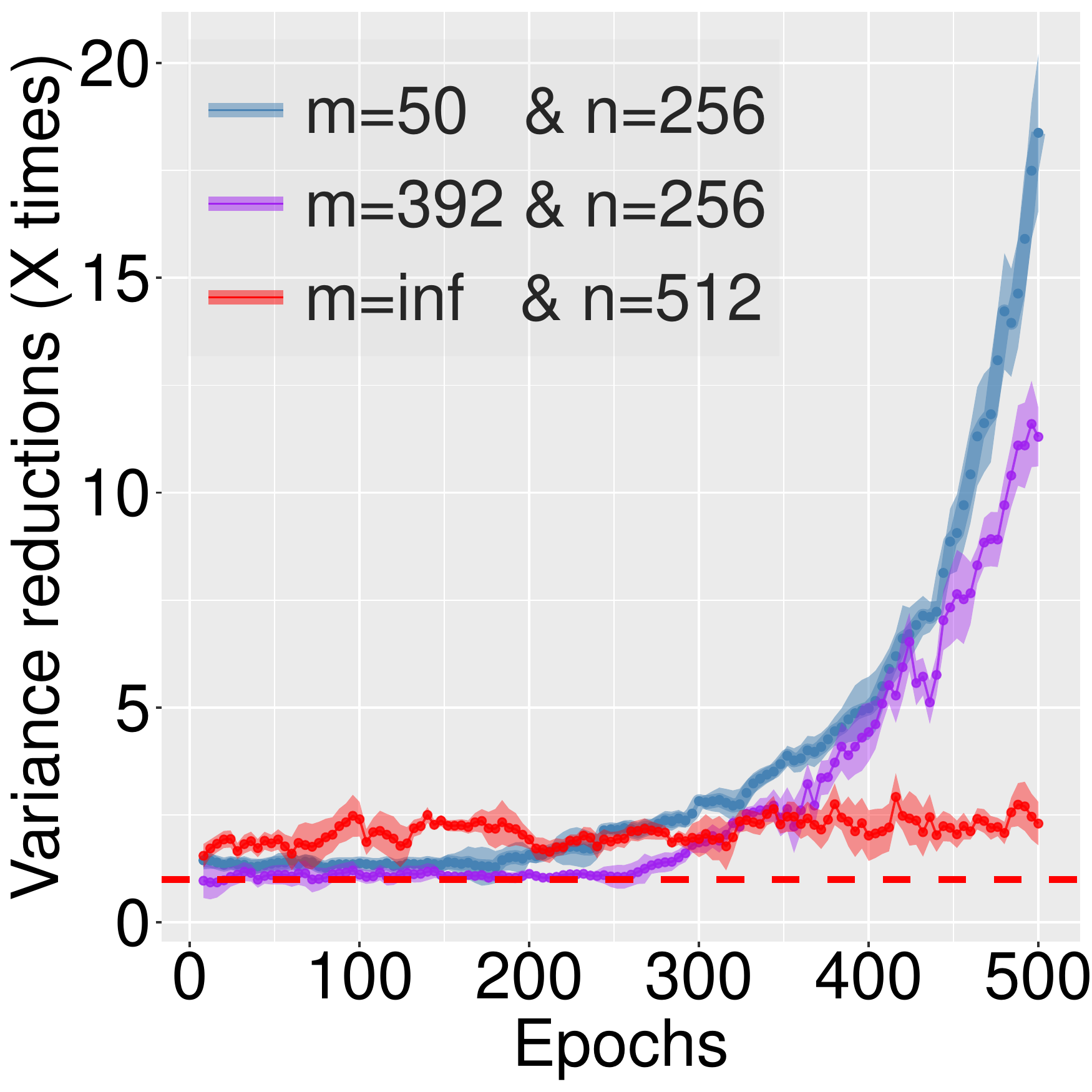}}\label{fig: c2d}
    \vskip -0.1in
  \caption{Variance reduction on the noisy energy estimators on CIFAR10 \& CIFAR100 datasets.}
  \label{cifar_biases_v2}
%   \vspace{-1em}
\end{figure*}

We run M-SGD, SGHMC and (VR-)reSGHMC for 500 epochs. For these algorithms, we follow a setup from \citet{deng2020}. We fix the learning rate $\eta_k^{(1)}=\text{2e-6}$ in the first 200 epochs and decay it by 0.984 afterwards. For SGHMC and the low-temperature processes of (VR-)reSGHMC, we anneal the temperature following $\tau_k^{(1)}=0.01 / 1.02^k$ in the beginning and keep it fixed after the burn-in steps; regarding the high-temperature process, we set $\eta_k^{(2)}=1.5\eta_k^{(1)}$ and $\tau_k^{(2)}=5\tau_k^{(1)}$. The initial correction factor $F_0$ is fixed at $1.5e5$. The thinning factor $\mathbb{T}$ is set to $256$. In particular for cycSGHMC, we run the algorithm for 1000 epochs and choose the cosine learning rate schedule with 5 cycles; $\eta_0$ is set to $\text{1e-5}$; we fix the temperature 0.001 and the threshold $0.7$ for collecting the samples. Similarly, we propose the cosine learning rate for cVR-reSGHMC with 2 cycles and run it for 500 epochs using the same temperature 0.001. We only study the low-temperature process for the replica exchange algorithms. Each experiment is repeated five times to obtain the mean and 2 standard deviations.

We evaluate the performance of variance reduction using VR-reSGHMC and compare it with reSGHMC. We first increase the batch size $n$ from 256 to 512 for reSGHMC and notice that the reduction of variance is around 2 times (see the red curves in Fig.\ref{cifar_biases_v2}(c,d)). Next, we try $m=50$ and $n=256$ for the VR-reSGHMC algorithm, which updates the control variates every 50 iterations. As shown in Fig.\ref{cifar_biases_v2}(a,b), during the first 200 epochs, where the largest learning rate is used, the variance of VR-reSGHMC is slightly reduced by 37\% on CIFAR100 and doesn't make a difference on CIFAR10. However, as the learning rate and the temperature decrease, the reduction of the variance gets more significant. We see from  Fig.\ref{cifar_biases_v2}(c,d) that the reduction of variance can be \emph{up to 10 times on CIFAR10 and 20 times on CIFAR100}. This is consistent with our theory proposed in Lemma \ref{vr-estimator_main}. The reduction of variance based on VR-reSGHMC starts to outperform the baseline with $n=512$ when the epoch is higher than 370 on CIFAR10 and 250 on CIFAR100. We also try $m=392$, which updates the control variates every 2 epochs, and find a similar pattern.

For computational reasons, we choose $m=392$ and $n=256$ for (c)VR-reSGHMC and compare them with the baseline algorithms. With the help of swaps between two SGHMC chains, reSGHMC already obtains remarkable performance \citep{deng2020} and five swaps often lead to an optimal performance. However, VR-reSGHMC still outperforms reSGHMC by around 0.2\% on CIFAR10 and 1\% improvement on CIFAR100  (Table.\ref{cifar_all_results}) and \emph{the number of swaps is increased to around a hundred under the same setting}. We also try cyclic learning rates and compare cVR-reSGHMC with cycSGHMC, we see cVR-reSGHMC outperforms cycSGHMC significantly even if cycSGHMC is running 1000 epochs, which may be more costly than cVR-reSGHMC due to the lack of mechanism in parallelism. Note that cVR-reSGHMC keeps the temperature the same instead of annealing it as in VR-reSGHMC, which is more suitable for uncertainty quantification.

\begin{table*}[ht]
\fontsize{8.5}{11}
\begin{sc}
\vskip -0.15in
\caption[Table caption text]{Prediction accuracies (\%) based on Bayesian model averaging. In particular, M-SGD and SGHMC run 500 epochs using a single chain; cycSGHMC run 1000 epochs using a single chain; replica exchange algorithms run 500 epochs using two chains with different temperatures.}\label{cifar_all_results}
\vskip -0.3in
\begin{center} 
\begin{tabular}{c|ccc|ccc}
\hline
\multirow{2}{*}{Method} & \multicolumn{3}{c|}{CIFAR10} & \multicolumn{3}{c}{CIFAR100} \\
\cline{2-7}
 & ResNet20 & ResNet32  & ResNet56 & ResNet20 & ResNet32 & ResNet56 \\
\hline
\hline
M-SGD & 94.07$\pm$0.11 & 95.11$\pm$0.07 & 96.05$\pm$0.21 & 71.93$\pm$0.13 & 74.65$\pm$0.20  & 78.76$\pm$0.24  \\
SGHMC & 94.16$\pm$0.13 & 95.17$\pm$0.08 & 96.04$\pm$0.18 & 72.09$\pm$0.14 & 74.80$\pm$0.19  & 78.95$\pm$0.22 \\ 
\hline
 
\upshape{re}SGHMC & 94.56$\pm$0.23 & 95.44$\pm$0.16 & 96.15$\pm$0.17 & 73.94$\pm$0.34 & 76.38$\pm$0.23  & 79.86$\pm$0.26 \\ 
\scriptsize{VR-\upshape{re}SGHMC} & \textbf{94.84$\pm$0.11}  & \textbf{95.62$\pm$0.09} &  \textbf{96.32$\pm$0.15}  & \textbf{74.83$\pm$0.18} & \textbf{77.40$\pm$0.27}  & \textbf{80.62$\pm$0.22}   \\
\hline
\scriptsize{\upshape{cyc}SGHMC} &  94.61$\pm$0.15  &  95.56$\pm$0.12  & 96.19$\pm$0.17 & 74.21$\pm$0.22 & 76.60$\pm$0.25  & 80.39$\pm$0.21  \\
\scriptsize{\upshape{c}VR-\upshape{re}SGHMC} & \textbf{94.91$\pm$0.10}  & \textbf{95.64$\pm$0.13} &  \textbf{96.36$\pm$0.16}  & \textbf{75.02$\pm$0.19} & \textbf{77.58$\pm$0.21}  & \textbf{80.50$\pm$0.25}   \\
\hline
\end{tabular}
\end{center} 
\end{sc}
% \vspace{-1em}
\end{table*}

Regarding the training cost and the treatment for improving the performance of variance reduction using adaptive coefficients in the early period, we refer interested readers to Appendix \ref{CV_more}. 

For the detailed implementations, we release the code at \url{https://github.com/WayneDW/Variance_Reduced_Replica_Exchange_Stochastic_Gradient_MCMC}.

\subsection{Uncertainty Quantification for unknown samples} 

A reliable model not only makes the right decision among potential candidates but also casts doubts on irrelevant choices. For the latter, we follow \citet{Balaji17} and evaluate the uncertainty on out-of-distribution samples from unseen classes. To avoid over-confident predictions on unknown classes, the ideal predictions should yield a higher uncertainty on the out-of-distribution samples, while maintaining the accurate uncertainty for the in-distribution samples.

Continuing the setup in Sec.\ref{nonconvex_optimization}, we collect the ResNet20 models trained on CIFAR10 and quantify \begin{wrapfigure}{r}{0.61\textwidth}
  \begin{center}
  \vskip -0.2in
     \includegraphics[width=0.61\textwidth]{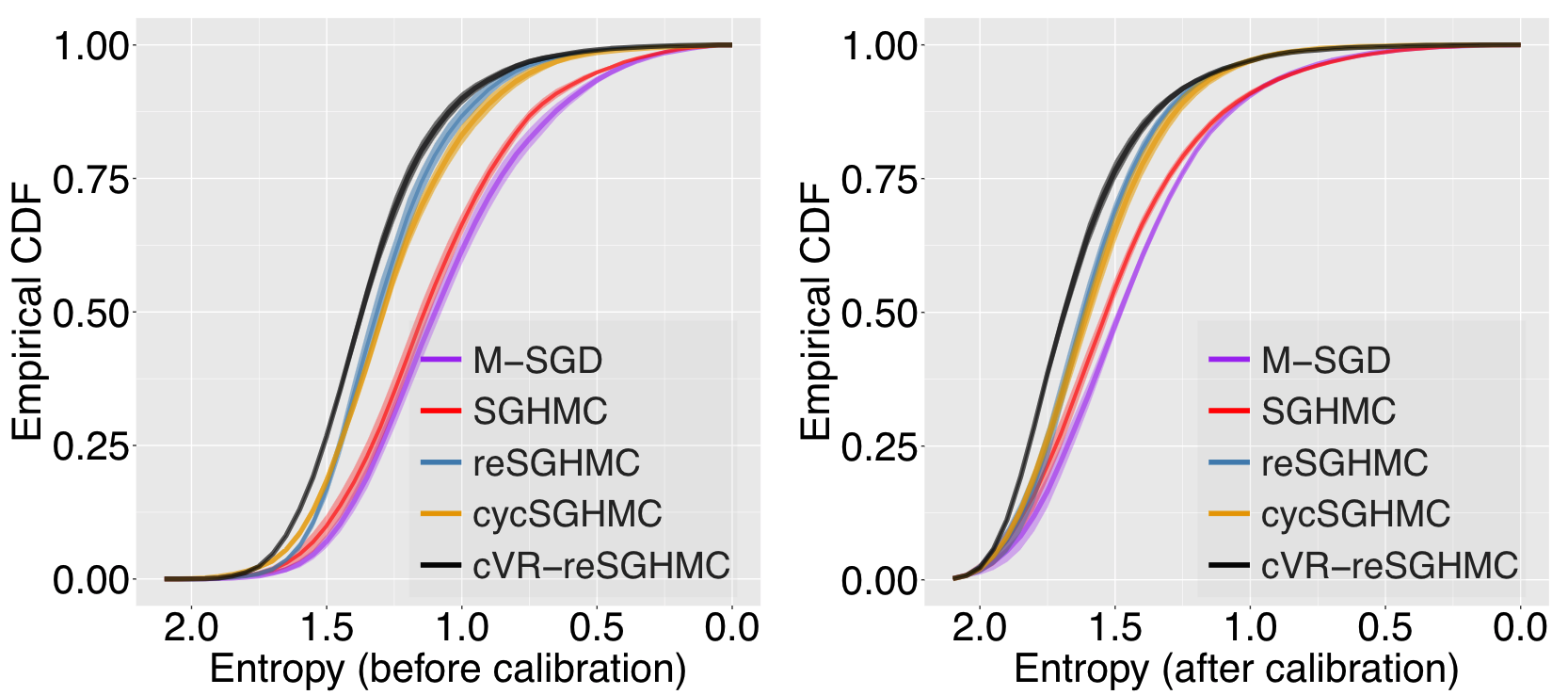}
  \end{center}
  \vskip -0.2in
  \caption{CDF of entropy for predictions on SVHN via CIFAR10 models. A temperature scaling is used in calibrations.}
  \label{UQ}
\end{wrapfigure} the entropy on the Street View House Numbers (SVHN) dataset, which contains 26,032 RGB testing images of digits instead of objects. We compare cVR-reSGHMC with M-SGD, SGHMC, reSGHMC, and cSGHMC. Ideally, the predictive distribution should be the uniform distribution and leads to the highest entropy.  We present the empirical cumulative distribution function (CDF) of the entropy of the predictions on SVHN and report it in Fig.\ref{UQ}. As shown in the left figure, M-SGD shows the smallest probability for high-entropy predictions, implying the weakness of stochastic optimization methods in uncertainty estimates. By contrast, the proposed cVR-reSGHMC yields the highest probability for predictions of high entropy. Admittedly, the standard ResNet models are poorly calibrated in the predictive probabilities and lead to inaccurate confidence. To alleviate this issue, we adopt the temperature-scaling method with a scale of 2 to calibrate the predictive distribution \citep{temperature_scaling} and present the entropy in Fig.\ref{UQ} (right). In particular, we see that 77\% of the predictions from cVR-reSGHMC yields the entropy higher than 1.5, which is 7\% higher than reSGHMC and 10\% higher than cSGHMC and much better than the others.

For more discussions of uncertainty estimates on both datasets, we leave the results in Appendix \ref{UQ_more}.

\section{Conclusion}

We propose the variance-reduced replica exchange stochastic gradient Langevin dynamics algorithm to accelerate the convergence by reducing the variance of the noisy energy estimators. Theoretically, this is \emph{the first variance reduction method that yields the potential of exponential accelerations} instead of solely reducing the discretization error. In addition, we bypass the Gr\"{o}nwall inequality to avoid the crude numerical error and consider a change of Poisson measure in the generalized Girsanov theorem to obtain a much tighter upper bound. Since our variance reduction only conducts on the noisy energy estimators and is not applied to the noisy gradients, the standard hyper-parameter setting can be also naturally imported, which greatly facilitates the training of deep neural works.

\section*{Acknowledgment} 

We would like to thank Maxim Raginsky and the anonymous reviewers for their insightful suggestions. Liang's research was supported in part by the grants DMS-2015498, R01-GM117597 and R01-GM126089. Lin acknowledges the support from NSF (DMS-1555072, DMS-1736364), BNL Subcontract 382247, W911NF-15-1-0562, and DE-SC0021142.

\bibliography{mybib2}
\bibliographystyle{iclr2021_conference}

\newpage

\appendix

\setcounter{lemma}{0}
\renewcommand{\thelemma}{\Alph{section}\arabic{lemma}}
\def\qed{ \ \vrule width.2cm height.2cm depth0cm\smallskip}
\newcommand\myeq{\stackrel{\mathclap{\normalfont\mbox{A}}}{=}}

\newcommand{\la}{\langle}
\def \proof{{\noindent \bf Proof\quad}}

% \maketitle

\section{Preliminaries}
\label{prelim}
\textbf{Notation} We denote the deterministic energy based on the parameter $\bbeta$ by $L(\bbeta)=\sum_{i=1}^N L(\bx_i|\bbeta)$ using the full dataset of size $N$. We denote the unbiased stochastic energy estimator by $\frac{N}{n}\sum_{i\in B} L(\bx_i|\bbeta)$ using the mini-batch of data $B$ of size $n$. %Sometimes, the unbiased stochastic energy estimator $\frac{N}{n}\sum_{i\in B} L(\bx_i|\bbeta)$ is written as $\widetilde L(\bbeta)$ if it doesn't cause confusion. 
The same style of notations is also applicable to the gradient for consistency. We denote the Euclidean $L^2$ norm by $\|\cdot\|$. To prove the desired results, we need the following assumptions:
\begin{assumption}[Smoothness]\label{assump: lip and alpha beta}
The energy function $L(\bx_i|\cdot)$ is $C_N$-smoothness if there exists a constant $C_N>0$ such that $\forall \bbeta_1,\bbeta_2\in\hR^d$, $i\in\{1,2,\cdots, N\}$, we have
\begin{equation}
\label{1st_smooth_condition}
    \|\nabla L(\bx_i|\bbeta_1)-\nabla L(\bx_i|\bbeta_2)\|\le C_N\|\bbeta_1-\bbeta_2\|.
\end{equation}
Note that the above condition further implies 
for a constant $C=NC_N$ and $\forall \bbeta_1,\bbeta_2\in\hR^d$, we have
\begin{equation}
\label{2nd_smooth_condition}
    \|\nabla L(\bbeta_1)-\nabla L(\bbeta_2)\|\le C\|\bbeta_1-\bbeta_2\|.
\end{equation}
\end{assumption}{}

The smoothness conditions (\ref{1st_smooth_condition}) and (\ref{2nd_smooth_condition}) are standard tools in studying the convergence of SGLD in \citep{Xu18} and \citet{Maxim17}, respectively.
\begin{assumption}[Dissipativity]\label{assump: dissipitive}
The energy function $L(\cdot)$ is $(a,b)$-dissipative if there exist constants $a>0$ and $b\ge 0$ such that $\forall \bbeta\in\mathbb R^d$,  $\la \bbeta,\nabla L(\bbeta)\rangle \ge a\|\bbeta\|^2-b.$
\end{assumption}{}

The dissipativity condition implies that the Markov process is able to move inward on average regardless of the starting position. It has been widely used in proving the geometric ergodicity of dynamic systems \citep{mattingly02, Maxim17, Xu18}. 

\begin{assumption}[Gradient oracle]\label{assump: stochastic_noise}
There exists a constant $\delta\in[0, 1)$ such that for any $\bbeta$, we have
\begin{equation}
    \E[\|\nabla \widetilde L(\bbeta)-\nabla L(\bbeta)\|^2]\leq 2\delta (C^2 \|\bbeta\|^2+\Phi^2),
\end{equation}
where $\Phi$ is a positive constant. The same assumption has been used in \citet{Maxim17} to control the stochastic noise from the gradient.
\end{assumption}{}

\section{Exponential accelerations via Variance reduction}
\label{exp_acc}
\setcounter{lemma}{0}
\setcounter{theorem}{0}

We aim to build an efficient estimator to approximate the deterministic swapping rate $S(\bbeta^{(1)}, \bbeta^{(2)})$
\begin{equation}
\label{S_exact}
    S(\bbeta^{(1)}, \bbeta^{(2)})=e^{ \left(\frac{1}{\tau^{(1)}}-\frac{1}{\tau^{(2)}}\right)\left( \sum_{i=1}^N L(\bx_i|\bbeta^{(1)})-\sum_{i=1}^N L(\bx_i|\bbeta^{(2)})\right)}.
\end{equation}

In big data problems and deep learning, it is too expensive to evaluate the energy $\sum_{i=1}^N L(\bx_i|\bbeta)$ for each $\bbeta$ for a large $N$. To handle the computational issues, a popular solution is to use the unbiased stochastic energy $\frac{N}{n}\sum_{i\in B} L(\bx_i|\bbeta)$ for a random mini-batch data $B$ of size $n$. However, a n\"{a}ive replacement of $\sum_{i=1}^N L(\bx_i|\bbeta)$ by $\frac{N}{n}\sum_{i\in B} L(\bx_i|\bbeta)$ leads to a large bias to the swapping rate. To remove such a bias, we follow \citet{deng2020} and consider the corrected swapping rate
\begin{equation}
\begin{split}
    \widehat S(\bbeta^{(1)}, \bbeta^{(2)})&=e^{ \left(\frac{1}{\tau^{(1)}}-\frac{1}{\tau^{(2)}}\right)\left( \frac{N}{n}\sum_{i\in B} L(\bx_i|\bbeta^{(1)})-\frac{N}{n}\sum_{i\in B} L(\bx_i|\bbeta^{(2)})-\left(\frac{1}{\tau^{(1)}}-\frac{1}{\tau^{(2)}}\right)\frac{\widehat \sigma^2}{2}\right)},\\
\end{split}
\end{equation}
where $\widehat\sigma^2$ denotes the variance of $\frac{N}{n}\sum_{i\in B} L(\bx_i|\bbeta^{(1)})-\frac{N}{n}\sum_{i\in B} L(\bx_i|\bbeta^{(2)})$. \footnote{We only consider the case of $F=1$ in the stochastic swapping rate for ease of analysis.} Empirically, $\widehat \sigma^2$ is quite large, resulting in almost no swaps and insignificant accelerations. To propose more effective swaps, we consider the variance-reduced estimator
\begin{equation}
    \widetilde L(B_k|\bbeta_k)=\frac{N}{n}\sum_{i\in B_k}\left( L(\bx_i| \bbeta_k) - L\left(\bx_i\Big| \bbeta_{m\lfloor \frac{k}{m}\rfloor}\right) \right)+\sum_{i=1}^N L\left(\bx_i\Big| \bbeta_{m\lfloor \frac{k}{m}\rfloor}\right),
\end{equation}
where the control variate $\bbeta_{m\lfloor \frac{k}{m}\rfloor}$ is updated every $m$ iterations. Denote the variance of $ \widetilde L(B|\bbeta^{(1)})- \widetilde L(B|\bbeta^{(2)})$ by $\widetilde\sigma^2$. The variance-reduced stochastic swapping rate follows
\begin{equation}
\begin{split}
\label{vr_s}
    \widetilde S_{\eta, m, n}(\bbeta^{(1)}, \bbeta^{(2)})&=e^{ \left(\frac{1}{\tau^{(1)}}-\frac{1}{\tau^{(2)}}\right)\left( \widetilde L(B|\bbeta^{(1)})- \widetilde L(B|\bbeta^{(2)})-\left(\frac{1}{\tau^{(1)}}-\frac{1}{\tau^{(2)}}\right)\frac{\widetilde\sigma^2}{2}\right)}.\\
\end{split}
\end{equation}

Using the strategy of variance reduction, we can lay down the first result, which differs from the existing variance reduction methods in that we only conduct variance reduction in the energy estimator for the class of SGLD algorithms.
\begin{lemma}[Variance-reduced energy estimator]
\label{vr-estimator}
Under the smoothness
and dissipativity assumptions \ref{assump: lip and alpha beta} and \ref{assump: dissipitive}, the variance of the variance-reduced energy estimator $\widetilde L(B_{k}|\bbeta_{k}^{(h)})$, where $h\in\{1,2\}$, is upper bounded by
\begin{equation}
    \Var\left(\widetilde L(B_{k}|\bbeta_{k}^{(h)})\right)\leq \frac{m^2 \eta}{n}D_R^2\left( \frac{2\eta}{n} (2C^2\Psi_{d,\tau^{(2)}, C, a, b} +2Q^2)+4\tau^{(2)} d\right).
\end{equation}
where $D_R=CR+\max_{i\in\{1,2,\cdots, N\}} N \|\nabla L(\bx_i|\bbeta_{\star})\|+\frac{Cb}{a}$ and $R$ is the radius of a sufficiently large ball that contains $\bbeta_k^{(h)}$ for $h\in\{1,2\}$.
% where $\widetilde L(B_{k}|\bbeta_{k}^{(h)})=\frac{N}{n}\sum_{i\in B_{k}}\left( L(\bx_i| \bbeta_{k}^{(h)}) - L\left(\bx_i\Big| \bbeta^{(h)}_{m\lfloor \frac{k}{m}\rfloor}\right) \right)+\sum_{i=1}^N L\left(\bx_i\Big| \bbeta^{(h)}_{m\lfloor \frac{k}{m}\rfloor}\right)$.
\end{lemma}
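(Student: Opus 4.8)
The plan is to use the SVRG-type structure of $\widetilde L(B_k|\bbeta_k^{(h)})$ to reduce the variance estimate to a bound on how far the chain moves between two consecutive updates of the control variate. Write $k_0:=m\lfloor k/m\rfloor$, so that $0\le k-k_0\le m$. Since $\widehat L^{(h)}=\sum_{i=1}^N L(\bx_i|\bbeta_{k_0}^{(h)})$ depends only on the checkpoint, the variance of $\widetilde L(B_k|\bbeta_k^{(h)})$ is essentially that of the paired mini-batch term $\frac{N}{n}\sum_{i\in B_k}\bigl(L(\bx_i|\bbeta_k^{(h)})-L(\bx_i|\bbeta_{k_0}^{(h)})\bigr)$ (the coupling between $B_k$ and $\bbeta_k^{(h)}$ is addressed below). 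Writing $g_i:=L(\bx_i|\bbeta_k^{(h)})-L(\bx_i|\bbeta_{k_0}^{(h)})$, a standard mini-batch variance estimate gives $\Var\bigl(\frac{N}{n}\sum_{i\in B_k}g_i\bigr)\le\frac{N}{n}\sum_{i=1}^N g_i^2\le\frac{1}{n}\max_i(N|g_i|)^2$, so it remains to control $\max_i(N|g_i|)^2$ in expectation.

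Next I would convert the energy increment into a displacement bound. By the fundamental theorem of calculus, $|g_i|\le\sup_\bbeta\|\nabla L(\bx_i|\bbeta)\|\cdot\|\bbeta_k^{(h)}-\bbeta_{k_0}^{(h)}\|$, where the supremum runs over the segment joining the two iterates. Assumption~\ref{assump: lip and alpha beta} gives $\|\nabla L(\bx_i|\bbeta)\|\le\|\nabla L(\bx_i|\bbeta_\star)\|+C_N\|\bbeta-\bbeta_\star\|$, and Assumption~\ref{assump: dissipitive} both confines the iterates to a ball of radius $R$ and bounds $\|\bbeta_\star\|$ (via $\langle\bbeta_\star,\nabla L(\bbeta_\star)\rangle\ge a\|\bbeta_\star\|^2-b$), so that $N\|\nabla L(\bx_i|\bbeta)\|\le D_R$ uniformly on that ball with $D_R=CR+\max_i N\|\nabla L(\bx_i|\bbeta_\star)\|+Cb/a$. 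Combining with the first step gives $\E[\Var(\widetilde L(B_k|\bbeta_k^{(h)}))]\le\frac{D_R^2}{n}\,\E\|\bbeta_k^{(h)}-\bbeta_{k_0}^{(h)}\|^2$, so it remains to bound the $m$-step displacement. Unrolling the SGLD recursion,
\begin{equation*}
\bbeta_k^{(h)}-\bbeta_{k_0}^{(h)}=-\eta\sum_{j=k_0+1}^{k}\tfrac{N}{n}\sum_{i\in B_j}\nabla L(\bx_i|\bbeta_{j-1}^{(h)})+\sqrt{2\eta\tau^{(h)}}\sum_{j=k_0+1}^{k}\bxi_j^{(h)},
\end{equation*}
I would apply Cauchy--Schwarz to the at-most-$m$ drift terms, use orthogonality of the mean-zero mini-batch gradient noise and of the Gaussian increments (whose contribution is $2\eta\tau^{(h)}(k-k_0)d$), the uniform second-moment bound $\sup_j\E\|\bbeta_j^{(h)}\|^2\le\Psi_{d,\tau^{(2)},C,a,b}$ following from dissipativity, and the $1/n$ scaling of the mini-batch gradient variance. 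Tracking the constants, and using $k-k_0\le m\le m^2$ and $\tau^{(h)}\le\tau^{(2)}$, yields $\E\|\bbeta_k^{(h)}-\bbeta_{k_0}^{(h)}\|^2\le m^2\eta\bigl(\frac{2\eta}{n}(2C^2\Psi_{d,\tau^{(2)},C,a,b}+2Q^2)+4\tau^{(2)}d\bigr)$, whence the claim.

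The main obstacle is the last two steps jointly: one needs genuinely uniform-in-$k$ control of $\E\|\bbeta_k^{(h)}\|^2$ together with a precise sense in which the iterates remain in the radius-$R$ ball, which is exactly where dissipativity and the constants $\Psi$, $Q$, $R$ enter; making the ``sufficiently large ball'' rigorous typically requires a high-probability or stopping-time argument rather than a deterministic confinement. A secondary subtlety is that the mini-batch $B_k$ drives both the update producing $\bbeta_k^{(h)}$ and the energy estimator, so $B_k$ and $\bbeta_k^{(h)}$ are not independent; since every bound on $g_i$ above is uniform over the confining ball, this coupling can be absorbed into worst-case constants. Finally, the elementary bound $\Var(\widetilde L(B|\bbeta^{(h)}))\le\Var\bigl(\frac{N}{n}\sum_{i\in B}L(\bx_i|\bbeta^{(h)})\bigr)+\Var\bigl(\frac{N}{n}\sum_{i\in B}L(\bx_i|\widehat\bbeta^{(h)})\bigr)$, obtained by bounding the variance of the difference directly, holds in parallel; combined with the above it produces the $\min\{\cdot,\cdot\}$ appearing in the main-text form of the lemma.
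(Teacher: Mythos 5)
Your proposal follows essentially the same route as the paper's proof: reduce the variance to $\frac{D_R^2}{n}\,\E\|\bbeta_k^{(h)}-\bbeta_{m\lfloor k/m\rfloor}^{(h)}\|^2$ via the local Lipschitz bound built from Assumptions \ref{assump: lip and alpha beta} and \ref{assump: dissipitive} (the same $D_R$), then bound the at-most-$m$-step displacement by Cauchy--Schwarz, the gradient bound with constant $Q$, and the uniform second-moment bound $\Psi_{d,\tau^{(2)},C,a,b}$, arriving at the identical constants. Your minor variations (the max-based mini-batch variance bound, orthogonality of the Gaussian increments instead of crude Cauchy--Schwarz, and the explicit remark on the coupling between $B_k$ and $\bbeta_k^{(h)}$) do not change the argument and are, if anything, slightly more careful than the paper's own treatment.
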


\begin{proof}
\begin{equation}
\label{var_1st}
    \footnotesize
    \begin{split}
      &\Var\left(\widetilde L(B_{k}|\bbeta_{k}^{(h)})\right)\\
      =&\E\left[\left(\frac{N}{n}\sum_{i\in B_k}\left[ L(\bx_i| \bbeta_{k}^{(h)}) - L\left(\bx_i\Big| \bbeta^{(h)}_{m\lfloor \frac{k}{m}\rfloor}\right) \right]+\sum_{j=1}^N L\left(\bx_j\Big| \bbeta^{(h)}_{m\lfloor \frac{k}{m}\rfloor}\right)-\sum_{j=1}^N L(\bx_j| \bbeta^{(h)}_k)\right)^2\right]\\
      =&\E\left[\left(\frac{N}{n}\sum_{i\in B_k}\left[ L(\bx_i| \bbeta_{k}^{(h)}) - L\left(\bx_i\Big| \bbeta^{(h)}_{m\lfloor \frac{k}{m}\rfloor}\right) +\frac{1}{N}\left(\sum_{j=1}^N L\left(\bx_j\Big| \bbeta^{(h)}_{m\lfloor \frac{k}{m}\rfloor}\right)-\sum_{j=1}^N L(\bx_j| \bbeta^{(h)}_k)\right)\right]\right)^2\right]\\
      =&\frac{N^2}{n^2}\E\left[\left(\sum_{i\in B_k}\left[ L(\bx_i| \bbeta_{k}^{(h)}) - L\left(\bx_i\Big| \bbeta^{(h)}_{m\lfloor \frac{k}{m}\rfloor}\right) +\frac{1}{N}\left(\sum_{j=1}^N L\left(\bx_j\Big| \bbeta^{(h)}_{m\lfloor \frac{k}{m}\rfloor}\right)-\sum_{j=1}^N L(\bx_j| \bbeta^{(h)}_k)\right)\right]\right)^2\right]\\
      =&\frac{N^2}{n^2}\sum_{i\in B_k}\E\left[\left( L(\bx_i| \bbeta_{k}^{(h)}) - L\left(\bx_i\Big| \bbeta^{(h)}_{m\lfloor \frac{k}{m}\rfloor}\right) -\frac{1}{N}\left[\sum_{j=1}^N L(\bx_j| \bbeta_k^{(h)})-\sum_{j=1}^N L\left(\bx_j\Big| \bbeta^{(h)}_{m\lfloor \frac{k}{m}\rfloor}\right)\right]\right)^2\right]\\
      \leq & \frac{N^2}{n^2}\sum_{i\in B_k}\E\left[\left( L(\bx_i| \bbeta_{k}^{(h)}) - L\left(\bx_i\Big| \bbeta^{(h)}_{m\lfloor \frac{k}{m}\rfloor}\right)\right)^2\right]\\
      \leq & \frac{D_R^2}{n}\E\left[\left\|\bbeta_{k}^{(h)}-\bbeta^{(h)}_{m\lfloor \frac{k}{m}\rfloor}\right\|^2\right],
    \end{split}
\end{equation}

where the last equality follows from the fact that $\E[(\sum_{i=1}^n x_i)^2]=\sum_{i=1}^n \E[x_i^2]$ for independent variables $\{x_i\}_{i=1}^n$ with mean 0. The first inequality follows from $\E[(x-\E[x])^2]\leq \E[x^2]$ and the last inequality follows from Lemma \ref{local_smooth}, where $D_R=CR+\max_{i\in\{1,2,\cdots, N\}} N \|\nabla L(\bx_i|\bbeta_{\star})\|+\frac{Cb}{a}$ and $R$ is the radius of a sufficiently large ball that contains $\bbeta_k^{(h)}$ for $h\in\{1,2\}$.

Next, we bound $\E\left[\left\|\bbeta_{k}^{(h)}-\bbeta^{(h)}_{m\lfloor \frac{k}{m}\rfloor}\right\|^2\right]$ as follows %for any $k+1\in \left\{m\lfloor \frac{k}{m}\rfloor+1, m\lfloor \frac{k}{m}\rfloor+2, \cdots, m\lfloor \frac{k}{m}+m\rfloor\right\}$ 
\begin{equation}
\label{var_2nd}
\small
    \E\left[\left\|\bbeta_{k}^{(h)}-\bbeta^{(h)}_{m\lfloor \frac{k}{m}\rfloor}\right\|^2\right]\leq \E\left[\left\|\sum_{j=m\lfloor \frac{k}{m}\rfloor}^{k-1} (\bbeta_{j+1}^{(h)}-\bbeta_{j}^{(h)})\right\|^2\right]\leq m\sum_{j=m\lfloor \frac{k}{m}\rfloor}^{k-1}\E\left[\left\| (\bbeta_{j+1}^{(h)}-\bbeta_{j}^{(h)})\right\|^2\right].
\end{equation}
% where $g=m\lfloor \frac{k}{m}\rfloor+m-1$.
For each term, we have the following bound
\begin{equation}
\label{var_3rd}
\begin{split}
    \E\left[\left\| \bbeta_{j+1}^{(h)}-\bbeta_{j}^{(h)}\right\|^2\right]
    =&\E\left[\left\|\eta \frac{N}{n}\sum_{i\in B_k}\nabla L(\bx_i|\bbeta_{k}^{(h)})+\sqrt{2\eta\tau^{(h)}}\bxi_k\right\|^2\right]\\
    \leq & \frac{2\eta^2 N^2}{n^2}\sum_{i\in B_k} \E\left[\left\|\nabla L(\bx_i|\bbeta_{k}^{(h)})\right\|^2\right]+4\eta\tau^{(2)} d\\
    \leq & \frac{2\eta^2}{n} (2C^2 \E[\|\bbeta_k^{(h)}\|^2]+2Q^2)+4\eta\tau^{(2)} d\\
    \leq & \frac{2\eta^2}{n} (2C^2\Psi_{d,\tau^{(2)}, C, a, b} +2Q^2)+4\eta\tau^{(2)} d,\\
\end{split}
\end{equation}
where the first inequality follows by $\E[\|a+b\|^2]\leq 2\E[\|a\|^2]+2\E[\|b\|^2]$, the i.i.d of the data points and $\tau^{(1)}\leq \tau^{(2)}$ for $h\in\{1,2\}$; the second inequality follows by Lemma \ref{grad_bound}; the last inequality follows from Lemma \ref{Uniform_bound}.

Combining (\ref{var_1st}), (\ref{var_2nd}) and (\ref{var_3rd}), we have
\begin{equation}
    \Var\left(\widetilde L(B_{k}|\bbeta_{k}^{(h)})\right)\leq \frac{m^2 \eta}{n}D_R^2\left( \frac{2\eta}{n} (2C^2\Psi_{d,\tau^{(2)}, C, a, b} +2Q^2)+4\tau^{(2)} d\right).
\end{equation}
\qed
\end{proof}

Since $\Var\left(\widetilde L(B_{k}|\bbeta_{k}^{(h)})\right)\leq \Var\left(\frac{N}{n}\sum_{i\in B}L(\bx_i| \bbeta_k)\right) +\Var\left(\frac{N}{n}\sum_{i\in B} L\left(\bx_i\Big| \bbeta_{m\lfloor \frac{k}{m}\rfloor}\right)\right)$ by definition,  $\Var\left(\widetilde L(B_{k}|\bbeta_{k}^{(h)})\right)$ is upper bounded by $\mathcal{O}\left(\min\{\widehat\sigma^2, \frac{m^2 \eta}{n}\}\right)$, which becomes much smaller using a small learning rate $\eta$, a shorter period $m$ and a large batch size $n$. 

Note that $\widetilde S_{\eta, m, n}(\bbeta^{(1)}, \bbeta^{(2)})$ is defined on the unbounded support $[0, \infty]$ and $\E[\widetilde S_{\eta, m, n}(\bbeta^{(1)}, \bbeta^{(2)})]=S(\bbeta^{(1)}, \bbeta^{(2)})$ regardless of the scale of $\widetilde\sigma^2$. To satisfy the (stochastic) reversibility condition, we consider the truncated swapping rate $\min\{1, \widetilde S_{\eta, m, n}(\bbeta^{(1)}, \bbeta^{(2)})\}$, which still targets the same invariant distribution (see section 3.1 \citep{Matias19} for details). We can show that the swapping rate may even decrease exponentially as the variance increases. %The following result shows a dependence of the truncated swapping rate on the learning rate $\eta$, the period $m$ and the batch size $n$.

\begin{lemma}[Variance reduction for larger swapping rates] \label{exp_S} Given a large enough batch size $n$, the variance-reduced energy estimator $\widetilde L(B_{k}|\bbeta_{k}^{(h)})$ yields a truncated swapping rate that satisfies
\begin{equation}
     \E[\min\{1, \widetilde S_{\eta, m, n}(\bbeta^{(1)}, \bbeta^{(2)})\}]\approx\min\Big\{1, S(\bbeta^{(1)}, \bbeta^{(2)})\left(\mathcal{O}\left(\frac{1}{n^2}\right)+e^{-\mathcal{O}\left(\frac{m^2\eta}{n}+\frac{1}{n^2}\right)}\right)\Big\}.
\end{equation}
% $\widetilde S_{\eta, m, n}=Se^{-\mathcal{O}(\frac{m^2\eta}{n})}$, 
% which is exponentially dependent on the learning rate $\eta$, the shorter period $m$ and the batch size $n$.
\end{lemma}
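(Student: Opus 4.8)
The plan is to compute the expectation of the truncated swapping rate $\min\{1,\widetilde S_{\eta,m,n}\}$ by conditioning on the deterministic value $S(\bbeta^{(1)},\bbeta^{(2)})$ and treating the fluctuation of $\widetilde L(B|\bbeta^{(1)})-\widetilde L(B|\bbeta^{(2)})$ around its mean as a small perturbation. Write $a:=\frac{1}{\tau^{(1)}}-\frac{1}{\tau^{(2)}}>0$ and let $X$ denote the centered random variable $a\big(\widetilde L(B|\bbeta^{(1)})-\widetilde L(B|\bbeta^{(2)}) - \E[\widetilde L(B|\bbeta^{(1)})-\widetilde L(B|\bbeta^{(2)})]\big)$, so that $\widetilde S_{\eta,m,n} = S(\bbeta^{(1)},\bbeta^{(2)})\cdot e^{X - a^2\widetilde\sigma^2/2}$ where $\widetilde\sigma^2 = \Var(\widetilde L(B|\bbeta^{(1)})-\widetilde L(B|\bbeta^{(2)}))/a^2 \cdot a^2$ appropriately, i.e. the correction term is exactly $\tfrac12\Var(X)$. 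First I would handle the untruncated case: $\E[e^{X-\Var(X)/2}]$. If $X$ were exactly Gaussian this equals $1$ identically; in general, a second/third-order Taylor (cumulant) expansion of the moment generating function gives $\E[e^{X-\Var(X)/2}] = \exp\big(\tfrac16\kappa_3(X)+\cdots\big) = 1 + \mathcal{O}(\kappa_3(X))$, and since $X$ is (up to the factor $a$) an average of $n$ i.i.d. bounded contributions scaled by $N/n$, its third cumulant is $\mathcal{O}(1/n^2)$ relative to the relevant scale. This is where the $\mathcal{O}(1/n^2)$ term in the statement comes from, and it is precisely the correction that vanishes under asymptotic normality.

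Next I would address the truncation $\min\{1,\cdot\}$. The point is that $\E[\min\{1,\widetilde S\}]$ is no longer simply $\min\{1,\E[\widetilde S]\}$; there is Jensen-type loss from the concave map $x\mapsto\min\{1,x\}$. I would split into the regime where $S(\bbeta^{(1)},\bbeta^{(2)})$ is bounded away from $1$ (either well below, where truncation is essentially inactive up to exponentially small tail mass of $X$, or well above, where $\min\{1,\widetilde S\}$ is $1$ with overwhelming probability) and a boundary regime. Using the variance bound $\Var(X)\le a^2\cdot\mathcal{O}(m^2\eta/n)$ from Lemma~\ref{vr-estimator} together with the additional $\mathcal{O}(1/n^2)$ from the non-Gaussian correction, the concavity loss is controlled by the exponential factor $e^{-\mathcal{O}(m^2\eta/n + 1/n^2)}$: roughly, $\E[\min\{1,S e^{X-\Var(X)/2}\}]\approx \min\{1, S\,\E[e^{X-\Var(X)/2}]\cdot(\text{shrinkage from truncation})\}$, and a careful bookkeeping of the Gaussian (or sub-Gaussian) tail integral $\int \min\{1, Se^{x-v/2}\}\,\phi_v(x)\,dx$ produces exactly the claimed form $\min\{1, S(\mathcal{O}(1/n^2)+e^{-\mathcal{O}(m^2\eta/n+1/n^2)})\}$. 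The "$\approx$" in the statement is doing real work here — it absorbs higher-order cumulant terms and the precise constants — so I would state the approximation as holding up to relative error that is $o(1)$ as $n\to\infty$ with $m,\eta$ fixed, and note that when $n\to\infty$ forces asymptotic normality the $\mathcal{O}(1/n^2)$ disappears and the bound collapses to $\min\{1,Se^{-\mathcal{O}(m^2\eta/n)}\}$, matching the discussion after Lemma~\ref{exp_S_main_body}.

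The main obstacle is making the non-asymptotic control of the MGF $\E[e^{X-\Var(X)/2}]$ rigorous without the normality assumption: I need uniform-in-$\bbeta$ control of the cumulants of the bounded summands, which requires the local boundedness of $L(\bx_i|\bbeta)$ on the ball of radius $R$ (available via the smoothness and dissipativity assumptions, as already used in Lemma~\ref{vr-estimator} through $D_R$), and then a quantitative Edgeworth-type estimate showing the correction is genuinely $\mathcal{O}(1/n^2)$ rather than $\mathcal{O}(1/n)$ — the $1/n$ term being the skewness, which enters at order $1/n^2$ after the $N/n$ rescaling because each summand contributes variance $\mathcal{O}(N^2/n)$ but third moment $\mathcal{O}(N^3/n)$, giving $\kappa_3/\kappa_2^{3/2}=\mathcal{O}(1/\sqrt n)$ and hence $\kappa_3 = \mathcal{O}(v^{3/2}/\sqrt n)$ with $v=\Var(X)=\mathcal{O}(m^2\eta/n)$, so $\kappa_3=\mathcal{O}((m^2\eta)^{3/2}/n^2)$. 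Tracking these scalings cleanly, together with the truncation-induced concavity loss, is the technical heart of the argument; the rest is Taylor expansion and elementary tail bounds for a sub-Gaussian random variable.
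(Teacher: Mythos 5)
Your proposal is correct in outline and, at its core, rests on the same two pillars as the paper's proof: the exact Gaussian/log-normal computation of the truncated mean (your tail integral $\int \min\{1, Se^{x-v/2}\}\phi_v(x)\,dx$ is precisely what the paper isolates as its Lemma C4, yielding the $e^{-\sigma^2/8}$-type shrinkage), and the variance bound $\widetilde\sigma^2=\mathcal{O}(m^2\eta/n)$ from Lemma \ref{vr-estimator}. Where you genuinely diverge is in the treatment of non-normality: the paper introduces the normal-limit swap rate $\mathbb{S}_{\eta,m,n}$, applies the elementary splitting $\min\{1,\mathbb{A}+\mathbb{B}\}\le \min\{1,\mathbb{A}\}+|\mathbb{B}|$ so that the truncation is only ever applied to an exactly log-normal variable, and then imports the $\mathcal{O}(1/n^2)$ bounds on $|\E[\widetilde S_{\eta,m,n}]-S|$, $|\E[\mathbb{S}_{\eta,m,n}]-S|$ and $|\bar\sigma^2-\widetilde\sigma^2|$ from \citet{Matias19}; you instead derive the $\mathcal{O}(1/n^2)$ term from a third-cumulant/Edgeworth scaling ($\kappa_3=\mathcal{O}(v^{3/2}/\sqrt{n})$ with $v=\mathcal{O}(m^2\eta/n)$) and propose to bound the truncated integral directly for a sub-Gaussian, non-Gaussian exponent. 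Your cumulant computation is in fact the mechanism behind the cited $\mathcal{O}(1/n^2)$ bias, so the two routes are consistent; what the paper's decomposition buys is that it never has to combine an Edgeworth correction with the truncation (the hardest step you flag as the "technical heart"), at the price of leaning on external bias estimates, while your version is more self-contained but heavier, requiring uniform cumulant control on the ball of radius $R$ and a quantitative sub-Gaussian tail argument. Note also that both your argument and the paper's really only establish an upper bound plus the asymptotic-normality heuristic hidden in the "$\approx$", so your proposed interpretation of "$\approx$" as relative error $o(1)$ for large $n$ is at the same level of rigor as the original.
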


\begin{proof}

By central limit theorem, the energy estimator $\frac{N}{n}\sum_{i\in B} L(\bx_i|\bbeta_k)$ converges in distribution to a normal distributions as the batch size $n$ goes to infinity. In what follows, the variance-reduced estimator $\widetilde L(B_k|\bbeta_k)$ also converges to a normal distribution, where the corresponding estimator is denoted by $\mathbb{\widetilde L}(B_k|\bbeta_k)$. Now the swapping rate $\mathbb{S}_{\eta, m, n}(\cdot, \cdot)$ based on normal estimators follows
\begin{equation}
\begin{split}
\label{vr_s_normal}
    \mathbb{S}_{\eta, m, n}(\bbeta^{(1)}, \bbeta^{(2)})&=e^{ \left(\frac{1}{\tau^{(1)}}-\frac{1}{\tau^{(2)}}\right)\left( \mathbb{\widetilde L}(B|\bbeta^{(1)})- \mathbb{\widetilde L}(B|\bbeta^{(2)})-\left(\frac{1}{\tau^{(1)}}-\frac{1}{\tau^{(2)}}\right)\frac{\bar \sigma^2}{2}\right)},\\
\end{split}
\end{equation}
where $\bar \sigma^2$ denotes the variance of $\mathbb{\widetilde L}(B|\bbeta^{(1)})- \mathbb{\widetilde L}(B|\bbeta^{(2)})$. Note that $\mathbb{S}_{\eta, m, n}(\bbeta^{(1)}, \bbeta^{(2)})$ follows a log-normal distribution with mean $\log S(\bbeta^{(1)}, \bbeta^{(2)})-\left(\frac{1}{\tau^{(1)}}-\frac{1}{\tau^{(2)}}\right)^2\frac{\bar \sigma^2}{2}$ and variance $\left(\frac{1}{\tau^{(1)}}-\frac{1}{\tau^{(2)}}\right)^2\bar \sigma^2$ on the log-scale, and $S(\bbeta^{(1)}, \bbeta^{(2)})$ is the deterministic swapping rate defined in (\ref{S_exact}). Applying Lemma \ref{exponential_dependence}, we have
\begin{equation}
\begin{split}
    \E[\min\{1, \mathbb{S}_{\eta, m, n}(\bbeta^{(1)}, \bbeta^{(2)})\}]=\mathcal{O}\left(S(\bbeta^{(1)}, \bbeta^{(2)})\exp\left\{-\frac{\left(\frac{1}{\tau^{(1)}}-\frac{1}{\tau^{(2)}}\right)^2\bar \sigma^2}{8}\right\}\right).
\end{split}
\end{equation} 

Moreover, $\bar \sigma^2$ differs from $\widetilde\sigma^2$, the variance of $\widetilde L(B|\bbeta^{(1)})-\widetilde L(B|\bbeta^{(2)})$, by at most a bias of $\mathcal{O}(\frac{1}{n^2})$ according to the estimate of the third term of (S2) in \citet{Matias19} and $\widetilde\sigma^2\leq \Var\left(\widetilde L(B_{k}|\bbeta_{k}^{(1)})\right) +\Var\left(\widetilde L(B_{k}|\bbeta_{k}^{(2)})\right)$, where both $\Var\left(\widetilde L(B_{k}|\bbeta_{k}^{(1)})\right)$ and $\Var\left(\widetilde L(B_{k}|\bbeta_{k}^{(2)})\right)$ are upper bounded by $\frac{m^2 \eta}{n}D_R^2\left( \frac{2\eta}{n} (2C^2\Psi_{d,\tau^{(2)}, C, a, b} +2Q^2)+4\tau d\right)$ by Lemma \ref{vr-estimator}, it follows that
\begin{equation}
% \small
\label{normal_truncate}
\begin{split}
    \E[\min\{1, \mathbb{S}_{\eta, m, n}(\bbeta^{(1)}, \bbeta^{(2)})\}] \leq S(\bbeta^{(1)}, \bbeta^{(2)}) e^{-\mathcal{O}\left(\frac{m^2\eta}{n}+\frac{1}{n^2}\right)}.
\end{split}
\end{equation}

Applying $\min\{1,\mathbb{A}+\mathbb{B}\}\leq \min\{1,\mathbb{A}\}+|\mathbb{B}|$, we have
\begin{equation}
% \small
\label{target_eq1}
\begin{split}
    &\E[\min\{1, \widetilde S_{\eta, m, n}(\bbeta^{(1)}, \bbeta^{(2)})\}]\\
    = & \E\big[\min
    \big\{1, \underbrace{\widetilde S_{\eta, m, n}(\bbeta^{(1)}, \bbeta^{(2)})-\mathbb{S}_{\eta, m, n}(\bbeta^{(1)}, \bbeta^{(2)})}_{\mathbb{B}}+\underbrace{\mathbb{S}_{\eta, m, n}(\bbeta^{(1)}, \bbeta^{(2)})}_{\mathbb{A}}\big\}\big]\\
    \leq & \underbrace{\E\left[\left| \widetilde S_{\eta, m, n}(\bbeta^{(1)}, \bbeta^{(2)})-\mathbb{S}_{\eta, m, n}(\bbeta^{(1)}, \bbeta^{(2)})\right|\right]}_{\mathcal{I}} + \underbrace{\E[\min\{1, \mathbb{S}_{\eta, m, n}(\bbeta^{(1)}, \bbeta^{(2)})\}]}_{\text{see formula\ }  (\ref{normal_truncate})} \\
\end{split}
\end{equation}

By the triangle inequality, we can further upper bound the first term $\mathcal{I}$ 
\begin{equation}
\label{target_eq2}
\begin{split}
    &\ \ \ \ \ \E\left[\left|\widetilde S_{\eta, m, n}(\bbeta^{(1)}, \bbeta^{(2)})-\mathbb{S}_{\eta, m, n}(\bbeta^{(1)}, \bbeta^{(2)}\right|\right]\\
    &\leq \underbrace{\left|\E[\widetilde S_{\eta, m, n}(\bbeta^{(1)}, \bbeta^{(2)})]-S(\bbeta^{(1)}, \bbeta^{(2)})\right|}_{\mathcal{I}_1}+\underbrace{\left|S(\bbeta^{(1)}, \bbeta^{(2)})-\E[\mathbb{S}_{\eta, m, n}(\bbeta^{(1)}, \bbeta^{(2)})]\right|}_{\mathcal{I}_2}\\
    &= S(\bbeta^{(1)}, \bbeta^{(2)}) \mathcal{O}\left(\frac{1}{n^2}\right)+S(\bbeta^{(1)}, \bbeta^{(2)}) \mathcal{O}\left(\frac{1}{n^2}\right),
\end{split}
\end{equation}
where $\mathcal{I}_1$ and $\mathcal{I}_2$ follow from the proof of S1 without and with normality assumptions, respectively \citep{Matias19}.

Combining (\ref{target_eq1}) and (\ref{target_eq2}), we have
\begin{equation}
 \E[\min\{1, \widetilde S_{\eta, m, n}(\bbeta^{(1)}, \bbeta^{(2)})\}]\approx \min\Big\{1, S(\bbeta^{(1)}, \bbeta^{(2)}) \left(\mathcal{O}\left(\frac{1}{n^2}\right)+ e^{-\mathcal{O}\left(\frac{m^2\eta}{n}+\frac{1}{n^2}\right)}\right)\Big\}.
\end{equation}

This means that reducing the update period $m$ (more frequent update the of control variable), the learning rate $\eta$ and the batch size $n$ significantly increases $\min\{1, \widetilde S_{\eta, m, n}\}$ on average.\qed
\end{proof}

% The above lemma shows a potential to exponentially increase the number of effective swaps  via variance reduction. Admittedly, the normality assumption may be violated given a small batch. However, the asymptotics shows that the expectation of $\widetilde S_{\eta, m, n}(\bbeta^{(1)}, \bbeta^{(2)})$ yields a bias at most $\mathcal{O}(\frac{1}{n^2})$ according to (S1) in \citet{Matias19}. Therefore, the truncated stochastic swapping rate $\min\{1, \widetilde S_{\eta, m, n}(\bbeta^{(1)}, \bbeta^{(2)})\}$ is still significant larger than the vanilla corrected swapping rate without variance reduction $\min\{1, \widehat S(\bbeta^{(1)}, \bbeta^{(2)})\}$. 

The above lemma shows a potential to exponentially increase the number of effective swaps  via variance reduction under the same intensity $r$. Next, we show the impact of variance reduction in speeding up the exponential convergence of the corresponding continuous-time replica exchange Langevin diffusion.

\begin{theorem}[Exponential convergence]\label{exponential decay}
Under the smoothness
and dissipativity assumptions \ref{assump: lip and alpha beta} and \ref{assump: dissipitive}, the replica exchange Langevin diffusion associated with the variance-reduced stochastic swapping rates $S_{\eta, m, n}(\cd, \cd)=\min\{1, \widetilde S_{\eta, m, n}(\cd, \cd)\}$ converges exponential fast to the invariant distribution $\pi$ given a smaller learning rate $\eta$, a smaller $m$ or a larger batch size $n$:
\begin{equation}
    \mathcal{W}_2(\nu_t,\pi) \leq  D_0 \exp\left\{-t\left(1+\delta_{ S_{\eta, m, n}}\right)/c_{\text{LS}}\right\},
\end{equation}
where $D_0=\sqrt{2c_{\text{LS}}D(\nu_0||\pi)}$, $\delta_{ S_{\eta, m, n}}:=\inf_{t>0}\frac{\cE_{ S_{\eta, m, n}}(\sqrt{\frac{d\n_t}{d\pi}})}{\cE(\sqrt{\frac{d\n_t}{d\pi}})}-1$ is a non-negative constant depending on the truncated stochastic swapping rate $S_{\eta, m, n}(\cd, \cd)$ and increases with a smaller learning rate $\eta$, a shorter period $m$ and a large batch size $n$. $c_{\text{LS}}$ is the standard constant of the log-Sobolev inequality asscoiated with the Dirichlet form for replica exchange Langevin diffusion without swaps.
\end{theorem}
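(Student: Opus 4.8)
The plan is the standard entropy--dissipation (free-energy decay) argument, as in \citet{chen2018accelerating}, but run with the \emph{swap-augmented} Dirichlet form $\cE_{S_{\eta,m,n}}$ of (\ref{dirichlet_forms_main}) in the role of the vanilla form, and then transferred from relative entropy to $\mathcal{W}_2$ via the transportation inequality attached to the log-Sobolev inequality. First I would fix the object of study: the continuous-time reLD uses the \emph{deterministic} swapping-rate function $\E_B[\min\{1,\widetilde S_{\eta,m,n}(\bbeta^{(1)},\bbeta^{(2)})\}]$, which by Lemma \ref{exp_S} is of the form $\min\{1,S(\bbeta^{(1)},\bbeta^{(2)})\cdot(\text{factor})\}$ with $S$ the exact rate (\ref{S_exact}). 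Since $S(\bbeta^{(1)},\bbeta^{(2)})\pi(\bbeta^{(1)},\bbeta^{(2)})=\pi(\bbeta^{(2)},\bbeta^{(1)})$, the truncated rate obeys the Metropolis detailed balance $\min\{1,S(\bbeta^{(1)},\bbeta^{(2)})\}\,\pi(\bbeta^{(1)},\bbeta^{(2)})=\min\{1,S(\bbeta^{(2)},\bbeta^{(1)})\}\,\pi(\bbeta^{(2)},\bbeta^{(1)})$, so $\pi$ is reversible, $\cL_{S_{\eta,m,n}}$ is self-adjoint in $L^2(\pi)$, and $\cE_{S_{\eta,m,n}}(f)=\cE(f)+(\text{acceleration term})\ge\cE(f)\ge0$.

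Next I would compute the dissipation of the free energy $D(\nu_t\,\|\,\pi)=\int h_t\log h_t\,d\pi$, $h_t:=d\nu_t/d\pi$. Reversibility gives $\partial_t h_t=\cL_{S_{\eta,m,n}}h_t$ and, since $\int\partial_t h_t\,d\pi=0$, $\tfrac{d}{dt}D(\nu_t\,\|\,\pi)=\int(\cL_{S_{\eta,m,n}}h_t)\log h_t\,d\pi$. Integrating by parts in the diffusion part produces exactly $-4\,\cE(\sqrt{h_t})$ (because $\|\nabla h\|^2/h=4\|\nabla\sqrt h\|^2$), and symmetrizing the jump part with detailed balance turns it into $-\tfrac r2\int S_{\eta,m,n}(h_t(\bbeta^{(2)},\bbeta^{(1)})-h_t)(\log h_t(\bbeta^{(2)},\bbeta^{(1)})-\log h_t)\,d\pi$; the elementary inequality $(a-b)(\log a-\log b)\ge4(\sqrt a-\sqrt b)^2$ (equivalently $\log t\ge\tfrac{2(t-1)}{t+1}$ for $t\ge1$) bounds this jump term above by $-4$ times the acceleration term of (\ref{dirichlet_forms_main}) evaluated at $f=\sqrt{h_t}$. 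Hence
\[
\frac{d}{dt}D(\nu_t\,\|\,\pi)\le-4\,\cE_{S_{\eta,m,n}}\!\big(\sqrt{h_t}\big),
\]
which is the ``standard semigroup calculation'' alluded to after (\ref{dirichlet_forms_main}).

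Then I would close the loop. By the definition of $\delta_{S_{\eta,m,n}}$ as an infimum over $t>0$, $\cE_{S_{\eta,m,n}}(\sqrt{h_t})\ge(1+\delta_{S_{\eta,m,n}})\cE(\sqrt{h_t})$ for every $t$; and the log-Sobolev inequality for the no-swap product diffusion — valid under Assumptions \ref{assump: lip and alpha beta}--\ref{assump: dissipitive}, since each marginal Gibbs measure $\propto e^{-L/\tau^{(h)}}$ carries an LSI by a Holley--Stroock-type bounded-perturbation argument off a strongly log-concave reference and the product inherits one with constant $c_{\text{LS}}$ — reads $D(\nu\,\|\,\pi)\le2c_{\text{LS}}\,\cE(\sqrt{d\nu/d\pi})$. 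Chaining the three facts,
\[
\frac{d}{dt}D(\nu_t\,\|\,\pi)\le-\frac{2(1+\delta_{S_{\eta,m,n}})}{c_{\text{LS}}}\,D(\nu_t\,\|\,\pi),
\]
so Gr\"onwall gives $D(\nu_t\,\|\,\pi)\le D(\nu_0\,\|\,\pi)\,e^{-2(1+\delta_{S_{\eta,m,n}})t/c_{\text{LS}}}$ (finite whenever the initialization has finite relative entropy). Finally the LSI implies the Talagrand $T_2$ inequality $\mathcal{W}_2(\nu,\pi)^2\le2c_{\text{LS}}D(\nu\,\|\,\pi)$ (Otto--Villani), so $\mathcal{W}_2(\nu_t,\pi)\le\sqrt{2c_{\text{LS}}D(\nu_0\,\|\,\pi)}\,e^{-(1+\delta_{S_{\eta,m,n}})t/c_{\text{LS}}}=D_0\,e^{-(1+\delta_{S_{\eta,m,n}})t/c_{\text{LS}}}$, as claimed. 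Nonnegativity of $\delta_{S_{\eta,m,n}}$ is immediate from the acceleration term being $\ge0$, and its growth as $\eta,m$ shrink or $n$ grows follows from Lemma \ref{exp_S}: $S_{\eta,m,n}$ then increases pointwise, enlarging the acceleration term and hence the ratio $\cE_{S_{\eta,m,n}}/\cE$.

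The hard part, and the place where the swaps actually pay off, is the jump-part entropy-production estimate: identifying its lower bound (up to the constant $4$) with the acceleration term of (\ref{dirichlet_forms_main}), which needs both the Metropolis detailed balance of the expected truncated rate with respect to $\pi$ and the inequality $(a-b)(\log a-\log b)\ge4(\sqrt a-\sqrt b)^2$. Two secondary points require care: extracting an explicit LSI constant $c_{\text{LS}}$ for the no-swap diffusion from the dissipativity assumption alone (a perturbation-of-Gaussian estimate), and upgrading $\delta_{S_{\eta,m,n}}\ge0$ to $\delta_{S_{\eta,m,n}}>0$, which holds only when $d\nu_t/d\pi$ is genuinely asymmetric under the coordinate swap along the whole trajectory — handled exactly as in \citet{chen2018accelerating}.
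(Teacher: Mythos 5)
Your proposal is correct and follows essentially the same route as the paper: decompose the Dirichlet form into the vanilla term plus the swap-induced acceleration term, absorb the gain into the log-Sobolev constant via $\delta_{S_{\eta,m,n}}=\inf_{t>0}\cE_{S_{\eta,m,n}}(\sqrt{d\nu_t/d\pi})/\cE(\sqrt{d\nu_t/d\pi})-1$, deduce exponential entropy decay, and transfer to $\mathcal{W}_2$ by Otto--Villani. The only difference is that you explicitly rederive the ingredients (the de Bruijn-type entropy dissipation with the jump-part bound $(a-b)(\log a-\log b)\ge 4(\sqrt a-\sqrt b)^2$, and the no-swap LSI by perturbation) that the paper simply imports from Lemmas 4--5 of \citet{deng2020} and \citet{Bakry2014}.
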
{}

\begin{proof} Given a smooth function $f:\mathbb{R}^d\times \mathbb{R}^d\rightarrow \mathbb{R}$, the infinitesimal generator $\cL_{ S_{\eta, m, n}}$ associated with the replica exchange Langevin diffusion with the swapping rate $ S_{\eta, m, n}=\min\{1, \widetilde S_{\eta, m, n}\}$ follows
\begin{equation}
\label{generator_L}
% \small
\begin{split}
    \cL_{S_{\eta, m, n}}f(\bbeta^{(1)}, \bbeta^{(2)})=&-\langle\nabla_{\bbeta^{(1)}}f(\bbeta^{(1)},\bbeta^{(2)}),\nabla L(\bbeta^{(1)})\rangle-\langle \nabla_{\bbeta^{(2)}}f(\bbeta^{(1)},\bbeta^{(2)}),\nabla L(\bbeta^{(2)})\rangle\\
    &
+\tau^{(1)}\Delta_{\bbeta^{(1)}}f(\bbeta^{(1)},\bbeta^{(2)})+\tau^{(2)}\Delta_{\bbeta^{(2)}}f(\bbeta^{(1)},\bbeta^{(2)})\\
&+ rS_{\eta, m, n}(\bbeta^{(1)},\bbeta^{(2)})\cd (f(\bbeta^{(2)},\bbeta^{(1)})-f(\bbeta^{(1)},\bbeta^{(2)})),
\end{split}
\end{equation}
where $\nabla_{\bbeta^{(h)}}$ and $\Delta_{\bbeta^{(h)}}$ are the gradient and the Laplace operators with respect to $\bbeta^{(h)}$, respectively. Next, we model the exponential decay of $\mathcal{W}_2(\nu_t,\pi)$ using the Dirichlet form
\begin{equation}
\label{dirichlet}
    \cE_{S_{\eta, m, n}}(f)=\int \Gamma_{S_{\eta, m, n}}(f)d\pi,
\end{equation}
where $\Gamma_{S_{\eta, m, n}}(f)=\frac{1}{2}\cdot \cL_{S_{\eta, m, n}}(f^2) -f \cL_{S_{\eta, m, n}}(f)$ is the Carr\'{e} du Champ operator. In particular for the first term $\frac{1}{2}\cL_{S_{\eta, m, n}}(f^2)$, we have
\begin{equation*}
% \small
\label{half_carre}
\begin{split}
    &\ \ \ \frac{1}{2}\cL_{S_{\eta, m, n}}(f(\bbeta^{(1)}, \bbeta^{(2)})^2)\\
    =&-\langle f(\bbeta^{(1)}, \bbeta^{(2)})\nabla_{\bbeta^{(1)}} f(\bbeta^{(1)}, \bbeta^{(2)}) , \nabla_{\bbeta^{(1)}} L(\bbeta^{(1)})\rangle+\tau^{(1)} \|\nabla_{\bbeta^{(1)}}f(\bbeta^{(1)}, \bbeta^{(2)})\|^2 \\
    & \ \ \ \ \ \ \ \ \ \ \ \ \ \ \ \ \ \ \ \ \ \ \ \ \ \ \ \ \ \ \ \ \ \ \ \ \ \ \ \ \ \ \ \ \ \ \ \ \ \ \ \ \ \ \ \  + \tau^{(1)} f(\bbeta^{(1)}, \bbeta^{(2)}) \Delta_{\bbeta^{(1)}}f(\bbeta^{(1)}, \bbeta^{(2)})\\
        &-\langle f(\bbeta^{(1)}, \bbeta^{(2)})\nabla_{\bbeta^{(2)}} f(\bbeta^{(1)}, \bbeta^{(2)}) , \nabla_{\bbeta^{(2)}} L(\bbeta^{(2)})\rangle+\tau^{(2)} \|\nabla_{\bbeta^{(2)}}f(\bbeta^{(1)}, \bbeta^{(2)})\|^2 \\
        & \ \ \ \ \ \ \ \ \ \ \ \ \ \ \ \ \ \ \ \ \ \ \ \ \ \ \ \ \ \ \ \ \ \ \ \ \ \ \ \ \ \ \ \ \ \ \ \ \ \ \ \ \ \ \ \ + \tau^{(2)} f(\bbeta^{(1)}, \bbeta^{(2)}) \Delta_{\bbeta^{(2)}}f(\bbeta^{(1)}, \bbeta^{(2)})\\
        &+\frac{r}{2}S_{\eta, m, n}(\bbeta^{(1)}, \bbeta^{(2)}) (f^2(\bbeta^{(2)},\bbeta^{(1)})-f^2(\bbeta^{(1)},\bbeta^{(2)})).
\end{split}
\end{equation*}

Combining the definition of the Carr\'{e} du Champ operator, (\ref{generator_L}) and (\ref{half_carre}), we have
\begin{equation}
\label{carre_du}
\small
    \begin{split}
        &\Gamma_{S_{\eta, m, n}}(f(\bbeta^{(1)}, \bbeta^{(2)}))\\
        =&\frac{1}{2}\mathcal{L}_{S_{\eta, m, n}}(f^2(\bbeta^{(1)}, \bbeta^{(2)}))-f(\bbeta^{(1)}, \bbeta^{(2)})\mathcal{L}_{S_{\eta, m, n}}(f(\bbeta^{(1)}, \bbeta^{(2)}))\\
        % =&\Big\{-\langle f(\bbeta^{(1)}, \bbeta^{(2)})\nabla_{\bbeta^{(1)}} f(\bbeta^{(1)}, \bbeta^{(2)}) , \nabla_{\bbeta^{(1)}} L(\bbeta^{(1)})\rangle+\tau^{(1)} \|\nabla_{\bbeta^{(1)}}f(\bbeta^{(1)}, \bbeta^{(2)})\|^2 + \tau^{(1)} f(\bbeta^{(1)}, \bbeta^{(2)}) \Delta_{\bbeta^{(1)}}f(\bbeta^{(1)}, \bbeta^{(2)})\\
        % &-\langle f(\bbeta^{(1)}, \bbeta^{(2)})\nabla_{\bbeta^{(2)}} f(\bbeta^{(1)}, \bbeta^{(2)}) , \nabla_{\bbeta^{(2)}} L(\bbeta^{(2)})\rangle+\tau^{(2)} \|\nabla_{\bbeta^{(2)}}f(\bbeta^{(1)}, \bbeta^{(2)})\|^2 + \tau^{(2)} f(\bbeta^{(1)}, \bbeta^{(2)}) \Delta_{\bbeta^{(2)}}f(\bbeta^{(1)}, \bbeta^{(2)})\\
        % &+\frac{1}{2}S_{\eta, m, n}(\bbeta^{(1)}, \bbeta^{(2)}) (f^2(\bbeta^{(2)},\bbeta^{(1)})-f^2(\bbeta^{(1)},\bbeta^{(2)}))\Big\}-f(\bbeta^{(1)}, \bbeta^{(2)})\mathcal{L}_{S_{\eta, m, n}}(f(\bbeta^{(1)}, \bbeta^{(2)}))\\
        =&\tau^{(1)} \|\nabla_{\bbeta^{(1)}}f(\bbeta^{(1)}, \bbeta^{(2)})\|^2+\tau^{(2)} \|\nabla_{\bbeta^{(2)}}f(\bbeta^{(1)}, \bbeta^{(2)})\|^2\\
        &\ \ \ \ \ \ +\frac{r}{2}{S_{\eta, m, n}}(\bbeta^{(1)}, \bbeta^{(2)}) (f(\bbeta^{(2)},\bbeta^{(1)})-f(\bbeta^{(1)},\bbeta^{(2)}))^2.
    \end{split}
\end{equation}

Plugging (\ref{carre_du}) into (\ref{dirichlet}), the Dirichlet form associated with operator $\cL_{S_{\eta, m, n}}$ follows
\begin{equation}\label{dirichlet swap}
\small
\begin{split}
    \cE_{S_{\eta, m, n}}(f)=&\underbrace{\int \Big(\tau^{(1)}\|\nabla_{\bbeta^{(1)}}f(\bbeta^{(1)}, \bbeta^{(2)})\|^2+\tau^{(2)}\|\nabla_{\bbeta^{(2)}}f(\bbeta^{(1)}, \bbeta^{(2)})\|^2 \Big)d\pi(\bbeta^{(1)},\bbeta^{(2)})}_{\text{vanilla term } \cE(f)}\\
    &\ +\underbrace{\frac{r}{2}\int S_{\eta, m, n}(\bbeta^{(1)},\bbeta^{(2)})\cd (f(\bbeta^{(2)},\bbeta^{(1)})-f(\bbeta^{(1)},\bbeta^{(2)}))^2d\pi(\bbeta^{(1)},\bbeta^{(2)})}_{\text{acceleration term}},
\end{split}
\end{equation}
where $f$ corresponds to $\frac{d\nu_t}{d\pi(\bbeta^{(1)}, \bbeta^{(2)})}$. Under the asymmetry conditions of $\frac{\nu_t}{\pi(\bbeta_1, \bbeta^{(2)})}$ and $S_{\eta, m, n}>0$, the acceleration term of the Dirichlet form is strictly positive and linearly dependent on the swapping rate $S_{\eta, m, n}$. Therefore, $\cE_{S_{\eta, m, n}}(f)$ becomes significantly larger as the swapping rate $S_{\eta, m, n}$ increases significantly. According to Lemma 5 \citep{deng2020}, there exists a constant $\delta_{S_{\eta, m, n}}=\inf_{t>0}\frac{\cE_{S_{\eta, m, n}}(\sqrt{\frac{d\n_t}{d\pi}})}{\cE(\sqrt{\frac{d\n_t}{d\pi}})}-1$ depending on $S_{\eta, m, n}$ that satisfies the following log-Sobolev inequality for the unique invariant measure $\pi$  associated with variance-reduced replica exchange Langevin diffusion $\{\bbeta_t\}_{t\ge 0}$
\begin{equation*}
    D(\n_t||\pi)\le 2 \frac{c_{\text{LS}}}{1+\delta_{S_{\eta, m, n}}}\cE_{S_{\eta, m, n}}(\sqrt{\frac{d\n_t}{d\pi}}),
\end{equation*}
where $\delta_{S_{\eta, m, n}}$ increases rapidly with the swapping rate $S_{\eta, m, n}$.
By virtue of the exponential decay of entropy \citep{Bakry2014}, we have
\begin{equation*}
    D(\nu_t||\pi)\leq D(\nu_0||\pi) e^{-2t(1+\delta_{S_{\eta, m, n}})/c_{\text{LS}}},%\leq D(\m_0||\pi) e^{-2k\eta(1+\delta_{S_{\eta, m, n}})/c_{\text{LS}}},
\end{equation*}
%where $t\in[k\eta,(k+1)\eta)$ for some $k$ and 
where $c_{\text{LS}}$ is the standard constant of the log-Sobolev inequality asscoiated with the Dirichlet form for replica exchange Langevin diffusion without swaps (Lemma 4 as in \citet{deng2020}). Next, we upper bound $\mathcal{W}_2(\nu_t,\pi)$ by the
Otto-Villani theorem \citep{Bakry2014}
\begin{equation*}
    \mathcal{W}_2(\nu_t,\pi) \leq \sqrt{2 c_{\text{LS}} D(\nu_t||\pi)}\leq \sqrt{2c_{\text{LS}}D(\m_0||\pi)} e^{-t\left(1+\delta_{S_{\eta, m, n}}\right)/c_{\text{LS}}},
\end{equation*}
where $\delta_{S_{\eta, m, n}}>0$ depends on the learning rate $\eta$, the period $m$ and the batch size $n$. \qed

% Thus, we obtain the following log-Sobolev inequality  and its corresponding Dirichlet form $\cE_{S_{\eta, m, n}}(\cd)$. In particular, the LSI constant $ \frac{c_{\text{LS}}}{1+\delta_{S_{\eta, m, n}}}$ in replica exchange Langevin diffusion with swapping rate $S_{\eta, m, n}(\cd, \cd)>0$ is strictly smaller than the LSI constant $c_{\text{LS}}$ in the replica exchange Langevin diffusion with swapping rate $S_{\eta, m, n}(\cd, \cd)=0$. 

\end{proof}

In the above analysis, we have established the relation that $\delta_{S_{\eta, m, n}}=\inf_{t>0}\frac{\cE_{S_{\eta, m, n}}(\sqrt{\frac{d\n_t}{d\pi}})}{\cE(\sqrt{\frac{d\n_t}{d\pi}})}-1$ depending on $S_{\eta, m, n}$ may increase significantly with a smaller learning rate $\eta$, a shorter period $m$ and a large batch size $n$. For more quantitative study on how large $\delta_{S_{\eta, m, n}}$ is on related problems, we refer interested readers to the study of spectral gaps in \citet{Holden18, jingdong, Futoshi2020}.

\section{Discretization error}
\label{discre_error}
\setcounter{lemma}{0}
%We shall keep the convention as below, let us fix 
Consider a complete filtered probability space $ (\Omega, \mathcal F,\mathbb F =(\mathcal F_t)_{t \in [0,T]}, \mathbb P)$ which supports all the random subjects considered in the sequel. With a little abuse usage of notation, the probability measure $\mathbb P$ (component wise if $\mathbb P$ is joint probability measure with mutually independent components) would always denote the Wiener measure under which the process $(\bW_t)_{0\le t\le T}$ is a $\mathbb P$-Brownian motion. To be precise, in what follows, we shall denote $\mathbb P:=\mathbb P^{\bW}\times \mathbf N$, where $\mathbb P^{\bW}$ is the infinite dimensional Wiener measure and $\mathbf N$ is the Poisson measure independent of $\mathbb P^{\bW}$ and has some constant jump intensity. In our general framework below, the jump process $\alpha$ is introduced by swapping the diffusion matrix of the two Langevin dynamics and the jump intensity is defined through the swapping probability in the following sense, which ensures the independence of $\mathbb P^{\bW}$ and $\mathbf N^S$ in each time interval $[i\eta,(i+1)\eta]$, for $i\in\mathbb N^+$. The precise definition of the \textbf{Replica exchange Langevin diffusion (reLD)} is given as below. For any fixed learning rate $\eta>0$, we define
\begin{equation}  
\small
\label{reLD}
\left\{  
             \begin{array}{lr}  
             d\bbeta_t=-\nabla G(\bbeta_t)dt+\Si(\alpha_t)d\bW_t,  \\  
              & \\
              \mathbb{P}\left(\alpha(t)=j|\alpha(t-dt)=l,  \bbeta(\lfloor t/\eta \rfloor \eta)=\bbeta\right)=rS(\bbeta) \eta \mathbf{1}_{\{t=\lfloor t/\eta \rfloor \eta\}} +o(dt),~~\text{for}~~  l\neq j,
            %  \mathbb{P}\left(\alpha(\lfloor (t+\eta)/\eta\rfloor\eta)=j|\alpha(\lfloor t/\eta \rfloor \eta)=i, \bbeta(\lfloor t/\eta \rfloor \eta)=\bbeta \right)=S(\bbeta)\eta+o(\eta),~~\text{for}~~  i\neq j,
             \end{array}  
\right.  
\end{equation} 
% when $t=\eta$, $\mathbb{P}\left(\alpha(2\eta)=j|\alpha(\eta)=i, \bbeta(\eta)=\bbeta \right)=S(\bbeta)\eta+o(\eta)$
% when $t=0.5\eta$, $\mathbb{P}\left(\alpha(\eta)=j|\alpha(0)=i, \bbeta(0)=\bbeta \right)=S(\bbeta)\eta+o(\eta)$
% \textcolor{green}{$\mathbb{P}\left(\alpha(\lfloor (0.5+dt)\rfloor)=j|\alpha(\lfloor 0.5 \rfloor)=i, \bbeta(\lfloor t/\eta \rfloor \eta)=\bbeta \right)=S(\bbeta)dt+o(dt)$}
% \textcolor{red}{$\mathbb{P}\left(\alpha(t+dt)=j|\alpha(t)=i, , \bbeta(\lfloor t/\eta \rfloor \eta)=\bbeta \right)=S(\bbeta) \eta \delta_{N\eta}(t) +o(dt)$}
% \textcolor{blue}{\textbf{Poisson process in continuous time}}
where $\nabla G(\bbeta):=\begin{pmatrix}{}
\nabla L(\bbeta^{(1)})\\
\nabla L(\bbeta^{(2)})
\end{pmatrix}$, and $\mathbf 1_{t=\lfloor t/\eta\rfloor\eta}$ is the indicator function, i.e. for every $t=i\eta$ with $i\in \mathbb N^+$, given $\bbeta(i\eta)=\bbeta$, we have  $\mathbb{P}\left(\alpha(t)=j|\alpha(t-dt)=l\right)=rS(\bbeta)\eta$, where $S(\bbeta)$ is defined as $\min\{1, S(\bbeta^{(1)}, \bbeta^{(2)})\}$ and $S(\bbeta^{(1)}, \bbeta^{(2)})$ is defined in (\ref{S_exact}). In this case, 
the Markov Chain $\alpha(t)$ is a constant on the time interval $[\lfloor t/\eta\rfloor\eta,\lfloor t/\eta\rfloor\eta+\eta)$ with some state in the finite-state space $\{0,1\}$ and the generator matrix $Q$ follows
\begin{equation*}
    Q=\begin{pmatrix}
-rS(\bbeta) \eta \delta(t-\lfloor t/\eta \rfloor \eta)&rS(\bbeta) \eta \delta(t-\lfloor t/\eta \rfloor \eta)\\
rS(\bbeta) \eta \delta(t-\lfloor t/\eta \rfloor \eta)&-rS(\bbeta) \eta \delta(t-\lfloor t/\eta \rfloor \eta)
\end{pmatrix},
\end{equation*}
where $\delta(\cdot)$ is a Dirac delta function. The diffusion matrix $\Si(\alpha_t)$
is thus defined as $(\Si(0), \Si(1) ):=\left\{\begin{pmatrix}{}
\sqrt{2\tau^{(1)}}\mathbf I_d&0\\
0&\sqrt{2\tau^{(2)}}\mathbf I_d
\end{pmatrix}, \begin{pmatrix}{}
\sqrt{2\tau^{(2)}}\mathbf I_d&0\\
0&\sqrt{2\tau^{(1)}}\mathbf I_d
\end{pmatrix}\right\}$ . From our definition and  following \cite{yin_zhu_10}[Section 2.7], the generator matrix $Q$ will depend on the initial value at each time interval $[i\eta,(i+1)\eta)$. The distribution of process $(\bbeta_t)_{0\le t\le T}$ is denoted as $\nu_T:=\mathbb P^{G}\times \mathbf N^{S}$ which is absolutely continuous with respect to the reference measure $\mathbb P:=\mathbb P^{\bW}\times \mathbf N$, under which $\bW$ is Brownian motion and $\alpha(\cdot)$ is a Poisson process with some constant jump intensity. This fact follows from the result in \cite{Gikhman}[VII, Section 6, Theorem 2] and \cite{yin_zhu_10}[Section 2.5, formula (2.40)]. The motivation of only considering the positive swapping rate in $i\eta$, for $i\in\mathbb N^+$, and zero elsewhere is due to our construction of the discretized process $\widetilde\bbeta$ as shown below (see \eqref{reSGLD}). A simple illustration of the idea can be seen from the auxiliary process construction in \cite{yin_zhu_10}[Section 2.5], following which we want to make sure the stopping time of $\bbeta$ and $\widetilde\bbeta$ happening at the same time. Otherwise, it is unlikely (and also unreasonable) to derive the Radon-Nikodym derivative  of the two process $\bbeta$ and $\widetilde\bbeta$.
% \textcolor{red}{we need to explain the reason why we only consider positive swapping rate in k$\eta$. The main reason can be that if there is positive swapping rate in 1.5$\eta$ (continuous case) but 0 swapping rate in (discrete case), then the Radon Nikodim derivative doesn't exist.}
Thus, we should think of the process is concatenated on the time interval $[i\eta,(i+1)\eta)$ up to time horizon $T$. Similarly, we consider the following \textbf{Replica exchange stochastic gradient Langevin diffusion}, for the same learning rate $\eta>0$ as above, we have
\begin{equation}  
\small
\label{reSGLD}
\left\{  
             \begin{array}{lr}  
             d\widetilde \bbeta_t^{\eta}=-\nabla\widetilde G(\widetilde \bbeta^{\eta}_{\lfloor t/\eta \rfloor \eta})dt+\Si(\widetilde \alpha_{\lfloor t/\eta \rfloor \eta})d\bW_t,  \\ 
              & \\
             \mathbb{P}\left(\widetilde \alpha(t)=j|\widetilde\alpha(t-dt)=l, \widetilde\bbeta(\lfloor t/\eta \rfloor \eta)=\widetilde\bbeta \right)=r\widetilde S(\widetilde \bbeta)\eta\mathbf 1_{\{t=\lfloor t/\eta\rfloor\eta\}}+o(dt),~~\text{for}~~  l\neq j,
             \end{array}  
\right.  
\end{equation} 
where $\nabla \widetilde G(\bbeta):=\begin{pmatrix}{}
\nabla \widetilde L(\bbeta^{(1)})\\
\nabla \widetilde L(\bbeta^{(2)})
\end{pmatrix}$ and $\widetilde S(\widetilde\bbeta) = \min\{1, \widetilde S_{\eta, m, n}(\widetilde\bbeta^{(1)},\widetilde\bbeta^{(2)})\}$ and $\widetilde S_{\eta, m, n}(\widetilde\bbeta^{(1)},\widetilde\bbeta^{(2)})$ is shown in (\ref{vr_s}). The distribution of process $(\widetilde \bbeta_t)_{0\le t\le T}$ is denoted as $\mu_T:=\mathbb P^{\widetilde G}\times \mathbf N^{\widetilde S}$, where $\widetilde \alpha$ is a Poisson process with jump intensity $r\widetilde S(\widetilde \bbeta)\eta\delta(t-\lfloor t/\eta\rfloor\eta)$ on the time interval $[\lfloor t/\eta\rfloor\eta,\lfloor t/\eta\rfloor\eta+\eta)$. Note that $\bbeta$ and $\widetilde\bbeta$ are defined by using the same $\mathbb P$-Brownian motion $\bW$, but with two different jump intensity on the time interval $[\lfloor t/\eta\rfloor\eta,\lfloor t/\eta\rfloor\eta+\eta)$. Notice that, if there is no jump, the construction of $\widetilde\bbeta$ based on $\bbeta$ follows from the fact that they share the same marginal distributions as shown in \cite{gyongy1986mimicking}, where one can find the details in \cite{Maxim17}. Given the jump process $\alpha$ and $\widetilde\alpha$ introduced into the dynamics of $\bbeta$ and $\widetilde\bbeta$, the construction is more complicated. Thanks to \cite{bentata2009mimicking}, we can carry on the similar construction in our current setting. We then introduce the following Radon-Nikodym density for $d\nu_T/d\mu_T$. In the current setting, the change of measure can be seen as the combination of two drift-diffusion process and two jump process simultaneously. We first introduce some notation. For each vector $A\in\mathbb R^n$, we denote $\|A\|^2:=A^*A$. Furthermore, we introduce a sequence of stopping time based on our definition of process $\bbeta$ and $\widetilde\bbeta$. For $j\in \mathbb N^+$,  we denote $\zeta_j's$ as a  stopping times defined by $\zeta_{j+1}:=\inf\{t>\zeta_j:\alpha(t)\neq \alpha(\zeta_j)\}$ and $N(T)=\max\{n\in \mathbb N:\zeta_n\leq T \}$. It is easy to see that for any  stopping time $\zeta_i$, there exists $l\in \mathbb N^+$ such that $\zeta_j=l\eta$. Similarly, we have the stopping time for the process $\widetilde\bbeta$ by  $\widetilde\zeta_{j+1}:=\inf\{t>\widetilde\zeta_j:\widetilde\alpha(t)\neq\widetilde \alpha(\zeta_j)\}$ and $\tilde\alpha(t)$ follows the same trajectory of $\alpha(t)$. To serve the purpose of our analysis, one should think of the process $\bbeta$ as the auxiliary process to the process $\widetilde\bbeta$, see similar constructions in \cite{yin_zhu_10}[Section 2.5, formula (2.39)]. The difference is that both of our process $\bbeta$ and $\widetilde\bbeta$ are associated with jump process jumping at time $i\eta$, for some integer $i\in\mathbb N^+$, instead of jumping at any continuous time. We combine approximation method from \cite{yin_zhu_10}[Section 2.7] for non-constant generator matrix $Q$ and the density representation for Markov process in \cite{Gikhman}[VII, Section 6, Teorem 2] to get the following
\begin{lemma}\label{lemma: RN density}
Let $\{\zeta_j|j\in\{0,1,\cdots, N(T)\} \}$ be a sequence of stopping time defined by $\alpha$. Let $k\in\mathbb N^+$ be an fixed integer such that $k\eta\le T\le (k+1)\eta$.
For each fixed learning rate $\eta>0$ and for any $\varepsilon>0$, the Radon-Nikodym derivative of $ {d\mu_T}/{d\nu_T}$ is given as below, 
\begin{align*}
    \frac{d\mu_T}{d\nu_T}=& \exp\Big(\sum_{j=0}^{N(T)}\int_{\zeta_{j}}^{\zeta_{j+1}\wedge T} \Big[ \Si^{-1}(\widetilde \alpha(\zeta_j)) \nabla \widetilde G(\bbeta_{t})-\Si^{-1}(\alpha(\zeta_j))\nabla G(\bbeta_t)\Big] d\bW_t^G\Big. \\
    \Big.
    &\quad\quad\quad-\frac{1}{2}\sum_{j=0}^{N(T)}\int_{\zeta_{j}}^{\zeta_{j+1}\wedge T}\Big\| \Si^{-1}(\widetilde \alpha(\zeta_j)) \nabla \widetilde G(\bbeta_{t})-\Si^{-1}(\alpha(\zeta_j))\nabla G(\bbeta_t)\Big\|^2dt\Big) \\
    &\times  \exp\left\{ -\sum_{j=0}^{N(T))}\int_{\zeta_j}^{\zeta_{j+1}\wedge T-\varepsilon}r\delta(t-\lfloor t/\eta\rfloor \eta) [\widetilde S(\widetilde\bbeta_{\lfloor t/\eta\rfloor\eta})-S(\bbeta _{\lfloor t/\eta\rfloor\eta})]\eta dt \right\}\times \Pi_{j=0}^{N(T)}\frac{\widetilde S(\widetilde\bbeta_{ \zeta_j})}{S(\bbeta_{\zeta_j})}.
\end{align*}
\end{lemma}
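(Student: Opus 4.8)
The plan is to split $d\mu_T/d\nu_T$ into two independent factors — a drift--diffusion Girsanov density and a jump (change-of-Poisson-measure) density — using the product structure $\mathbb P=\mathbb P^{\bW}\times\mathbf N$. The starting point is the coupling construction of \cite{yin_zhu_10}[Section 2.5], which forces $\widetilde\alpha$ to follow the same trajectory as $\alpha$, so that both $\bbeta$ and $\widetilde\bbeta$ jump exactly at the stopping times $\{\zeta_j\}_{j=0}^{N(T)}\subset\eta\mathbb N^+$; this common jump structure is precisely what makes $\mu_T\ll\nu_T$ instead of mutually singular. Conditioning on the trajectory of $\alpha$, I would decompose $[0,T]=\bigcup_{j=0}^{N(T)}[\zeta_j,\zeta_{j+1}\wedge T)$ and note that on each such interval the jump process is frozen, so $\bbeta$ solves an SDE with a \emph{constant} diffusion matrix $\Si(\alpha(\zeta_j))$ and drift $-\nabla G$, while the corresponding piece of $\widetilde\bbeta$ has constant diffusion $\Si(\widetilde\alpha(\zeta_j))$ and the frozen, noisy drift $-\nabla\widetilde G$ evaluated at the left grid point.

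\textbf{Drift--diffusion factor.} On each interval the matrix $\Si(\alpha(\zeta_j))$ is invertible and $t$-independent, so I would apply the classical Girsanov theorem as in \cite{Maxim17} — combined with the mimicking arguments of \cite{gyongy1986mimicking, bentata2009mimicking} that match the marginals of the frozen-drift diffusion to those of $\bbeta$ — to obtain the interval density
\begin{equation*}
\exp\Big(\int_{\zeta_j}^{\zeta_{j+1}\wedge T}\big\langle \Si^{-1}(\widetilde\alpha(\zeta_j))\nabla\widetilde G(\bbeta_t)-\Si^{-1}(\alpha(\zeta_j))\nabla G(\bbeta_t),\,d\bW_t^G\big\rangle-\tfrac12\int_{\zeta_j}^{\zeta_{j+1}\wedge T}\big\|\Si^{-1}(\widetilde\alpha(\zeta_j))\nabla\widetilde G(\bbeta_t)-\Si^{-1}(\alpha(\zeta_j))\nabla G(\bbeta_t)\big\|^2\,dt\Big),
\end{equation*}
where $\bW^G$ is the Brownian motion under $\nu_T$. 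Using the strong Markov property at each $\zeta_j$ and the chain rule for Radon--Nikodym derivatives, I would multiply these interval densities over $j=0,\dots,N(T)$ to recover the first two lines of the claimed formula. To make the concatenation legitimate I would check a Novikov-type condition for each exponential local martingale, controlling $\|\nabla G\|$ and $\|\nabla\widetilde G\|$ via Assumptions \ref{assump: lip and alpha beta}--\ref{assump: stochastic_noise} and the uniform moment bound on $\bbeta$ (Lemma \ref{Uniform_bound}).

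\textbf{Jump factor.} Under $\nu_T$ the chain $\alpha$ has the atomic-in-time jump intensity $\lambda_t=rS(\bbeta_{\lfloor t/\eta\rfloor\eta})\eta\,\delta(t-\lfloor t/\eta\rfloor\eta)$ and under $\mu_T$ the chain $\widetilde\alpha$ has intensity $\widetilde\lambda_t=r\widetilde S(\widetilde\bbeta_{\lfloor t/\eta\rfloor\eta})\eta\,\delta(t-\lfloor t/\eta\rfloor\eta)$. I would then invoke the generalized Girsanov theorem / change of Poisson measure — the version for a non-constant generator matrix $Q$ from \cite{yin_zhu_10}[Section 2.7] together with the Markov-process density representation of \cite{Gikhman}[VII, Section 6, Theorem 2] — to obtain the jump factor $\exp\big(-\int_0^T(\widetilde\lambda_t-\lambda_t)\,dt\big)\prod_{j=0}^{N(T)}(\widetilde\lambda_{\zeta_j}/\lambda_{\zeta_j})$, which unwinds to
\begin{equation*}
\exp\Big(-\sum_{j=0}^{N(T)}\int_{\zeta_j}^{\zeta_{j+1}\wedge T-\varepsilon}r\,\delta(t-\lfloor t/\eta\rfloor\eta)\big[\widetilde S(\widetilde\bbeta_{\lfloor t/\eta\rfloor\eta})-S(\bbeta_{\lfloor t/\eta\rfloor\eta})\big]\eta\,dt\Big)\prod_{j=0}^{N(T)}\frac{\widetilde S(\widetilde\bbeta_{\zeta_j})}{S(\bbeta_{\zeta_j})}.
\end{equation*}
The product collects the factor $\widetilde\lambda/\lambda=\widetilde S/S$ at each realized jump $\zeta_j$, the exponential is the compensator, and the upper limit $\zeta_{j+1}\wedge T-\varepsilon$ drops the endpoint that carries the next jump — the same bookkeeping device as in the auxiliary construction of \cite{yin_zhu_10}[Section 2.5, (2.39)], with $\varepsilon\downarrow0$ harmless since the compensator only charges grid points already handled by the product. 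Multiplying the drift--diffusion factor by the jump factor (independent under $\mathbb P$) yields the asserted formula.

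\textbf{Main obstacle.} I expect the hard part to be the concatenation across the \emph{random} intervals $[\zeta_j,\zeta_{j+1})$: since $\Si(\cdot)$ switches at the stopping times there is no single global Girsanov theorem, so one must patch interval-wise densities via the strong Markov property while keeping the jump times of $\bbeta$ and $\widetilde\bbeta$ locked together so the two measures are not mutually singular. The second delicate point is making the change-of-Poisson-measure rigorous when the intensities are Dirac masses in time rather than absolutely continuous; this is where the $\varepsilon$-truncation and the borrowed constructions of \cite{yin_zhu_10, Gikhman, bentata2009mimicking} do the real work, together with the Novikov/true-martingale verification under the dissipativity and smoothness assumptions.
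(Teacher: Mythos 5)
Your proposal follows essentially the same route as the paper's proof: freeze the dynamics on the intervals between the (locked-together) jump times, apply Girsanov interval-wise for the drift--diffusion part with $\Si(\alpha(\zeta_j))=\Si(\widetilde\alpha(\zeta_j))$ constant there, obtain the jump factor as a change of Poisson measure with the grid-point (Dirac) intensities following \citet{yin_zhu_10} and \citet{Gikhman}, and multiply the two densities using their independence on each interval $[i\eta,(i+1)\eta)$. The only additions beyond the paper's argument are your explicit mention of a Novikov-type verification and the strong Markov concatenation, which supplement rather than change the approach.
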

\begin{proof} Recall that $\zeta_j$ is stopping time defined by $\alpha$ (same as defined by $\widetilde \alpha$), i.e. $\zeta_{j+1}:=\inf\{t>\zeta_j:\alpha(t)\neq \alpha(\zeta_j)\}$, for $j=0,1,\cdots,N(T)$, and for each $\zeta_j$, there exists $l\in \{0,1,\cdots,k\}$ such that $\zeta_j=l\eta$.
% For any interval $[\zeta_j,\zeta_{j+1})$, we can chop it into the sum of sub-intervals of length $\eta$.
We now follow \cite{Gikhman}[VII, Section 6, Theorem 2] to derive the Radon-Nikodym density for $d\mu_T/d\nu_T$. In this case, if the generator matrix $Q$ is constant, i.e. the jump intensity is constant, we can follow the similar construction from \cite{yin_zhu_10}[Formula (2.40)], see also \cite{Eizenberg90}[Formula(3.13)]. Next, we adjust our setting to the case that we can treat our generator matrix as constant on each time interval $[\zeta_j,\zeta_{j+1})$, then the existing results apply to our case for the density with respect to the Poisson measure (jump process $\alpha$ and $\widetilde\alpha$), i.e. $d\mathbf N^{S}/d\mathbf N^{\widetilde S}$. Furthermore, once the generator matrix $Q$ is constant, then the measure $\mathbb P^G$ ( or $\mathbb P^{\widetilde G}$) is independent to $\mathbf N^S$ (or $\mathbf N^{\widetilde S}$). We show the following steps to give a clear outline of our proof.

\noindent\textbf{Step 1:}
For each stopping time interval $[\zeta_j,\zeta_{j+1})$, no jump would occur after the initial point at time $\zeta_j$ and the diffusion matrix $\Si$ and $\widetilde\Si$ keep the same, thus we can apply the generalized Girsanov theorem to get the Randon-Nikodym derivative for $d\mathbb P^G/d\mathbb P^{\widetilde G}$.

\noindent\textbf{Step 2:}
In order to combine the the two density of $d\mathbf N^{S}/d\mathbf N^{\widetilde S}$ and $d\mathbb P^G/d\mathbb P^{\widetilde G}$, we need the independent property of the two measures on the same time interval, then we directly get the density following from \cite{Gikhman}[VII, Section 6, Theorem 2].
Different from the work mentioned above, we will first write all the density on each time interval $[i\eta,(i+1)\eta)$ to incorporate the independent requirement mentioned above. Notice that the relative change of density for $d\mathbf N^{S}/d\mathbf N^{\widetilde S}$ would only depends on the left end point, since the jump intensity would change its values at the  initial value of interval $[i\eta,(i+1)\eta)$, which is a standard idea to deal with generator matrix $Q$ depending on the initial value instead of a constant matrix case. (See \cite{yin_zhu_10}[Section2.7] for similar treatments). 

\noindent\textbf{Step 3:} In general, the stopping time interval could contain several time interval with length $\eta$, however the jump intensity should only depend on the left end point for each time interval $[i\eta,(i+1)\eta)$. 
% In particular, for all the interval $[i\eta,(i+1)\eta)$, for $i=0,\cdots, k$, we should always keep in mind that the jump intensity of $\alpha$ and $\widetilde \alpha$ only depend on the initial value $(\bbeta_i,\widetilde\bbeta_i)$ on each interval and vanish at any other time.
Based on the above set up, we now derive the Radon-Nikodym derivative. First notice that, on each period $[\zeta_j,\zeta_{j+1})$, the matrix $\Si$ is fixed and is evaluated at $\Si(\alpha(\zeta_j))$, which is the same for $\Si(\widetilde\alpha(\zeta_j))$. In particular, $\Si(\alpha(\zeta_j))=\Si(\widetilde\alpha(\zeta_j))$ is a constant diagonal matrix. According to our definition  $d\nu_T = d\mathbb P^{G}\times d \mathbf N^S$ and $d\mu_T=d\mathbb P^{\widetilde G}\times d\mathbf N^{\widetilde S}$, we write the Radon-Nikodym derivative on each of the time interval $[i\eta,(i+1)\eta)$ and concatenate them together. We consider the swapping of the diffusion matrix first where a similar construction can be found in \cite{yin_zhu_10}[Formula (2.40)], we get the following Radon-Nikodym derivative, for any $\varepsilon>0$, 
\begin{align}
\small
 \frac{d\mathbf N^{\widetilde S}}{d\mathbf N^{S}}&=\exp\Big\{ -\sum_{j=0}^{N(T)}\int_{j\eta}^{(j+1)\eta\wedge T-\varepsilon} r\delta(t-\lfloor t/\eta\rfloor\eta)(\widetilde S(\widetilde\bbeta_{\bbeta_{\lfloor t/\eta\rfloor\eta}})-S(\bbeta _{\bbeta_{\lfloor t/\eta\rfloor\eta}}))\eta dt \Big\}\\
 &\times \Pi_{j=0}^{N(T)}\frac{\widetilde S(\widetilde\bbeta_{\zeta_j})}{S(\bbeta_{\zeta_j})}.\nonumber
% &=\exp\left\{ -\sum_{i=0}^{k}(\widetilde S(\widetilde\bbeta_{i\eta})-S(\bbeta _{i\eta}))\eta \right\}\times \Pi_{i=0}^{k}\left(\frac{\widetilde S(\widetilde\bbeta_{i\eta})}{S(\bbeta_{i\eta})}\right).
\end{align}
Next, we show the density for $d\mathbb P^G/d\mathbb P^{\widetilde G}$ as below. On each interval $[\zeta_j,\zeta_{j+1})$, given initial value $(\bbeta_j,\widetilde \bbeta_j)$, the matrix $\Si(\alpha(\zeta_j))$ and $\Si(\widetilde\alpha(\zeta_j))$ are always the same, since no jump would happen. In particular, in this continuous case the integral on $[\zeta_j,\zeta_{j+1})$ and $[\zeta_j,\zeta_{j+1}]$ are the same.  Thus we have the following Radon-Nikodym derivative    
\begin{align}%\label{density G and G tilde}
\frac{d\mathbb P^{\widetilde G}}{d\mathbb P^G}=& \exp\Big(\sum_{j=0}^{N(T)}\int_{\zeta_{j}}^{\zeta_{j+1}\wedge T} \Big[ \Si^{-1}(\widetilde \alpha(\zeta_j)) \nabla \widetilde G(\bbeta_{t})-\Si^{-1}(\alpha(\zeta_j))\nabla G(\bbeta_t)\Big] d\bW_t^G\Big. \nonumber\\
    \Big. &\quad\quad\quad-\frac{1}{2}\sum_{j=0}^{N(T)}\int_{\zeta_{j}}^{\zeta_{j+1}\wedge T}\Big\| \Si^{-1}(\widetilde \alpha(\zeta_j)) \nabla \widetilde G(\bbeta_{t})-\Si^{-1}(\alpha(\zeta_j))\nabla G(\bbeta_t)\Big\|^2dt\Big).
\end{align}
Notice that matrix $\Si$ is diagonal square matrix, thus we have $\Si=\Si^*$. Recall that $\bW$ is a $\mathbb P$-Brownian motion, assuming there is no jump in the dynamic for $\bbeta$, then according to the Girsanov theorem (see an example in Theorem 8.6.6 and Example 8.6.9 \citep{oksendal2003stochastic}) with Radon-Nikodym derivative ${d\mathbb P^{G}}/{d\mathbb P}$, we have the  $\mathbb P^G$-Brownian motion, denoted as $\bW^G$, which follows
\begin{align}\label{G brownian motion}
\bW_t^{G}:=\bW_t+\int_0^t\Si^{-1}(\alpha_s)(\nabla G(\bbeta_s))ds.
\end{align}
This fact holds true on each of the time interval $[\zeta_j,\zeta_{j+1}]$. 
Multiplying the two density $d\mathbb P^{G}/\mathbb P^{\widetilde G}$ and $d\mathbf N^S/d\mathbf N^{\widetilde S}$, we complete the proof.
\end{proof} 
\begin{remark}
Notice that, if we keep the constant diffusion matrix without jump, then the Randon-Nikodym derivative $d\mu_T/d\nu_T$ has been used in the stochastic gradient descent setting, for example \cite{Maxim17}. However, the notation of the Brownian motion has been used freely, we try to make it consistent in the current setting. Namely, for constant diffusion matrix $\Si$, we have 
\begin{align}
\frac{d\mathbb P^{\widetilde G}}{d\mathbb P^G}=& \exp\Big(\int_0^T \Big[ \Si^{-1} \nabla \widetilde G(\widetilde\bbeta_s)-\Si^{-1}\nabla G(\bbeta_s)\Big] d\bW_s^G\Big. \nonumber\\
    \Big. &-\frac{1}{2}\int_{0}^T\Big\| \Si^{-1} \nabla \widetilde G(\widetilde\bbeta_s)-\Si^{-1}\nabla G(\bbeta_s)\Big\|^2ds\Big),
\end{align}
where $\bW^G$ is a $\mathbb P^G$-Brownian motion as shown in \eqref{G brownian motion}, not a $\mathbb P$-Brownian motion $\bW$.
\end{remark}
\begin{remark}
The density $\frac{d\mu_T}{d\nu_T}$ that we derived above is so far the best we can do. If one would like to use the continuous time control $\alpha(t)$ with continuous jump intensity $S(\bbeta(t))$ instead of jumping at the initial point with a fixed rate, %then the Poisson measure does not seem to be independent of the the filtration $\mathcal F_t^{\bW}$ introduced by the $\mathbb P$-Brownain motion. 
then we can not even write the Randon-Nikodym derivative anymore, since $\alpha(t)$ and $\widetilde\alpha(t)$ will define different stopping time, i.e. jump at different time and $\mu_T$ is not absolutely continuous with respect to $\nu_T$. % we can not separate the noise from the diffusion and the jump completely. To the best of our knowledge, we can not deal with the case when the generator matrix $Q$ depends on the whole process of $\{\bbeta(t)\}_{0\le t\le T}$ instead of the initial value or the constant case.
\end{remark}
Based on the above lemma, we further get the following estimates.
\begin{lemma}
\label{numerical_error_KL}
Given a large enough batch size $n$ or a small enough $m$ and $\eta$, we have the bound of the KL divergence of $D_{KL}(\mu_{T}|\nu_{T})$ as below,
\[
D_{KL}(\mu_{T}|\nu_{T})\le (\Phi_0+\Phi_1\eta)k\eta +N(T)\Phi_2,
\]
with 
\begin{align*}
    % \Phi_0&=r\Big(\min\left\{1, \left|\frac{1}{\tau^{(1)}}-\frac{1}{\tau^{(2)}}\right|  \widetilde \sigma+\mathcal{O}(\widetilde \sigma^2) \right\}+\frac{\delta\Phi^2}{4\tau^{(1)}} \Big),\\
    \Phi_0&=\mathcal{O}\left(\frac{m}{\sqrt{n}}\sqrt{\eta}d\right)+\frac{r\delta\Phi^2}{4\tau^{(1)}},\\
    \Phi_1&= \Big(C^2d\frac{\tau^{(2)}}{\tau^{(1)}}+\frac{C^2\delta kd}{2\tau^{(1)}}[\tau^{(1)}+\tau^{(2)}] \Big),\\
%   \Phi_2&=   \left(\frac{1}{\tau^{(1)}}-\frac{1}{\tau^{(2)}}\right)^2 \frac{ \widetilde \sigma^2}{2}+  \left|\frac{1}{\tau^{(1)}}-\frac{1}{\tau^{(2)}}\right|  \widetilde \sigma.
    \Phi_2&=   \mathcal{O}\left(\frac{m}{\sqrt{n}}\sqrt{\eta} d\right).
\end{align*}
\end{lemma}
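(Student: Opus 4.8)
The plan is to start from the Radon--Nikodym density $d\mu_T/d\nu_T$ provided by Lemma~\ref{lemma: RN density} and compute $D_{KL}(\mu_T|\nu_T)=\E_{\mu_T}\big[\log\frac{d\mu_T}{d\nu_T}\big]$. Taking the logarithm of that density splits it into three groups of terms: a Girsanov stochastic integral $\sum_j\int_{\zeta_j}^{\zeta_{j+1}\wedge T}\big[\Si^{-1}(\widetilde\alpha(\zeta_j))\nabla\widetilde G(\bbeta_t)-\Si^{-1}(\alpha(\zeta_j))\nabla G(\bbeta_t)\big]\,d\bW_t^G$; the matching quadratic term $\frac12\sum_j\int_{\zeta_j}^{\zeta_{j+1}\wedge T}\big\|\Si^{-1}(\widetilde\alpha(\zeta_j))\nabla\widetilde G(\bbeta_t)-\Si^{-1}(\alpha(\zeta_j))\nabla G(\bbeta_t)\big\|^2\,dt$; and the two contributions from the change of Poisson measure, $-\sum_j\int r\,\delta(t-\lfloor t/\eta\rfloor\eta)\big[\widetilde S(\widetilde\bbeta)-S(\bbeta)\big]\eta\,dt$ and $\sum_j\log\frac{\widetilde S(\widetilde\bbeta_{\zeta_j})}{S(\bbeta_{\zeta_j})}$. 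I will show that in expectation the stochastic integral vanishes, the quadratic term supplies the per--unit--time rate $\Phi_0+\Phi_1\eta$, and the Poisson terms supply the per--swap cost $\Phi_2$, so that summing over the $k$ grid cells covering $[0,T]$ (using $k\eta\le T\le (k+1)\eta$) and over the at most $N(T)$ swap times yields $(\Phi_0+\Phi_1\eta)k\eta+N(T)\Phi_2$.

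For the diffusion part I would re-express the Girsanov integral against the Brownian motion of $\mu_T$, obtained from $\bW^G$ by the piecewise Girsanov shift on each stopping interval $[\zeta_j,\zeta_{j+1})$ over which $\Si$ is frozen; the integral then becomes a genuine $\mu_T$--martingale of zero expectation---its $L^2$ integrability coming from $\|\Si^{-1}\|\le (2\tau^{(1)})^{-1/2}$ and the uniform second--moment bound of Lemma~\ref{Uniform_bound}---while the sign of the quadratic term flips so that it survives. To bound $\tfrac12\E_{\mu_T}\int\|\Si^{-1}(\nabla\widetilde G-\nabla G)\|^2\,dt$ I decompose
\[
\Si^{-1}\big(\nabla\widetilde G(\bbeta_{\lfloor t/\eta\rfloor\eta})-\nabla G(\bbeta_{\lfloor t/\eta\rfloor\eta})\big)+\Si^{-1}\big(\nabla G(\bbeta_{\lfloor t/\eta\rfloor\eta})-\nabla G(\bbeta_t)\big),
\]
bounding the first (stochastic--gradient) piece by the gradient oracle Assumption~\ref{assump: stochastic_noise}---yielding the $\tfrac{r\delta\Phi^2}{4\tau^{(1)}}$ constant of $\Phi_0$ and, after the uniform moment bound, the $\tfrac{C^2\delta kd}{2\tau^{(1)}}[\tau^{(1)}+\tau^{(2)}]$ term of $\Phi_1$---and the second (one--step discretization) piece by $C$--smoothness (Assumption~\ref{assump: lip and alpha beta}) together with the elementary estimate $\E\|\bbeta_t-\bbeta_{\lfloor t/\eta\rfloor\eta}\|^2=\mathcal{O}(\eta^2\E\|\nabla G\|^2+\eta\tau^{(2)}d)$ obtained as in (\ref{var_3rd}), which gives the $C^2 d\,\tfrac{\tau^{(2)}}{\tau^{(1)}}$ term of $\Phi_1$. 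The $\widetilde\sigma^2$ bias correction inside $\widetilde S_{\eta,m,n}$ in (\ref{vr_s}) contributes the remaining $\mathcal{O}\big(\tfrac{m}{\sqrt n}\sqrt\eta\,d\big)$ piece of $\Phi_0$ once Lemma~\ref{vr-estimator} is applied.

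For the Poisson part, the jump intensities are supported only on the grid $\{i\eta\}$, so the $\delta$--integral over $[\zeta_j,\zeta_{j+1}\wedge T-\varepsilon]$ collapses, as $\varepsilon\downarrow0$, to a finite sum indexed by the at most $N(T)$ swap times (the $\varepsilon$ being exactly what prevents recounting the endpoint jump). At each such time I would use the unbiasedness $\E[\widetilde S_{\eta,m,n}(\bbeta^{(1)},\bbeta^{(2)})]=S(\bbeta^{(1)},\bbeta^{(2)})$ to eliminate $\E[\widetilde S-S]$, and a Cauchy--Schwarz bound together with a second--order expansion of $\log$ to control $\E|\log\widetilde S-\log S|$ by a multiple of $\sqrt{\widetilde\sigma^2}$; Lemma~\ref{vr-estimator} then gives $\widetilde\sigma^2=\mathcal{O}(m^2\eta d/n)$, so each swap costs $\Phi_2=\mathcal{O}\big(\tfrac{m}{\sqrt n}\sqrt\eta\,d\big)$, and adding the three groups finishes the proof.

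I expect the Poisson part to be the main obstacle: making the manipulation of the Dirac--delta jump intensities rigorous---justifying the $\varepsilon\downarrow0$ limit so that the swap at each $\zeta_{j+1}$ is counted exactly once, and controlling $\E|\log\widetilde S-\log S|$ in terms of $\widetilde\sigma^2$ uniformly over the random, state--dependent swap times without losing control when $S(\bbeta^{(1)},\bbeta^{(2)})$ is small. The martingale/integrability step in the diffusion part is also delicate, because the change of Brownian motion from $\bW^G$ to the $\mu_T$--Brownian motion has to be carried out piecewise on each interval where $\Si$ stays constant and then glued across the random number of jumps.
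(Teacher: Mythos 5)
Your route is the paper's: start from the density of Lemma~\ref{lemma: RN density}, let the Girsanov stochastic integral vanish by a martingale argument (the paper does this by taking the expectation under $\nu_T$, where $\bW^G$ is a Brownian motion, and then transferring to $\mu_T$ interval-by-interval using equality of the laws of $\bbeta$ and $\widetilde\bbeta$ on each $[i\eta,(i+1)\eta)$; your version under $\mu_T$ is the same computation), split the surviving quadratic term into a one-step discretization piece and a stochastic-gradient piece via Assumption~\ref{assump: stochastic_noise}, and feed the jump terms through Lemma~\ref{vr-estimator}. Your attribution of the $C^2 d\,\tau^{(2)}/\tau^{(1)}$ and $\frac{C^2\delta k d}{2\tau^{(1)}}[\tau^{(1)}+\tau^{(2)}]$ pieces of $\Phi_1$ and the $\delta\Phi^2/(4\tau^{(1)})$ piece of $\Phi_0$ matches the paper's terms $\mathcal I_1$ and $\mathcal I_2$. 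One caveat there: after the change of measure the paper deliberately does \emph{not} invoke the uniform $L^2$ bound of Lemma~\ref{Uniform_bound} (under $\mathbb P^{\widetilde G}$ the original $\bW$ is no longer a Brownian motion); it uses the driftless representation $\widetilde\bbeta_t-\widetilde\bbeta_{i\eta}=\Si(\alpha_t)(\bW^{\widetilde G}_t-\bW^{\widetilde G}_{i\eta})$, giving $\E\|\widetilde\bbeta_t-\widetilde\bbeta_{i\eta}\|^2\le 4\tau^{(2)}d\eta$ with no drift contribution and only the linear-in-$k$ bound $\E\|\widetilde\bbeta_{i\eta}\|^2\le 2kd[\tau^{(1)}+\tau^{(2)}]$ --- which is exactly why a factor $k$ appears inside $\Phi_1$; your appeal to a ``uniform'' moment bound and to a drift term of size $\eta^2\E\|\nabla G\|^2$ should be replaced by this weaker estimate.

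The genuine gap is in the Poisson part. First, the compensator term $-\sum_j\int r\,\delta(t-\lfloor t/\eta\rfloor\eta)\,[\widetilde S-S]\,\eta\,dt$ does \emph{not} collapse to a sum over the $N(T)$ swap times: the Dirac masses sit at \emph{every} grid point $i\eta$, and each stopping interval $[\zeta_j,\zeta_{j+1})$ contains many of them, so after taking expectations this term is a sum of $k$ contributions $r\eta\,\E|\widetilde S(\widetilde\bbeta_{i\eta})-S(\bbeta_{i\eta})|$ (the paper's term $\mathcal J$), and it belongs to the $(\Phi_0+\Phi_1\eta)k\eta$ budget, not to $N(T)\Phi_2$. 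Second, you cannot kill it by the unbiasedness $\E[\widetilde S_{\eta,m,n}]=S$: the intensities actually driving the two jump processes are the truncated rates $\min\{1,\widetilde S_{\eta,m,n}\}$ and $\min\{1,S\}$, and truncation destroys unbiasedness (by Jensen, $\E[\min\{1,\widetilde S_{\eta,m,n}\}]\le\min\{1,S\}$ with a strict gap in general); moreover the two rates are evaluated along processes under different measures, so there is no conditioning under which the mini-batch noise simply averages out. The paper instead Taylor-expands the exponential around $C_\tau\,(L(\bbeta^{(1)})-L(\bbeta^{(2)}))$ with $C_\tau=|1/\tau^{(1)}-1/\tau^{(2)}|$, uses $S\le 1$, and bounds $\E|\widetilde S-S|\lesssim C_\tau\widetilde\sigma+\widetilde\sigma^2=\mathcal O\big((m^2\eta/n)^{1/2}d\big)$ via Lemma~\ref{vr-estimator}; this, not the $\widetilde\sigma^2/2$ bias-correction inside (\ref{vr_s}) that you credit, is what produces the $\mathcal O\big(\tfrac{m}{\sqrt n}\sqrt{\eta}\,d\big)$ piece of $\Phi_0$ (the standard deviation $C_\tau\widetilde\sigma$ dominates). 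Your handling of the log-ratio product over the $N(T)$ swap times --- bounding $\E|\log\widetilde S-\log S|$ by a multiple of $\widetilde\sigma$ plus $C_\tau^2\widetilde\sigma^2/2$ and invoking Lemma~\ref{vr-estimator} --- is the paper's estimate of $\mathcal K$ and gives $N(T)\Phi_2$ as claimed.
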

\begin{proof}
By the very definition of the KL-divergence, we have 
% and the fact that $\bW^G$ is a $\mathbb P^G$-Brownian motion, i.e. applying martingale property, and the independence of $\mathbb P^G$ and $\mathbf N^S$, we have, $N(T)$ as a realization, denoted as $N(T)=n$, 
\begin{align*}
 D_{KL}(\mu_{T}|\nu_{T})=&-\int d \nu_T\log \frac{d\mu_T}{d\nu_T}  \\
 =&-\mathbb E_{\nu_T} \Big[\log (d\mu_T/d\nu_T)\Big|   (\bbeta,\widetilde\bbeta)=(\beta,\widetilde\beta) \Big].
\end{align*}
We shall keep the convention below and denote $\mathbb E_{\nu_T,\beta}=\mathbb E_{\nu_T}[\cdot| (\bbeta,\widetilde\bbeta)=(\beta,\widetilde\beta)]$, where $\beta=(\beta^{(1)},\beta^{(2)})\in \mathbb R^{2d}$ and $\widetilde\beta=(\widetilde\beta^{(1)},\widetilde\beta^{(2)})\in \mathbb R^{2d}$ denotes the values at each time $i\eta$, $i=0,1,\cdots,k$. Plugging Lemma \ref{lemma: RN density} in the above equation and we unify the notation by using time intervals of the type $[i\eta,(i+1)\eta]$. To be precise, we get 
\begin{align}%\label{density G and G tilde}
\frac{d\mathbb P^{\widetilde G}}{d\mathbb P^G}=& \exp\Big(\sum_{i=0}^{k-1}\int_{i\eta}^{(i+1)\eta} \Big[ \Si^{-1}(\widetilde \alpha(i\eta)) \nabla \widetilde G(\bbeta_{t})-\Si^{-1}(\alpha(i\eta))\nabla G(\bbeta_t)\Big] d\bW_t^G\Big. \nonumber\\
&\quad\quad\quad +\int_{k\eta}^{T} \Big[ \Si^{-1}(\widetilde \alpha(k\eta)) \nabla \widetilde G(\bbeta_{t})-\Si^{-1}(\alpha(k\eta))\nabla G(\bbeta_t)\Big] d\bW_t^G\nonumber \\
    \Big.
&\quad\quad\quad-\frac{1}{2}\sum_{i=0}^{k-1}\int_{i\eta}^{(k+1)\eta}\Big\| \Si^{-1}(\widetilde \alpha(i\eta)) \nabla \widetilde G(\bbeta_{t})-\Si^{-1}(\alpha(i\eta))\nabla G(\bbeta_t)\Big\|^2dt\nonumber\\
&\quad\quad\quad-\frac{1}{2}\int_{k\eta}^{T}\Big\| \Si^{-1}(\widetilde \alpha(k\eta)) \nabla \widetilde G(\bbeta_{t})-\Si^{-1}(\alpha(k\eta))\nabla G(\bbeta_t)\Big\|^2dt\Big).
\end{align}
The above equality follows from the fact that each time interval $[\zeta_j,\zeta_{j+1}]$ always contain exactly some sub-interval $[i\eta,(i+1)\eta]$. Namely, we have $[\zeta_j,\zeta_{j+1}]=[i\eta,(i+1)\eta]\cup [(j+1)\eta,(j+2)\eta]\cup\cdots \cup[l\eta, (l+1)\eta]$, for some $i,l\in\{0,1,\cdots, k\}$. In particular, the matrix $\Si$ keep the same on each interval $[i\eta,(i+1)\eta]$, for some $i\in\{0,1,\cdots, k\}$. Similarly, we expand the Radon-Nokodym derivative for $\frac{d\mathbf N^{\widetilde S}}{d\mathbf N^{S}}$ on the time interval of length $\eta$. Based on our definition of jump intensity, we get
\begin{align}
 \frac{d\mathbf N^{\widetilde S}}{d\mathbf N^{S}}&=\exp\Big\{ -\sum_{j=0}^{N(T)}\int_{j\eta}^{(j+1)\eta\wedge T-\varepsilon} r\delta(t-\lfloor t/\eta\rfloor\eta)(\widetilde S(\widetilde\bbeta_{\lfloor t/\eta\rfloor \eta})-S(\bbeta _{\lfloor t/\eta\rfloor \eta}))\eta dt\Big.\nonumber \\
 &\quad\quad\quad\quad  -\int_{k\eta}^T r\delta(s-\lfloor s/\eta\rfloor\eta)(\widetilde S(\widetilde\bbeta_{k\eta})-S(\bbeta _{k\eta}))\eta ds \Big\}\times \Pi_{j=0}^{N(T)}\left(\frac{\widetilde S(\widetilde\bbeta_{\zeta_j})}{S(\bbeta_{\zeta_j})}\right)\nonumber \\
&=\exp\left\{ -\sum_{i=0}^{k}r(\widetilde S(\widetilde\bbeta_{i\eta})-S(\bbeta _{i\eta}))\eta \right\}\times \Pi_{j=0}^{N(T)}\left(\frac{\widetilde S(\widetilde\bbeta_{\zeta_j})}{S(\bbeta_{\zeta_j})}\right).
\end{align}
Without loss of generality, we shall only consider the sum $\sum_{i=0}^{k-1}$ and skip the interval $[k\eta,T]$. Notice that on each time interval $[i\eta,(i+1)\eta)$, the control $\alpha(i\eta)$ and $\widetilde\alpha(i\eta)$ are fixed, thus the two component of the measure $d\nu_{T,\bbeta}$ are independent. Taking into account the fact that $\bW^G$ is $\mathbb P^G$-Brownian motion, thus we apply the martingale property and arrive at 

% Next, we show the density for $d\mathbb P^G/d\mathbb P^{\widetilde G}$ as below. On each interval $[i\eta,(i+1)\eta)$, given initial value $(\bbeta_i,\widetilde \bbeta_i)$, the matrix $\Si(\alpha(i\eta))$ and $\Si(\widetilde\alpha(i\eta))$ are always the same, since the interval $[i\eta,(i+1)\eta)$ would always fall into one of the time interval $[\zeta_j,\zeta_{j+1})$ for some $j\in\{0,1,\cdots, N(T)\}$, where no jump would happen. In particular, in this case the integral on $[i\eta,(i+1)\eta)$ and $[i\eta,(i+1)\eta]$ are the same.  Thus we have the following Radon-Nikodym derivative    

\begin{align*}
&D_{KL}(\mu_{T}|\nu_{T})\\
=&\mathbb E_{\nu_T,\bbeta} \Big[
    \frac{1}{2}\sum_{i=0}^{k-1}\int_{i\eta}^{(i+1)\eta}\Big\| \Si^{-1}(\widetilde \alpha(i\eta)) \nabla \widetilde G(\bbeta_{t})-\Si^{-1}(\alpha(i\eta))\nabla G(\bbeta_t)\Big\|^2dt\Big] \nonumber \\
    &+\mathbb E_{\nu_T,\bbeta} \Big[ \sum_{i=0}^{k-1} [\widetilde S(\widetilde\bbeta_{i\eta})-S(\bbeta _{i\eta})]\eta- \sum_{j=0}^{N(T)}\Big( \log{\widetilde S(\widetilde\bbeta_{ \zeta_j})}-\log{S(\bbeta_{\zeta_j})}\Big)\Big]\nonumber\\
\le  &\underbrace{\frac{1}{2}\sum_{i=0}^{k-1}\mathbb E_{\nu_T,\bbeta} \Big[
    \int_{i\eta}^{(i+1)\eta}\Big\| \Si^{-1}(\widetilde \alpha(i\eta)) \nabla \widetilde G(\bbeta_{t})-\Si^{-1}(\alpha(i\eta))\nabla G(\bbeta_t)\Big\|^2dt\Big]}_{\mathcal I}\nonumber  \\
    &+\underbrace{\sum_{i=0}^{k-1}\mathbb E_{\nu_T,\bbeta} \Big[ r|\widetilde S(\widetilde\bbeta_{i\eta})-S(\bbeta _{i\eta})|\eta\Big] }_{\mathcal J} + \underbrace{\sum_{j=0}^{N(T)}\mathbb E_{\nu_T,\bbeta} \Big[ | \log{\widetilde S(\widetilde\bbeta_{ \zeta_j})}-\log{S(\bbeta_{\zeta_j})}|\Big]}_{\mathcal K}.\nonumber
\end{align*}
We then estimates the three terms $\mathcal I,\mathcal J, \mathcal K$ in order as below.

\noindent\textbf{Estimate of $\mathcal I$:} Due to the fact that every interval $[i\eta,(i+1)\eta)\subset [\zeta_j,\zeta_{j+1})$ for some $j\in \{0,1,\cdots,N(T)\}$, we know that the control $\alpha$ and $\widetilde\alpha$ are the same in the interval $[i\eta,(i+1)\eta]$ and the diffusion matrix $\Si$ is just constant matrix. Thus, we know that matrix $\Si^{-1}(\widetilde\alpha(i\eta))=\Si^{-1}(\alpha(i\eta))$, which takes one of the form from $(\Si^{-1}(0), \Si^{-1}(1) ):=\left\{\begin{pmatrix}{}
\frac{1}{\sqrt{2\tau^{(1)}}}\mathbf I_d&0\\
0&\frac{1}{\sqrt{2\tau^{(2)}}}\mathbf I_d
\end{pmatrix}, \begin{pmatrix}{}
\frac{1}{\sqrt{2\tau^{(2)}}}\mathbf I_d&0\\
0&\frac{1}{\sqrt{2\tau^{(1)}}}\mathbf I_d
\end{pmatrix}\right\}$. If $\Si^{-1}(\alpha(i\eta))=\Si^{-1}(0)$, we get 
\begin{align*}
 &\Big\| \Si^{-1}(\widetilde \alpha(i\eta)) \nabla \widetilde G(\bbeta_{t})-\Si^{-1}(\alpha(i\eta))\nabla G(\bbeta_t)\Big\|^2 \\
 =&\sum_{j=1}^d \frac{1}{2\tau^{(1)}}|\nabla_j \widetilde G(\bbeta_{t})-\nabla_j G(\bbeta_t) |^2+\sum_{j=d+1}^{2d} \frac{1}{2\tau^{(2)}}|\nabla_j \widetilde G(\bbeta_{t})-\nabla_j G(\bbeta_t) |^2 \\
 \le &\frac{1}{2\tau^{(1)}}\sum_{j=1}^{2d} |\nabla_j \widetilde G(\bbeta_{t})-\nabla_j G(\bbeta_t) |^2\leq \frac{1}{2\tau^{(1)}}\|\nabla \widetilde G(\bbeta_{t})-\nabla_j  G(\bbeta_t) \|^2.
\end{align*}
Here $\nabla G(\bbeta):=\begin{pmatrix}{}
\nabla L(\bbeta^{(1)})\\
\nabla L(\bbeta^{(2)})
\end{pmatrix}$ and $\nabla \widetilde G(\bbeta):=\begin{pmatrix}{}
\nabla \widetilde L(\bbeta^{(1)})\\
\nabla \widetilde L(\bbeta^{(2)})
\end{pmatrix}$.
The other matrix form of $\Si^{-1}(1)$ will result in the same estimates. We thus get 
\begin{align*}
    \mathcal I\le& \frac{1}{4\tau^{(1)}}\sum_{i=0}^{k-1}\mathbb E_{\nu_T,\bbeta} \Big[
    \int_{i\eta}^{(i+1)\eta}\Big\|  \nabla \widetilde G(\bbeta_{t})-\nabla G(\bbeta_t)\Big\|^2dt\Big]
%   = &\color{red} \frac{1}{4\tau^{(2)}}\sum_{i=0}^{k-1}\mathbb E_{\nu_T,\bbeta} \Big[
%     \int_{k\eta}^{(k+1)\eta}\Big\|  \nabla \widetilde G(\widetilde\bbeta_{t})-\nabla G(\widetilde\bbeta_t)\Big\|^2dt\Big]
\end{align*}
On each fixed interval, for $t\in[k\eta,(k+1)\eta)$ , we have $\mathbb P^G$-Brownian motion and $\mathbb P^{\widetilde G}$-Brownian motion (see examples in Theorem 8.6.6 and Example 8.6.9 \citep{oksendal2003stochastic}),
\begin{align*}
   d \bW_{t}^{G}=&d\bW_{t}+\Si^{-1}(\alpha_t)(\nabla G(\bbeta_t))dt.\\
     d \bW_{t}^{\widetilde G}=&d\bW_{t}+\Si^{-1}(\alpha_t)(\nabla \widetilde G(\widetilde\bbeta_t))dt.
\end{align*}
Plugging the $\mathbb P^G$ (and $\mathbb P^{\widetilde G}$)-Brownian motions to the original dynamics (\ref{reLD}) and (\ref{reSGLD}), we have
\begin{align*}
    d\bbeta_t =\Si(\alpha_t)d\bW_t^G,\quad \text{and}\quad d\widetilde\bbeta_t=\Si(\alpha_t)d\bW^{\widetilde G}_t.
\end{align*}
On each interval $[i\eta,(i+1)\eta)$, $\Si(\alpha_t)$ is a constant matrix, thus we know that the probability distribution of $\{\bbeta_t\}_{t\in[k\eta,(k+1)\eta)}$ and $\{\widetilde\bbeta_t\}_{t\in[k\eta,(k+1)\eta)}$ are the same and we denote as $\mathcal L(\bbeta_t)=\mathcal{L}(\widetilde\bbeta_t)$. The difference is that $\bbeta_t$ is driven by $\mathbb P^G$-Brownian motion and $\widetilde\bbeta_t$ is driven by $\mathbb P^{\widetilde G}$-Brownian motion, which implies that, for $t\in [i\eta,(i+1)\eta)$, we have 
\begin{align}
    \mathbb E_{\nu_T,\bbeta}\Big[\Big\|  \nabla \widetilde G(\bbeta_{t})-\nabla G(\bbeta_t)\Big\|^2 \Big] = \mathbb E_{\mu_T,\widetilde\bbeta}\Big[\Big\|  \nabla \widetilde G(\widetilde\bbeta_{t})-\nabla G(\widetilde\bbeta_t)\Big\|^2 \Big].
\end{align}
Thus, we have the following estimates, 
\begin{align*}
 \mathcal I    \le &\frac{1}{4\tau^{(1)}}\sum_{i=0}^{k-1}\mathbb E_{\mu_T,\widetilde\bbeta} \Big[
    \int_{i\eta}^{(i+1)\eta}\Big\|  \nabla  G(\widetilde\bbeta_t)-\nabla G(\widetilde\bbeta_{\lfloor t/\eta\rfloor\eta})\Big\|^2dt\Big]\\
    &+\frac{1}{4\tau^{(1)}}\sum_{i=0}^{k-1}\mathbb E_{\mu_T,\widetilde\bbeta} \Big[
    \int_{i\eta}^{(i+1)\eta}\Big\|  \nabla G(\widetilde\bbeta_{\lfloor t/\eta\rfloor\eta})-\nabla\widetilde G(\widetilde\bbeta_{\lfloor t/\eta\rfloor\eta})\Big\|^2dt\Big]\\
\le&\frac{C^2}{4\tau^{(1)}}\sum_{i=0}^{k-1}\mathbb E_{\mu_T,\widetilde\bbeta} \Big[
    \int_{i\eta}^{(i+1)\eta}\Big\|  \widetilde\bbeta_t-\widetilde\bbeta_{i\eta})\Big\|^2dt\Big]\cdots \mathcal I_1\\
    &+\frac{1}{4\tau^{(1)}}\sum_{i=0}^{k-1}\mathbb E_{\mu_T,\widetilde\bbeta} \Big[
    \int_{i\eta}^{(i+1)\eta}\Big\|  \nabla G(\widetilde\bbeta_{\lfloor t/\eta\rfloor\eta})-\nabla\widetilde G(\widetilde\bbeta_{\lfloor t/\eta\rfloor\eta})\Big\|^2dt\Big]\cdots \mathcal I_2.
\end{align*}
We now estimate the two terms $\mathcal I_1$ and $\mathcal I_2$ separately. Notice that, following our notation of $\mathbb P^{\widetilde G}$-Brownian motion, for $t\in [i\eta,(i+1)\eta)$, we have 
\begin{align*}
\widetilde \bbeta_t-\widetilde\bbeta_{i\eta}=\Si(\alpha_t) (\bW^{\widetilde G}_t-\bW^{\widetilde G}_{i\eta} )=\Si(\alpha_t) (\bW^{\widetilde G}_t-\bW^{\widetilde G}_{i\eta} ),   
\end{align*}
which implies that (recall that $d\mu_T=d\mathbb P^{\widetilde G}\times \mathbf N^{\widetilde S}$ and $\Si\in \mathbb R^{2d\times 2d}$),  
\begin{align*}
    \mathbb E_{\mu_{T,\widetilde\bbeta}} [\|\widetilde  \bbeta_t-\widetilde\bbeta_{i\eta}\|^2 ]\le 2\tau^{(1)}d\eta + 2\tau^{(2)}d\eta\leq 4\tau^{(2)}d\eta.
\end{align*}
We thus conclude that,
\begin{align*}
   \mathcal I_1\le C^2\frac{\tau^{(2)}}{\tau^{(1)}
    }kd\eta^2. 
\end{align*}
As for the term $\mathcal I_2$, according to Assumption \ref{assump: stochastic_noise}, we obtain that
\begin{align*}
  \mathcal I_2    \le & \frac{\eta\delta}{4\tau^{(1)}}\sum_{i=0}^{k-1}\mathbb E_{\mu_T,\widetilde\bbeta} \Big[C^2\|\widetilde\bbeta_{i\eta}\|^2+\Phi^2\Big].
\end{align*}
Now, we just need to estimate $E_{\mu_T,\widetilde\bbeta} [\|\widetilde\bbeta_{k\eta}\|^2]$ \footnote[2]{In principle, the Wiener measure $\bm{W}$ under $\mathbb P^{\widetilde G}$ is not a Brownian motion, thus the uniform $L^2$ bound used in Lemma.3 may not be appropriate. Instead, we estimate the upper bound using a slightly weaker result.}. On each interval $[i\eta,(i+1)\eta]$, under the measure $d\mu_{T,\widetilde \bbeta}$, we have 
\begin{align*}
    \widetilde\bbeta_{(i+1)\eta}=\widetilde\bbeta_{i\eta}+\Si(\alpha(i\eta))(\bW^{\widetilde G}_{(i+1)\eta}-\bW^{\widetilde G}_{i\eta}),
\end{align*}
which implies that 
\begin{align*}
   & \mathbb E_{\mu_T,\widetilde \bbeta} [\|\widetilde\bbeta_{(i+1)\eta}\|^2 ]\\
   =& \mathbb E_{\mu_T,\widetilde \bbeta} [\|\widetilde\bbeta_{i\eta}\|^2 ]+\mathbb E_{\mu_T,\widetilde \bbeta} [\langle \widetilde\bbeta_{i\eta},\bW^{\widetilde G}_{(i+1)\eta}-\bW^{\widetilde G}_{i\eta} \rangle ]+\mathbb E_{\mu_T,\widetilde \bbeta} [\|\bW^{\widetilde G}_{(i+1)\eta}-\bW^{\widetilde G}_{i\eta} \|^2 ] \\
    =&  \mathbb E_{\mu_T,\widetilde \bbeta} [\|\widetilde\bbeta_{i\eta}\|^2 ]+[2\tau^{(1)}+2\tau^{(2)}]d\eta
    % \le \mathbb E_{\mu_T,\widetilde \bbeta} [\|\widetilde\bbeta_{k\eta}\|^2 ]+4\tau^{(2)}d\eta
\end{align*}
The last equality follows from the independence of $\widetilde\bbeta_{k\eta}$ and $\bW^{\widetilde G}_{(k+1)\eta}-\bW^{\widetilde G}_{k\eta}$, and $\bW^{\widetilde G}$ is a $\mathbb P^{\widetilde G}$-Brownian motion.
By induction, we get 
\begin{align*}
 \mathbb E_{\mu_T,\widetilde \bbeta} [\|\widetilde\bbeta_{i\eta}\|^2 ]   \le 2id[\tau^{(1)}+\tau^{(2)}]\eta\le 2kd[\tau^{(1)}+\tau^{(2)}] . 
\end{align*}
We conclude that, 
\begin{align*}
     \mathcal I_2    \le  \frac{k\eta\delta}{4\tau^{(1)}}\Big(2C^2 [{\tau^{(1)}}{}+{\tau^{(2)}}{} ]kd \eta+\Phi^2\Big),
\end{align*}
which implies that
\begin{align*}
    \mathcal I\le \frac{k\eta}{4\tau^{(1)}}\Big(2\delta C^2 [{\tau^{(1)}}{}+{\tau^{(2)}}{} ]kd \eta+\delta\Phi^2\Big)+C^2\frac{\tau^{(2)}}{\tau^{(1)}
    }kd\eta^2.
\end{align*}

\noindent\textbf{Estimate $\mathcal J$:} According to our definition of the swapping probability, we have, for each $i$, 
\begin{align*}
    \widetilde S(\widetilde\bbeta_{i\eta}) = \min\{1, \widetilde S_{\eta, m, n}(\widetilde\bbeta_{i\eta}^{(1)},\widetilde\bbeta_{i\eta}^{(2)})\},\quad     S(\bbeta_{i\eta}) = \min\{1, S(\bbeta_{i\eta}^{(1)},\bbeta_{i\eta}^{(2)})\},
\end{align*}
which means $|\widetilde S(\widetilde\bbeta_{i\eta})-S(\bbeta_{i\eta})|\le 1$. Denote $C_{\tau}=|\frac{1}{\tau^{(1)}}-\frac{1}{\tau^{(2)}}|$, we have  
 \begin{align*}
 \widetilde S_{\eta, m, n}(\widetilde\bbeta_{i\eta}^{(1)},\widetilde\bbeta_{i\eta}^{(2)})=&\exp\Big(C_{\tau}( \widetilde L(B_{i\eta}|\bbeta_{i\eta}^{(1)})-\widetilde L(B_{i\eta}|\bbeta_{i\eta}^{(2)}))-C_{\tau}^2\frac{\widetilde\sigma^2}{2}\Big)\\
 S(\bbeta_{i\eta}^{(1)},\bbeta_{i\eta}^{(2)})=& \exp\Big(C_{\tau}  (L( \bbeta_{i\eta}^{(1)})- L( \bbeta_{i\eta}^{(2)}))\Big).
 \end{align*}

 Applying Taylor expansion for the exponential function at $C_{\tau}  (L( \bbeta_{k\eta}^{(1)})- L( \bbeta_{k\eta}^{(2)}))$, we have 
 
%  \textcolor{red}{1. It looks weird if there is no expectation, maybe you can move the folliwing inequality inside (52)-(55)?; 2. I feel we should change the following first inequality to $\lesssim$.}
 \begin{align*}
&\E_{\nu_T,\bbeta} \Big[  | \widetilde S_{\eta, m, n}(\widetilde\bbeta_{i\eta}^{(1)},\widetilde\bbeta_{i\eta}^{(2)})- S(\bbeta_{i\eta}^{(1)},\bbeta_{i\eta}^{(2)})|\Big]  \\
\lesssim & \E_{\nu_T,\bbeta} \Big[ S(\bbeta_{i\eta}^{(1)},\bbeta_{i\eta}^{(2)})\Big|C_{\tau}( \widetilde L(B_{k\eta}|\bbeta_{i\eta}^{(1)})-\widetilde L(B_{k\eta}|\bbeta_{i\eta}^{(2)}))-C_{\tau}^2\frac{\widetilde\sigma^2}{2}-C_{\tau}  (L( \bbeta_{i\eta}^{(1)})- L( \bbeta_{i\eta}^{(2)}))\Big|+\text{higher order term}\Big]\\
\le &\E_{\nu_T,\bbeta} \Big[  \Big|C_{\tau}( \widetilde L(B_{i\eta}|\bbeta_{i\eta}^{(1)})-\widetilde L(B_{i\eta}|\bbeta_{i\eta}^{(2)}))-C_{\tau}^2\frac{\widetilde\sigma^2}{2}-C_{\tau}  (L( \bbeta_{i\eta}^{(1)})- L( \bbeta_{i\eta}^{(2)}))\Big|+\mathcal{O}(\widetilde \sigma^2)\Big]
 \end{align*}
 where the last inequality follows from %$\Big|C_{\tau}( \widetilde L(B_{i\eta}|\bbeta_{i\eta}^{(1)})-\widetilde L(B_{i\eta}|\bbeta_{i\eta}^{(2)})-C_{\tau}^2\frac{\widetilde\sigma^2}{2}-C_{\tau}  (L( \bbeta_{i\eta}^{(1)})- L( \bbeta_{i\eta}^{(2)}))\Big| \approx \mathcal O( \widetilde \sigma^2)$ and
 $S(\bbeta_{i\eta}^{(1)},\bbeta_{i\eta}^{(2)})\le 1$. Combining Lemma \ref{vr-estimator}, we thus get the following estimates,
 \begin{align*}
    \mathcal J&= \sum_{i=0}^{k-1}\mathbb E_{\nu_T,\bbeta} \Big[  r|\widetilde S_{\eta, m, n}(\widetilde\bbeta_{i\eta})-S(\bbeta _{i\eta})|\eta\Big] \\
    &\le   r\eta \sum_{i=0}^{k-1}\mathbb E_{\nu_T,\bbeta}\Big[\Big|C_{\tau}( \widetilde L(B_{i\eta}|\bbeta_{i\eta}^{(1)})-\widetilde L(B_{i\eta}|\bbeta_{i\eta}^{(2)}))-C_{\tau}^2\frac{\widetilde\sigma^2}{2}-C_{\tau}  (L( \bbeta_{i\eta}^{(1)})- L( \bbeta_{i\eta}^{(2)}))\Big|+\mathcal{O}(\widetilde \sigma^2) \Big] \\
    % &\le kr\eta \min\left\{1, \left|\frac{1}{\tau^{(1)}}-\frac{1}{\tau^{(2)}}\right|  \widetilde \sigma+\mathcal{O}(\widetilde \sigma^2) \right\},\\
    &\le rk\eta \mathcal{O}(C_{\tau} \widetilde \sigma + \widetilde \sigma^2)= rk\eta \mathcal{O}\left(\left(\frac{m^2}{n}\eta\right)^{1/2} d\right) 
\end{align*}
where the last inequality follows from the Jensen's inequality and the last order holds given a large enough batch size $n$ or a small enough $m$ and $\eta$. % the similar estimates as shown below for the term $\mathcal K$.

\noindent\textbf{Estimate $\mathcal K$:} We now estimate the last term $\mathcal K$, we have 
\begin{align*}\small
    \mathcal K= & \sum_{j=0}^{N(T)}\mathbb E_{\nu_T,\bbeta} \Big[ | \log{\widetilde S_{\eta, m, n}(\widetilde\bbeta_{ \zeta_j})}-\log{S(\bbeta_{\zeta_j})}|\Big]\\
    \le & C_{\tau} \sum_{j=0}^{N(T)}\mathbb E_{\nu_T,\bbeta} \Big[ \Big|[\widetilde L(B_{\zeta_j}|\bbeta_{\zeta_j}^{(1)})-\widetilde L(B_{\zeta_j}|\bbeta_{\zeta_j}^{(2)})-C_{\tau}\frac{\widetilde\sigma^2}{2}]-[L( \bbeta_{\zeta_j}^{(1)})- L( \bbeta_{\zeta_j}^{(2)})]\Big|\Big]\\
    \le &N(T)C_{\tau}^2\mathbb E_{\nu_T,\bbeta}[\widetilde \sigma^2/2]+C_{\tau}\sum_{j=1}^{N(T)} \Var[\widetilde L(B_{\zeta_j}|\bbeta_{\zeta_j}^{(1)})-\widetilde L(B_{\zeta_j}|\bbeta_{\zeta_j}^{(2)}) ]^{1/2}\\
    \le &\frac{N(T) C_{\tau}^2  \widetilde \sigma^2}{2}+N(T)C_{\tau}\widetilde \sigma
\end{align*}
Combining Lemma \ref{vr-estimator} again, we conclude with
\begin{align*}
    \mathcal K\le   C_{\tau}^2 \frac{N(T) \widetilde \sigma^2}{2}+ N(T) C_{\tau}  \widetilde \sigma=N(T)\mathcal{O}\left(\left(\frac{m^2}{n}\eta\right)^{1/2}d\right).
\end{align*}
Combining the estimates of $\mathcal I$, $\mathcal J$, and $\mathcal K$, we complete the proof.
\end{proof}
\begin{remark}
\label{argument_raginsky} After the change of measure, the expectation is under the new measure $\mathbb P^G$ (or $\mathbb P^{\widetilde G}$) instead of the Wiener measure $\mathbb P$. In the estimate of term $\mathcal I$, similar $L^2$ estimates of the term $\mathbb E_{\mu_T,\widetilde \bbeta}[\|\widetilde \bbeta_{(i+1)\eta} \|^2]$ has been obtained in \cite{Maxim17}[Proof of Lemma 7] when there is no swap. The difference is we write the dynamic of $\widetilde\bbeta_{(i+1)\eta}$ with respect to the $\mathbb P^{\widetilde G}$-Brownian motion $\bW^{\widetilde G}$ instead of the $\mathbb P$-Brownian motion $W$. In principle, $W$ under $\mathbb P^{\widetilde G}$ is not a Brownian motion. 
\end{remark}
%%%%%%%%%%%%%%%%%%%%%%%%%%%%%%%%%%%%%%%

We then extend the distance of relative entropy $D_{KL}(\mu_{T}|\nu_{T})$ to the Wasserstein distance $\mathcal{W}_2(\mu_{T},\nu_{T})$ via a weighted transportation-cost inequality of \citet{bolley05}.

\begin{theorem}
\label{numerical_error_W2}
Given a large enough batch size $n$ or a small enough $m$ and $\eta$, we have
\begin{equation}
\label{second_last_W2}
\begin{split}
    \mathcal{W}_2(\mu_{T},\nu_{T})&\leq  \mathcal{O}\left(d k^{3/2}\eta \left(\eta^{1/4}+\delta^{1/4}+\left(\frac{m^2}{n}\eta\right)^{1/8}\right)\right).\\
\end{split}
\end{equation}
\end{theorem}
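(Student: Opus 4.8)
The plan is to upgrade the Kullback--Leibler estimate of Lemma~\ref{numerical_error_KL} to a bound in $\mathcal{W}_2$ by means of a weighted transportation-cost inequality. Concretely, I would invoke the inequality of \citet{bolley05}: if a probability measure $\nu$ on $\mathbb{R}^{2d}$ has a finite square-exponential moment, then for every $\mu\ll\nu$,
\begin{equation*}
  \mathcal{W}_2(\mu,\nu)\le C_\nu\Big(D_{KL}(\mu|\nu)^{1/2}+\big(D_{KL}(\mu|\nu)/2\big)^{1/4}\Big),\qquad C_\nu=2\inf_{\lambda>0}\Big(\frac1\lambda\big(\tfrac32+\log\int e^{\lambda\|\bbeta\|^2}\,d\nu\big)\Big)^{1/2}.
\end{equation*}
Applying this with $\nu=\nu_T$ and $\mu=\mu_T$ reduces the theorem to (i) a bound on $C_{\nu_T}$ that is uniform in $T$, and (ii) the insertion of the divergence estimate of Lemma~\ref{numerical_error_KL}.

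For (i), since $\nu_T$ is the law of the genuine continuous-time reLD~(\ref{reLD}), I would run a Lyapunov/It\^o estimate for $V(\bbeta)=e^{\lambda_0\|\bbeta\|^2}$ directly on that diffusion: the dissipativity Assumption~\ref{assump: dissipitive} gives, blockwise, $\langle\bbeta,\nabla G(\bbeta)\rangle\ge a\|\bbeta\|^2-b$, and It\^o's formula then yields $\frac{d}{dt}\mathbb{E}_{\nu_t}[V]\le -c_1\mathbb{E}_{\nu_t}[V]+c_2$ for suitable $\lambda_0,c_1,c_2$ depending only on $a,b,\tau^{(1)},\tau^{(2)},d$; crucially the swap move merely permutes the blocks $\bbeta^{(1)},\bbeta^{(2)}$, so it preserves $\|\bbeta^{(1)}\|^2+\|\bbeta^{(2)}\|^2$ and contributes nothing to this inequality. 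Hence $\sup_{T\ge0}\int e^{\lambda_0\|\bbeta\|^2}d\nu_T<\infty$ and $C_{\nu_T}\le C_0$ with $C_0=C_0(a,b,\tau^{(1)},\tau^{(2)},d)$ independent of $T$.

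For (ii), I would plug in Lemma~\ref{numerical_error_KL}, $D_{KL}(\mu_T|\nu_T)\le(\Phi_0+\Phi_1\eta)k\eta+N(T)\Phi_2$, together with the elementary bound $N(T)\le k$ (the jump clocks of $\bbeta$ and $\widetilde\bbeta$ tick only at the grid points $i\eta\le T=k\eta$). Since the hypotheses of large $n$ and small $\eta,m$ force $D_{KL}(\mu_T|\nu_T)\ll1$, the $D_{KL}^{1/4}$ term dominates Bolley's bound, so $\mathcal{W}_2(\mu_T,\nu_T)\le C_0\,\mathcal{O}\big(((\Phi_0+\Phi_1\eta)k\eta+k\Phi_2)^{1/4}\big)$. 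Inserting $\Phi_0=\mathcal{O}(\tfrac{m}{\sqrt n}\sqrt\eta\,d)+\tfrac{r\delta\Phi^2}{4\tau^{(1)}}$, $\Phi_1=\mathcal{O}\big(C^2 d\tfrac{\tau^{(2)}}{\tau^{(1)}}+C^2\delta kd\big)$ and $\Phi_2=\mathcal{O}(\tfrac{m}{\sqrt n}\sqrt\eta\,d)$, the summands split into three families: the $\delta$-proportional ones give a $\delta^{1/4}$, the pure learning-rate ones give an $\eta^{1/4}$, and the variance-reduction ones --- with $C_\tau\widetilde\sigma$ and $\widetilde\sigma^2$ controlled through Lemma~\ref{vr-estimator}, hence carrying a factor $\tfrac{m^2}{n}\eta$ --- give a $(\tfrac{m^2}{n}\eta)^{1/8}$; the remaining polynomial dependence on $d$ and $k$ is absorbed into the prefactor $\mathcal{O}(dk^{3/2}\eta)$ of~(\ref{second_last_W2}).

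The main obstacle is step (i): after the change of measure that produced Lemma~\ref{numerical_error_KL}, expectations live under the tilted laws $\mathbb{P}^{G}$ and $\mathbb{P}^{\widetilde G}$, under which $\bW$ is no longer a Brownian motion --- cf.\ Remark~\ref{argument_raginsky} --- whereas Bolley's inequality requires the exponential moment of the honest reference law $\nu_T$. The resolution is to perform the Lyapunov estimate on the true diffusion~(\ref{reLD}) rather than on the discretization, and it is precisely the swap-invariance of $\|\bbeta\|^2$ that keeps the jump term from interfering. A secondary point worth stating explicitly is the verification that the $D_{KL}^{1/2}$ summand in Bolley's bound is genuinely dominated by $D_{KL}^{1/4}$, which is why the theorem is phrased under the ``large enough $n$ or small enough $m,\eta$'' regime.
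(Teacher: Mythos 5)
Your route is essentially the paper's: bound $D_{KL}(\mu_T|\nu_T)$ via Lemma \ref{numerical_error_KL}, then pass to $\mathcal{W}_2$ with the weighted transportation-cost inequality (Corollary 2.3 of \citet{bolley05}), with the exponential-moment bound for the weight constant $C_{\nu_T}$ obtained on the true jump-diffusion (\ref{reLD}), where the swap term drops out because $\|\bbeta^{(1)}\|^2+\|\bbeta^{(2)}\|^2$ is swap-invariant --- exactly the observation used in Lemmas \ref{L2_bound} and \ref{exp_int}. Two remarks on where you deviate. First, your claim that the hypotheses (``large $n$ or small $m,\eta$'') force $D_{KL}(\mu_T|\nu_T)\ll 1$, so that only the quarter-power term in Bolley's bound survives, is not justified: the KL estimate contains the contributions $\frac{r\delta\Phi^2}{4\tau^{(1)}}k\eta$ and $\frac{C^2\delta k d}{2\tau^{(1)}}[\tau^{(1)}+\tau^{(2)}]\,\eta\, k\eta$, and neither $\delta$ nor $T=k\eta$ shrinks under those hypotheses (the paper explicitly works in the regime $k\eta>1$, and $\delta$ is a fixed constant from Assumption \ref{assump: stochastic_noise} --- which is precisely why $\delta^{1/4}$ survives in the final bound). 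The paper avoids this issue by keeping \emph{both} terms, i.e.\ bounding $\mathcal{W}_2^2\lesssim C_\nu^2\big(D_{KL}+\sqrt{D_{KL}}\big)$ and replacing each $\Phi_i$ by $\widetilde\Phi_i=\Phi_i+\sqrt{\Phi_i}$; since the target bound $\mathcal{O}\big(dk^{3/2}\eta(\cdot)\big)$ is loose enough (using $k\eta>1$, $\eta\le1$, $\frac{m^2\eta}{n}\le 1$), this still lands on (\ref{second_last_W2}), so your argument is repaired simply by not discarding the $\sqrt{D_{KL}}$ term. Second, your uniform-in-$T$ exponential moment via a small-$\lambda_0$ Lyapunov function is fine in principle (and tighter), but note the paper's Lemma \ref{exp_int} takes $\lambda=1$ and only proves a bound growing linearly in $t$, i.e.\ $C_{\nu_T}^2=\mathcal{O}(\kappa_0+(b+d\tau^{(2)})k\eta)$; this linear growth is one of the sources of the $k^{3/2}$ prefactor, so either version suffices for the stated result. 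Your cruder count $N(T)\le k$ (versus the paper's $N(T)=\mathcal{O}(k\eta)$) also still fits inside the prefactor under $k\eta>1$.
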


\begin{proof} Before we proceed, we first show in Lemma.\ref{L2_bound} that $\nu_{T}$ has a bounded second moment; the $L_2$ upper bound of $\mu_{T}$ is majorly proved in Lemma.C2 \citep{chen2018accelerating} except that the slight difference is that the constant in the RHS of (C.38) \citet{chen2018accelerating} is changed to account for the stochastic noise. Then applying Corollary 2.3 in \citet{bolley05}, we can upper bound the two Borel probability measures $\mu_{T}$ and $\nu_{T}$ with finite second moments as follows
\begin{equation}
    \mathcal{W}_2(\mu_{T},\nu_{T})\leq C_{\nu} \left[\sqrt{D_{KL}(\mu_{T}|\nu_{T})}+\left(\frac{D_{KL}(\mu_{T}|\nu_{T})}{2}\right)^{1/4}\right],
\end{equation}
where $C_{\nu}=2\inf_{\lambda>0}\left(\frac{1}{\lambda}\left(\frac{3}{2}+\log\int_{\mathbb{R}^d} e^{\lambda \|w\|^2}\nu(dw)\right)\right)^{1/2}$. Applying Lemma \ref{exp_int}, we have
\begin{equation*}
\begin{split}
    \mathcal{W}_2^2(\mu_{T},\nu_{T})&\leq \left(12+8\left(\kappa_0+2b+4d \tau^{(2)}\right)k\eta\right)\left(D_{KL}(\mu_{T}|\nu_{T})+\sqrt{D_{KL}(\mu_{T}|\nu_{T})}\right).\\
    % &\leq \left(12+8\left(\kappa_0+2b+4d \tau^{(2)}\right)\right)k\eta \left((\widetilde \Phi_0 + \widetilde \Phi_1 \sqrt{\eta})k\eta + N(T)\widetilde \Phi_2\right),\\
    % &\lesssim \left(\right)(k\eta)^2 + N(T)k\eta.
\end{split}
\end{equation*}

Combining Lemma.\ref{numerical_error_KL} and $\sqrt{N(T)}\leq N(T)$ and taking $\eta\leq 1$, $k\eta>1$, and $\lambda=1$, we have
\begin{equation*}
\begin{split}
    \mathcal{W}_2^2(\mu_{T,\widetilde\bbeta},\nu_{T, \bbeta})&\leq \left(12+8\left(\kappa_0+2b+4d \tau^{(2)}\right)\right)k\eta \left((\widetilde \Phi_0 + \widetilde \Phi_1 \sqrt{\eta})k\eta + N(T)\widetilde \Phi_2\right),\\
    % &\lesssim \left(\right)(k\eta)^2 + N(T)k\eta.
\end{split}
\end{equation*}
where $\widetilde \Phi_i=\Phi_i+\sqrt{\Phi_i}$ for $i\in \{0, 1,2\}$. In what follows, we have
\begin{equation*}
\begin{split}
    \mathcal{W}_2^2(\mu_{T,\widetilde\bbeta},\nu_{T, \bbeta})&\leq  \left(\Psi_0 + \Psi_1\sqrt{\eta}\right)(k\eta)^2+\Psi_2 k\eta N(T) ,\\
\end{split}
\end{equation*}
where $\Psi_i=\left(12+8\left(\kappa_0+2b+4d \tau^{(2)}\right)\right)\widetilde \Phi_i$ for $i\in \{0, 1,2\}$.

By the orders of $\Phi_0$, $\Phi_1$ and $\Phi_2$ defined in Lemma.\ref{numerical_error_KL}, we have
\begin{equation*}
\begin{split}
    \mathcal{W}_2^2(\mu_{T,\widetilde\bbeta},\nu_{T, \bbeta})&\leq  \mathcal{O}\left(d^2 k^3\eta^2 \left(\eta^{1/2}+\delta^{1/2}+\left(\frac{m^2}{n}\eta\right)^{1/4}+\frac{N(T)}{k\eta}\left(\frac{m^2}{n}\eta\right)^{1/4}\right)\right)\\
    &\leq \mathcal{O}\left(d^2 k^3\eta^2 \left(\eta^{1/2}+\delta^{1/2}+\left(\frac{m^2}{n}\eta\right)^{1/4}\right)\right),\\
\end{split}
\end{equation*}
where $\frac{N(T)}{k\eta}$ can be interpreted as the average swapping rate from time $0$ to $T$ and is of order $\mathcal{O}(1)$. Taking square root to both sides of the above inequality lead to the desired result (\ref{second_last_W2}).

\end{proof}

\section{Proof of Technical Lemmas}
\setcounter{lemma}{0}

\begin{lemma}[Local Lipschitz continuity]\label{local_smooth}
Given a $d$-dimensional centered ball $U$ of radius $R$, $L(\cdot)$ is $D_R$-Lipschitz continuous in that $|L(\bx_i|\bbeta_1)- L(\bx_i|\bbeta_2)|\le \frac{D_R}{N}\|\bbeta_1-\bbeta_2\|$ for $\forall \bbeta_1, \bbeta_2\in U$ and any $i\in\{1,2,\cdots, N\}$,
where $D_R=CR+\max_{i\in\{1,2,\cdots, N\}} N \|\nabla L(\bx_i|\bbeta_{\star})\|+\frac{Cb}{a}$.
\end{lemma}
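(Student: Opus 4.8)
The plan is to reduce the asserted estimate to a uniform control of $\|\nabla L(\bx_i|\cdot)\|$ over the ball $U$, and then to control that gradient by comparison with a critical point of the full energy $L$. Since $U$ is convex, for $\bbeta_1,\bbeta_2\in U$ the segment $\{\bbeta_2+t(\bbeta_1-\bbeta_2):t\in[0,1]\}$ stays in $U$, so by the fundamental theorem of calculus and Cauchy--Schwarz,
\begin{align*}
|L(\bx_i|\bbeta_1)-L(\bx_i|\bbeta_2)| &= \Big|\int_0^1\langle\nabla L(\bx_i|\bbeta_2+t(\bbeta_1-\bbeta_2)),\bbeta_1-\bbeta_2\rangle\,dt\Big| \\
&\le \Big(\sup_{\bbeta\in U}\|\nabla L(\bx_i|\bbeta)\|\Big)\|\bbeta_1-\bbeta_2\|.
\end{align*}
Hence it suffices to prove $\sup_{\bbeta\in U}\|\nabla L(\bx_i|\bbeta)\|\le D_R/N$ for every $i$.

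For this I would fix a critical point $\bbeta_\star$ of $L(\cdot)=\sum_{j=1}^N L(\bx_j|\cdot)$ --- e.g. its global minimizer, which exists because Assumption~\ref{assump: dissipitive} forces $L$ to be coercive --- so that $\nabla L(\bbeta_\star)=0$. Evaluating dissipativity at $\bbeta_\star$ gives $0=\langle\bbeta_\star,\nabla L(\bbeta_\star)\rangle\ge a\|\bbeta_\star\|^2-b$, hence $\|\bbeta_\star\|\le\sqrt{b/a}\le b/a$ in the usual regime $b\ge a$. For $\bbeta\in U$ one has $\|\bbeta\|\le R$, and the per-sample smoothness~(\ref{1st_smooth_condition}) with constant $C_N=C/N$, combined with the triangle inequality, yields
\begin{align*}
\|\nabla L(\bx_i|\bbeta)\| &\le \|\nabla L(\bx_i|\bbeta)-\nabla L(\bx_i|\bbeta_\star)\| + \|\nabla L(\bx_i|\bbeta_\star)\| \\
&\le \frac{C}{N}\big(\|\bbeta\|+\|\bbeta_\star\|\big) + \max_{j\in\{1,\cdots,N\}}\|\nabla L(\bx_j|\bbeta_\star)\| \\
&\le \frac{1}{N}\Big(CR+\frac{Cb}{a}+\max_{j\in\{1,\cdots,N\}}N\|\nabla L(\bx_j|\bbeta_\star)\|\Big) = \frac{D_R}{N}.
\end{align*}
Together with the first display this proves the lemma, and summing over $i$ then shows that the full energy $L$ is $D_R$-Lipschitz on $U$, which is the form invoked in Lemma~\ref{vr-estimator}.

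I do not expect a genuine obstacle here: the argument is just the triangle inequality, the convexity of $U$, and the identity $C=NC_N$. The only point needing a word of justification is the step $\sqrt{b/a}\le b/a$, which I would handle by the standing convention $b\ge a$ (equivalently $a\le 1$) common in this literature, or else absorb into the $\mathcal{O}(\cdot)$ constants propagated downstream. I would also note that $\bbeta_\star$ need not be unique --- only $\nabla L(\bbeta_\star)=0$ together with dissipativity is invoked --- and that no regularity of $L$ beyond $C^1$ (already implied by Assumption~\ref{assump: lip and alpha beta}) is required.
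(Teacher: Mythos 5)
Your proof is correct and follows essentially the same route as the paper: a mean-value/fundamental-theorem step reduces the claim to a gradient bound on $U$, which is then obtained from the per-sample smoothness (\ref{1st_smooth_condition}) and the dissipativity bound on $\|\bbeta_\star\|$ --- exactly the content of the paper's Lemma~\ref{grad_bound}, which you simply inline. Your remark that dissipativity only gives $\|\bbeta_\star\|\le\sqrt{b/a}$, so that $\sqrt{b/a}\le b/a$ needs $b\ge a$ (or an absorption into constants), is a fair observation about a step the paper glosses over as well.
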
{}  
\begin{proof}

For any $\bbeta_1, \bbeta_2\in U$, there exists $\bbeta_3\in U$ that satisfies the mean-value theorem such that
\begin{equation*}
    |L(\bx_i|\bbeta_1) - L(\bx_i|\bbeta_2)|=\langle\nabla L(\bx_i|\bbeta_3), \bbeta_1-\bbeta_2\rangle \leq \|\nabla L(\bx_i|\bbeta_3)\|\cdot \|\bbeta_1-\bbeta_2\|,
\end{equation*}

Moreover, by Lemma \ref{grad_bound}, we have
\begin{equation*}
    |L(\bx_i|\bbeta_1) - L(\bx_i|\bbeta_2)|\leq \|\nabla L(\bx_i|\bbeta_3)\|\cdot \|\bbeta_1-\bbeta_2\|\leq \frac{CR+Q}{N}\|\bbeta_1-\bbeta_2\|.\qed
\end{equation*}

\end{proof}

\begin{lemma}
\label{grad_bound}
Under the smoothness
and dissipativity assumptions \ref{assump: lip and alpha beta}, \ref{assump: dissipitive}, for any $\bbeta\in\mathbb{R}^d$, it follows that
\begin{equation}
   \|\nabla L(\bx_i|\bbeta)\|\leq \frac{C}{N}\|\bbeta\|+\frac{Q}{N}.
\end{equation}
where $Q=\max_{i\in\{1,2,\cdots, N\}} N \|\nabla L(\bx_i|\bbeta_{\star})\|+\frac{bC}{a}$.
\end{lemma}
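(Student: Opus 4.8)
The plan is to localize the gradient of each summand $L(\bx_i|\cdot)$ around a reference critical point $\bbeta_\star$ of the full energy $L(\bbeta)=\sum_{i=1}^N L(\bx_i|\bbeta)$, invoking $C_N$-smoothness for the ``local'' fluctuation and dissipativity only to control $\|\bbeta_\star\|$. First I would note that dissipativity (Assumption~\ref{assump: dissipitive}) renders $L$ coercive, so it attains a global minimum at some $\bbeta_\star$ with $\nabla L(\bbeta_\star)=\vzero$; substituting $\bbeta=\bbeta_\star$ into $\langle\bbeta,\nabla L(\bbeta)\rangle\ge a\|\bbeta\|^2-b$ gives $0\ge a\|\bbeta_\star\|^2-b$, hence $\|\bbeta_\star\|\le\sqrt{b/a}$, which after absorbing the square root into the constant (equivalently, enlarging $b$ so that $b\ge a$) may be read as $\|\bbeta_\star\|\le b/a$.

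Next I would apply the $C_N$-smoothness bound \eqref{1st_smooth_condition} (with $C_N=C/N$) to the pair $(\bbeta,\bbeta_\star)$ and use the triangle inequality:
\begin{equation*}
    \|\nabla L(\bx_i|\bbeta)\|\le\|\nabla L(\bx_i|\bbeta_\star)\|+\frac{C}{N}\|\bbeta-\bbeta_\star\|\le\|\nabla L(\bx_i|\bbeta_\star)\|+\frac{C}{N}\|\bbeta\|+\frac{C}{N}\|\bbeta_\star\|.
\end{equation*}
Then I would bound $\|\nabla L(\bx_i|\bbeta_\star)\|\le\frac1N\max_{j\in\{1,\dots,N\}}N\|\nabla L(\bx_j|\bbeta_\star)\|$ and, using the previous step, $\frac{C}{N}\|\bbeta_\star\|\le\frac{Cb}{aN}$; collecting the two $\bbeta$-independent terms gives
\begin{equation*}
    \|\nabla L(\bx_i|\bbeta)\|\le\frac{C}{N}\|\bbeta\|+\frac1N\Big(\max_{j\in\{1,\dots,N\}}N\|\nabla L(\bx_j|\bbeta_\star)\|+\frac{Cb}{a}\Big)=\frac{C}{N}\|\bbeta\|+\frac{Q}{N},
\end{equation*}
which is exactly the claimed inequality.

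I do not expect a genuine obstacle: this is a standard ``linear growth of the gradient'' estimate of the kind used throughout the SGLD literature. The only two steps that deserve a sentence rather than a computation are (i) the existence of the minimizer $\bbeta_\star$ — coercivity forced by dissipativity is the usual device — and (ii) the constant bookkeeping, in particular the cosmetic passage from the $\sqrt{b/a}$ delivered by dissipativity to the $b/a$ written in the statement. The resulting bound is exactly what the rest of the paper consumes downstream, e.g. in Lemma~\ref{local_smooth} (take $\|\bbeta_3\|\le R$ to obtain $\|\nabla L(\bx_i|\bbeta_3)\|\le(CR+Q)/N$) and in the one-step displacement estimate \eqref{var_3rd}.
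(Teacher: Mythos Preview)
Your proposal is correct and follows essentially the same route as the paper: pick a critical point $\bbeta_\star$ of $L$, use dissipativity at $\bbeta_\star$ to bound $\|\bbeta_\star\|$, then apply $C_N$-smoothness and the triangle inequality. You are in fact slightly more careful than the paper on two points: you justify the existence of $\bbeta_\star$ via coercivity, and you flag that dissipativity yields $\|\bbeta_\star\|\le\sqrt{b/a}$ rather than the $b/a$ the paper writes (the paper simply asserts $\|\bbeta_\star\|\le b/a$ without comment).
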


\begin{proof}
According to the dissipativity assumption, we have 
\begin{equation}
    \la \bbeta_{\star},\nabla L(\bbeta_{\star})\rangle \ge a\|\bbeta^{\star}\|^2-b,
\end{equation}
where $\bbeta_{\star}$ is a minimizer of $\nabla L(\cdot)$ such that $\nabla L(\bbeta_{\star})=0$. In what follows, we have $\|\bbeta_{\star}\|\leq \frac{b}{a}$.

Combining the triangle inequality and the smoothness assumption \ref{assump: lip and alpha beta}, we have
\begin{equation}
\begin{split}
    \|\nabla L(\bx_i|\bbeta)\|\leq & C_N\|\bbeta-\bbeta_{\star}\| + \|\nabla L(\bx_i|\bbeta_{\star})\|\leq C_N\|\bbeta\| + \frac{C_N b}{a} + \|\nabla L(\bx_i|\bbeta_{\star})\|.
\end{split}
\end{equation}

Setting $C_N=\frac{C}{N}$ as in (\ref{2nd_smooth_condition}) and $Q=\max_{i\in\{1,2,\cdots, N\}} \|\nabla L(\bx_i|\bbeta_{\star})\|+\frac{b C}{a}$ completes the proof. 
\qed
\end{proof}

The following lemma is majorly adapted from Lemma C.2 of \citet{chen2018accelerating}, except that the corresponding constant in the RHS of (C.38) is slightly changed to account for the stochastic noise. A similar technique has been established in Lemma 3 of \citet{Maxim17}.
\begin{lemma}[Uniform $L^2$ bounds on replica exchange SGLD]
\label{Uniform_bound}
Under the smoothness
and dissipativity assumptions \ref{assump: lip and alpha beta}, \ref{assump: dissipitive}.  Given a small enough learning rate $\eta\in (0, 1\vee \frac{a}{C^2})$, there exists a positive constant $\Psi_{d,\tau^{(2)}, C, a, b}<\infty$ such that $\sup_{k\geq 1} \E[\|\bbeta_{k}\|^2] < \Psi_{d,\tau^{(2)}, C, a, b}$.
\end{lemma}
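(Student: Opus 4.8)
The plan is to run a one-step drift (Lyapunov) estimate on $V(\bbeta) := \|\bbeta^{(1)}\|^2 + \|\bbeta^{(2)}\|^2$. The structural observation that makes the replica-exchange case no harder than the single-chain case is that the swapping move only exchanges $\bbeta^{(1)}$ and $\bbeta^{(2)}$, hence leaves $V$ unchanged; only the two parallel Langevin updates move $V$. So it suffices to control the recursion coming from one parallel-sampling step, and then each marginal obeys $\E[\|\bbeta_k^{(h)}\|^2]\le\E[V(\bbeta_k)]$, which yields the claim with the \emph{same} constant for both chains (this also explains why only the larger temperature $\tau^{(2)}$ enters $\Psi$).

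For the one-step estimate, write $\bbeta_k^{(h)} = \bbeta_{k-1}^{(h)} - \eta\,g_k^{(h)} + \sqrt{2\eta\tau^{(h)}}\,\bxi_k^{(h)}$ with the unbiased mini-batch gradient $g_k^{(h)} = \frac{N}{n}\sum_{i\in B_k}\nabla L(\bx_i|\bbeta_{k-1}^{(h)})$, expand $\|\bbeta_k^{(h)}\|^2$, and take the conditional expectation given $\bbeta_{k-1}^{(h)}$. The cross term involving $\bxi_k^{(h)}$ vanishes by independence and mean zero; the Gaussian term contributes $2\eta\tau^{(h)}d$; the linear term is $-2\eta\langle\bbeta_{k-1}^{(h)},\nabla L(\bbeta_{k-1}^{(h)})\rangle$ by unbiasedness of the mini-batch gradient; and $\E[\|g_k^{(h)}\|^2|\bbeta_{k-1}^{(h)}]\le 2C^2\|\bbeta_{k-1}^{(h)}\|^2 + 2Q^2$ by Lemma \ref{grad_bound} together with the i.i.d.\ mini-batch bound already used in \eqref{var_3rd}. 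Invoking dissipativity (Assumption \ref{assump: dissipitive}), $\langle\bbeta_{k-1}^{(h)},\nabla L(\bbeta_{k-1}^{(h)})\rangle\ge a\|\bbeta_{k-1}^{(h)}\|^2 - b$, and summing over $h\in\{1,2\}$ with $\tau^{(h)}\le\tau^{(2)}$ gives
\begin{equation*}
\E[V(\bbeta_k)|\bbeta_{k-1}] \;\le\; \big(1 - 2a\eta + 2C^2\eta^2\big)\,V(\bbeta_{k-1}) \;+\; \big(4b + 4Q^2\eta + 4\tau^{(2)}d\big)\,\eta .
\end{equation*}

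Finally, for $\eta$ small as in the hypothesis the contraction factor $\rho := 1 - 2a\eta + 2C^2\eta^2$ lies in $(0,1)$ with $1-\rho = 2\eta(a - C^2\eta)$ bounded below by a positive multiple of $\eta$; taking total expectations gives $\E[V(\bbeta_k)]\le\rho\,\E[V(\bbeta_{k-1})] + \kappa$ with $\kappa := (4b + 4Q^2 + 4\tau^{(2)}d)\eta$ (using $\eta\le1$), and iterating from $k=0$ with the geometric series yields $\E[V(\bbeta_k)]\le\rho^k V(\bbeta_0) + \kappa/(1-\rho)\le V(\bbeta_0) + \mathcal{O}\big((b + Q^2 + \tau^{(2)}d)/a\big)$ uniformly in $k$. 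Since $\|\bbeta_k^{(h)}\|^2\le V(\bbeta_k)$ and $Q$ depends only on $C,a,b$ and the gradient norms at the minimizer, this is the asserted finite $\Psi_{d,\tau^{(2)},C,a,b}$. The only genuinely delicate point is the coefficient of $\eta^2$: one must keep $2C^2\eta^2$ strictly below $2a\eta$ so that $\rho<1$, which is exactly where the hypothesis $\eta\lesssim a/C^2$ is used; the rest is routine. The computation mirrors Lemma~C.2 of \citet{chen2018accelerating} (and Lemma~3 of \citet{Maxim17}), the sole modification being the extra factor of two in the second-moment bound on the \emph{stochastic} gradient.
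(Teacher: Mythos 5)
Your proof is correct and follows essentially the route the paper itself invokes (it gives no self-contained proof, instead adapting Lemma~C.2 of \citet{chen2018accelerating} and Lemma~3 of \citet{Maxim17}): a one-step Lyapunov drift estimate with dissipativity controlling the linear term, Lemma~\ref{grad_bound} controlling the squared stochastic gradient, and the swap handled by the symmetry of $\|\bbeta^{(1)}\|^2+\|\bbeta^{(2)}\|^2$ under exchange — the same observation the paper uses in its continuous-time analogues (Lemmas~\ref{L2_bound} and \ref{exp_int}). Your explicit factor-of-two adjustment in the stochastic-gradient second moment is exactly the "slightly changed constant" the paper alludes to, so no gap remains.
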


% The following lemma is a restatement of Lemma C.5 in \citet{Xu18}.
% \begin{lemma}
% \label{lemma:1}
% Under Assumptions smoothness / dissipitive, for any $\bbeta\in\mathbb{R}^d$, it follows that
% \begin{equation}
%     \E\left[\left\|\nabla F(\bx)-\frac{1}{n}\sum_{i\in B_k}\nabla f_i(\bbeta)\right\|^2\right]\leq \frac{4(n-B)(M\|\bbeta\|+G)^2}{B(n-1)}, 
% \end{equation}
% where $G=\max_{i\in\{1,2,\cdots, n\}}\{\|\nabla f_i(\bbeta^{\star})\}+\frac{bM}{m}$.
% \end{lemma}

\begin{lemma}[Exponential dependence on the variance]
\label{exponential_dependence}
Assume $S$ is a log-normal distribution with mean $u-\frac{1}{2}\sigma^2$ and variance $\sigma^2$ on the log scale. Then $\E[\min(1, S)]=\mathcal{O}(e^{u-\frac{\sigma^2}{8}})$, which is {exponentially smaller} given a large variance $\sigma^2$.
\end{lemma}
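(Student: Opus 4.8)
The plan is to realize the log-normal variable as $S=e^X$ with $X\sim\mathcal N(u-\tfrac12\sigma^2,\sigma^2)$, so that $\min\{1,S\}=e^{\min\{0,X\}}$, and to reduce the whole computation to the Gaussian moment generating function $\E[e^{tX}]=\exp\big(t(u-\tfrac12\sigma^2)+\tfrac12 t^2\sigma^2\big)$. First I would record the elementary pointwise inequality $\min\{1,s\}\le s^{1/2}$, valid for every $s\ge 0$ (if $s\le 1$ then $s\le\sqrt s$; if $s>1$ then $\min\{1,s\}=1\le\sqrt s$). Applying it inside the expectation and evaluating the MGF at $t=\tfrac12$ gives at once
\[
\E[\min\{1,S\}]\ \le\ \E\big[e^{X/2}\big]\ =\ \exp\!\Big(\tfrac12\big(u-\tfrac{\sigma^2}{2}\big)+\tfrac{\sigma^2}{8}\Big)\ =\ \exp\!\Big(\tfrac{u}{2}-\tfrac{\sigma^2}{8}\Big),
\]
which is $\mathcal O\big(e^{u-\sigma^2/8}\big)$ and already displays the announced exponential suppression in $\sigma^2$.

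To pin down the constant (and to see that the bound is essentially sharp), I would alternatively split $\E[\min\{1,S\}]=\E[S\,\mathbf 1_{\{X<0\}}]+\mathbb P(X\ge 0)$. Completing the square in the Gaussian density turns the truncated first moment into $e^{\mu+\sigma^2/2}\,\Phi\big(-(\mu+\sigma^2)/\sigma\big)$ with $\mu=u-\tfrac{\sigma^2}{2}$, i.e. $e^{u}\,\Phi(-u/\sigma-\sigma/2)$, while the second term is exactly $\Phi(u/\sigma-\sigma/2)$. Using the standard sub-Gaussian tail bound $\Phi(-t)\le\tfrac12 e^{-t^2/2}$ for $t\ge 0$ and expanding $(u/\sigma\pm\sigma/2)^2=u^2/\sigma^2\pm u+\sigma^2/4$, each summand is controlled by $e^{\mp u/2-\sigma^2/8-u^2/(2\sigma^2)}$ times its prefactor ($e^u$ and $1$ respectively); in the complementary range where an argument of $\Phi$ is positive one simply bounds $\Phi\le 1$ and checks the inequality directly. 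Collecting terms yields $\E[\min\{1,S\}]\le e^{u/2-\sigma^2/8}$ in all cases, hence the claim with an absolute constant.

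I do not expect a genuine obstacle here — it is a one-line computation once the pointwise inequality $\min\{1,s\}\le\sqrt s$ (equivalently, optimizing $\min\{1,s\}\le s^\alpha$, $\alpha\in[0,1]$) is spotted. The only point deserving a moment of care is the cross term: the prefactor $e^u$ coming from the truncated log-normal mean is exactly halved by the $-(u/\sigma)(\sigma/2)$ contribution of the completed square, so the sharp leading behaviour is $e^{u/2-\sigma^2/8}$ rather than $e^{u-\sigma^2/8}$; in the regime relevant to Lemma~\ref{exp_S}, where $u=\log S(\bbeta^{(1)},\bbeta^{(2)})$, this is harmless, and whenever $S(\bbeta^{(1)},\bbeta^{(2)})\ge 1$ one recovers the stated form verbatim since then $e^{u/2}\le e^{u}$. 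A secondary bookkeeping issue is simply checking the boundary cases $u<-\sigma^2/2$ and $u>\sigma^2/2$ where the Gaussian tail bound must be replaced by $\Phi\le 1$; both follow from $e^{u}\le e^{u/2-\sigma^2/8}$ (resp. $1\le e^{u/2-\sigma^2/8}$) valid on those ranges.
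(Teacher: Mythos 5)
Your proof is correct, and your primary argument takes a genuinely different route from the paper's. The paper proceeds by direct computation: it writes $\E[\min(1,S)]$ as an integral against the log-normal density, splits at $S=1$, changes variables so that the two pieces become Gaussian tail integrals (exactly the quantity $e^{u}\Phi(-u/\sigma-\sigma/2)+\Phi(u/\sigma-\sigma/2)$ appearing in your second step), and then invokes the tail bound $\mathbb{P}(y>\epsilon)\le e^{-\epsilon^2/2}$. Your first argument replaces all of this with the pointwise inequality $\min\{1,s\}\le\sqrt{s}$ and the Gaussian moment generating function at $t=1/2$, giving $\E[\min(1,S)]\le e^{u/2-\sigma^2/8}$ in one line with an explicit absolute constant; your second argument is essentially the paper's computation rewritten through $\Phi$, but with a cleaner case analysis at the thresholds $u=-\sigma^2/2$ and $u=\sigma^2/2$ (the paper instead silently replaces the integration limit $u/\sigma+\sigma/2$ by $-u/\sigma+\sigma/2$, which is only legitimate for $u\ge 0$, and absorbs the cross term $e^{u/2-u^2/(2\sigma^2)}$ into its $\lesssim$). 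What the fractional-moment trick buys is brevity and robustness; what the exact decomposition buys — in your version and the paper's — is the sharp shape of the answer, and your observation that the tight prefactor is $e^{u/2}$ rather than $e^{u}$ is accurate rather than a defect: the stated form $\mathcal{O}(e^{u-\sigma^2/8})$ is recovered verbatim for $u\ge 0$, or for fixed $u$ with the implied constant allowed to depend on $u$, and in the application (Lemma \ref{exp_S}) only the $e^{-\sigma^2/8}$ suppression in the variance is actually used. No gap.
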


\begin{proof} For a log-normal distribution $S$ with mean $u-\frac{1}{2}\sigma^2$ and variance $\sigma^2$ on the log scale, the probability density $f_S(S)$ follows that $\frac{1}{S\sqrt{2\pi \sigma^2}}\exp\left\{-\frac{(\log S-u+\frac{1}{2}\sigma^2)^2}{2\sigma^2}\right\}$. In what follows, we have
\begin{equation*}
\small
\begin{split}
    \E[\min(1, S)]=&\int_{0}^{\infty} \min(1, S) f_S(S)dS=\int_{0}^{\infty} \min(1, S) \frac{1}{S\sqrt{2\pi \sigma^2}}\exp\left\{-\frac{(\log S-u+\frac{1}{2}\sigma^2)^2}{2\sigma^2} \right\}dS\\
\end{split}
\end{equation*}

By change of variable $y=\frac{\log S-u+\frac{1}{2}\sigma^2}{\sigma}$ where $S=e^{\sigma y+u-\frac{1}{2}\sigma^2}$ and $y=-\frac{u}{\sigma}+\frac{\sigma}{2}$ given $S=1$, it follows that
\begin{equation*}
\begin{split}
\small
    &\E[\min(1, S)]\\
    =&\int_{0}^{1} S \frac{1}{S\sqrt{2\pi \sigma^2}}\exp\left\{-\frac{(\log S-u+\frac{1}{2}\sigma^2)^2}{2\sigma^2} \right\}dS+\int_{1}^{\infty} \frac{1}{S\sqrt{2\pi \sigma^2}}\exp\left\{-\frac{(\log S-u+\frac{1}{2}\sigma^2)^2}{2\sigma^2} \right\}dS\\
    =&\int_{-\infty}^{-\frac{u}{\sigma}+\frac{\sigma}{2}} \frac{1}{\sqrt{2\pi\sigma^2}} e^{-\frac{y^2}{2}}\sigma e^{u-\frac{1}{2}\sigma^2+\sigma y}dy+\int_{-\frac{u}{\sigma}+\frac{\sigma}{2}}^{\infty}\frac{1}{\sqrt{2\pi\sigma^2}} e^{-\sigma y-u+\frac{1}{2}\sigma^2}e^{-\frac{y^2}{2}}\sigma e^{u-\frac{1}{2}\sigma^2+\sigma y}dy \\
    =&e^u\int_{-\infty}^{-\frac{u}{\sigma}+\frac{\sigma}{2}} \frac{1}{\sqrt{2\pi}} e^{-\frac{(y-\sigma)^2}{2}}dy+\frac{1}{\sigma}\int_{-\frac{u}{\sigma}+\frac{\sigma}{2}}^{\infty} \frac{1}{\sqrt{2\pi}} e^{-\frac{y^2}{2}}dy\\
    =&e^u\int^{\infty}_{\frac{u}{\sigma}+\frac{\sigma}{2}} \frac{1}{\sqrt{2\pi}} e^{-\frac{z^2}{2}}dz+\frac{1}{\sigma}\int_{-\frac{u}{\sigma}+\frac{\sigma}{2}}^{\infty} \frac{1}{\sqrt{2\pi}} e^{-\frac{y^2}{2}}dy\\
    \leq & e^u\int^{\infty}_{-\frac{u}{\sigma}+\frac{\sigma}{2}} \frac{1}{\sqrt{2\pi}} e^{-\frac{z^2}{2}}dz+\frac{1}{\sigma}\int_{-\frac{u}{\sigma}+\frac{\sigma}{2}}^{\infty} \frac{1}{\sqrt{2\pi}} e^{-\frac{y^2}{2}}dy\\
    \leq &\left(e^u+\frac{1}{\sigma}\right)e^{-\frac{(-\frac{u}{\sigma}+\frac{\sigma}{2})^2}{2}}\lesssim e^{u-\frac{\sigma^2}{8}},\\
\end{split}
\end{equation*}
where the last equality follows from the change of variable $z=\sigma-y$ and the second last inequality follows from the exponential tail bound of the standard Gaussian variable $\mathbb{P}(y>\epsilon)\leq e^{\frac{-\epsilon^2}{2}}$.
\qed
\end{proof}

\begin{lemma}[Uniform $L^2$ bound on replica exchange Langevin diffusion]
\label{L2_bound}
For all $\eta\in (0, 1\wedge\frac{a}{4C^2})$, we have that $$\E[\|(\bbeta^{(1)}_t, \bbeta^{(2)}_t)\|^2]\leq \E[e^{\|\bbeta_0^{(1)}, \bbeta_0^{(2)}\|^2}]+\frac{b+2d\tau^{(2)}}{a}.$$
\end{lemma}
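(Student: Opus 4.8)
The plan is to run the standard Lyapunov/dissipativity argument on the squared norm $\|\bbeta_t\|^2$, where $\bbeta_t=(\bbeta_t^{(1)},\bbeta_t^{(2)})\in\mathbb R^{2d}$ solves the reLD \eqref{reLD}. The key observation is that although the modulating chain $\alpha_t$ jumps, the process $\bbeta_t$ itself has continuous sample paths: the jump only switches the diffusion coefficient between the two block-diagonal matrices $\Si(0)$ and $\Si(1)$, and both satisfy $\mathrm{Tr}\big(\Si(\alpha)\Si(\alpha)^\top\big)=2(\tau^{(1)}+\tau^{(2)})d\le 4\tau^{(2)}d$ (using $\tau^{(1)}\le\tau^{(2)}$). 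Hence the It\^o formula applied to $t\mapsto\|\bbeta_t\|^2$ carries no jump contribution and reads
\begin{equation*}
d\|\bbeta_t\|^2=-2\langle\bbeta_t,\nabla G(\bbeta_t)\rangle\,dt+\mathrm{Tr}\big(\Si(\alpha_t)\Si(\alpha_t)^\top\big)\,dt+2\langle\bbeta_t,\Si(\alpha_t)\,d\bW_t\rangle.
\end{equation*}

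First I would bound the drift. By the dissipativity Assumption \ref{assump: dissipitive} applied to each of $\bbeta^{(1)}$ and $\bbeta^{(2)}$, we have $\langle\bbeta_t,\nabla G(\bbeta_t)\rangle=\langle\bbeta_t^{(1)},\nabla L(\bbeta_t^{(1)})\rangle+\langle\bbeta_t^{(2)},\nabla L(\bbeta_t^{(2)})\rangle\ge a\|\bbeta_t\|^2-b$, where the additive constant of the two components is absorbed into $b$. Combining this with the trace bound gives the pathwise differential inequality $d\|\bbeta_t\|^2\le\big(-2a\|\bbeta_t\|^2+2b+4\tau^{(2)}d\big)\,dt+dM_t$, with $M_t$ the stochastic integral.

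Next I would introduce the localizing stopping times $\theta_R=\inf\{t\ge 0:\|\bbeta_t\|>R\}$, integrate up to $t\wedge\theta_R$, and take expectations; the stopped stochastic integral is a genuine martingale, so it disappears, leaving $\E\|\bbeta_{t\wedge\theta_R}\|^2\le\E\|\bbeta_0\|^2-2a\int_0^t\E\|\bbeta_{s\wedge\theta_R}\|^2\,ds+(2b+4\tau^{(2)}d)t$. Gr\"{o}nwall's inequality (equivalently, comparison with the linear ODE $\dot v=-2av+2b+4\tau^{(2)}d$) then yields $\E\|\bbeta_{t\wedge\theta_R}\|^2\le e^{-2at}\E\|\bbeta_0\|^2+\frac{b+2\tau^{(2)}d}{a}\big(1-e^{-2at}\big)$. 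Letting $R\to\infty$ and applying Fatou's lemma gives $\E\|\bbeta_t\|^2\le e^{-2at}\E\|\bbeta_0\|^2+\frac{b+2\tau^{(2)}d}{a}\le\E\|\bbeta_0\|^2+\frac{b+2d\tau^{(2)}}{a}$; finally, since $x\le e^x$ for $x\ge 0$, we may replace $\E\|\bbeta_0\|^2$ by $\E\big[e^{\|\bbeta_0\|^2}\big]$, obtaining the claimed bound. Note that the restriction $\eta\in(0,1\wedge\tfrac{a}{4C^2})$ is inherited from the surrounding construction of the family of reLD processes and is not actually needed here, since the swap merely permutes the diagonal blocks of a diffusion matrix of fixed trace, and the smoothness constant $C$ never enters.

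The main obstacle is the rigorous bookkeeping around the jump-modulated SDE: one must verify carefully that $\bbeta_t$ is path-continuous so that the It\^o--Meyer formula for jump-diffusions collapses to the ordinary It\^o formula with the time-dependent coefficient $\Si(\alpha_t)$, and that the local martingale $M_t$ is genuinely controlled by the localization argument (so its expectation vanishes after stopping at $\theta_R$, and the limit $R\to\infty$ is legitimate by Fatou). Everything else — the dissipativity step, the trace computation, and the Gr\"{o}nwall estimate — is routine.
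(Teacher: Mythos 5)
Your proof is correct and follows essentially the same route as the paper: the Lyapunov/dissipativity argument of Raginsky et al.'s Lemma 3 applied to $\|\bbeta_t\|^2$, with the crucial point in both cases being that the swap mechanism contributes nothing to the evolution of the squared norm. The only (equivalent) difference is how that point is justified: the paper invokes the generalized It\^{o} formula for the switching process and cancels the jump-induced drift via the symmetry $\|(\bbeta^{(1)},\bbeta^{(2)})\|^2=\|(\bbeta^{(2)},\bbeta^{(1)})\|^2$, whereas you note that in the regime-switching representation the path of $\bbeta_t$ is continuous and $\Tr\big(\Si(\alpha)\Si(\alpha)^{\top}\big)$ is identical in both states, so the ordinary It\^{o} formula already suffices.
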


\begin{proof}
Consider $L_t(\bbeta_t)=\|\bbeta_t\|^2$, where $\bbeta_t=(\bbeta_t^{(1)}, \bbeta_t^{(2)})\in\mathbb{R}^{2d}$. The proof is marjorly adapted from Lemma 3 in \citet{Maxim17}, except that the generalized It\^{o} formula (formula 2.7 in page 29 of \citet{yin_zhu_10}) is used to handle the jump operator, which follows that
\begin{equation*}
\begin{split}
    dL_t=&-2\langle \bbeta_t, \nabla G(\bbeta_t) \rangle + 2d(\tau^{(1)}+\tau^{(2)}) dt+2\bbeta_t^{T}\Sigma(\alpha_t) dW(t)\\
    &\ \ \ +\underbrace{rS_{\eta, m, n}(\bbeta_t^{(1)},\bbeta_t^{(2)})\cd (L_t(\bbeta^{(2)},\bbeta^{(1)})-L_t(\bbeta_t^{(1)},\bbeta_t^{(2)}))}_{\text{Jump-inducing drift}} + M_1(t)+M_2(t),
\end{split}
\end{equation*}
where $\nabla G(\bbeta):=\begin{pmatrix}{}
\nabla L(\bbeta^{(1)})\\
\nabla L(\bbeta^{(2)})
\end{pmatrix}$ and $M_1(t)$ and $M_2(t)$ are two martingales defined in formula 2.7 in \citet{yin_zhu_10}). Due to the definition of $L_t(\bbeta_t)$, we have $L_t(\bbeta_t^{(1)}, \bbeta_t^{(2)})=L_t(\bbeta_t^{(2)}, \bbeta_t^{(1)})$, which implies that the Jump-inducing drift actually disappears. Taking expectations and applying the margingale property of the It\^{o} integral, we have the almost the same upper bound as Lemma 3 in \citet{Maxim17}. Combining $\E[\|\bbeta_0\|^2]\leq \log \E[e^{{\|\bbeta_0\|^2}}]$ completes the proof.
\end{proof}

\begin{lemma}[Exponential integrability of replica exchange Langevin diffusion]
\label{exp_int}
For all $\tau\leq \frac{2}{a}$, it follows that
\begin{equation*}
    \log \E[e^{\|(\bbeta_t^{(1)}, \bbeta_t^{(2)})\|^2}]\leq \underbrace{\log \E[e^{\|(\bbeta_0^{(1)}, \bbeta_0^{(2)})\|^2}]}_{\kappa_0} + 2(b+2d\tau^{(2)})t.
\end{equation*}
\end{lemma}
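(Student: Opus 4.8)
I would run a Lyapunov-function argument with $V(\bbeta):=e^{\|\bbeta\|^2}$ on $\mathbb{R}^{2d}$, $\bbeta=(\bbeta^{(1)},\bbeta^{(2)})$, which is the exponential analogue of the quadratic Lyapunov function used in Lemma~\ref{L2_bound} and in \citet{Maxim17}. First, apply the generalized It\^o formula for jump-diffusions (formula~(2.7) of \citet{yin_zhu_10}, exactly as in the proof of Lemma~\ref{L2_bound}) to $V(\bbeta_t)$ along reLD~(\ref{reLD}). Since $V$ is symmetric under exchanging $\bbeta^{(1)}\leftrightarrow\bbeta^{(2)}$, the jump-induced drift $rS_{\eta,m,n}(\bbeta^{(1)}_t,\bbeta^{(2)}_t)\big(V(\bbeta^{(2)}_t,\bbeta^{(1)}_t)-V(\bbeta^{(1)}_t,\bbeta^{(2)}_t)\big)$ vanishes identically — equivalently, swapping the two diffusion matrices in~(\ref{reLD}) leaves $\|\bbeta_t\|^2$ unchanged — so $dV(\bbeta_t)=\mathcal{A}V(\bbeta_t)\,dt+dM_t$, where $\mathcal{A}$ is the diffusion part of the generator of~(\ref{reLD}) and $M_t$ is a local martingale.

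Next, estimate $\mathcal{A}V$ pointwise. With $\nabla V(\bbeta)=2\bbeta\,e^{\|\bbeta\|^2}$ and $\nabla^2V(\bbeta)=\big(2I_{2d}+4\bbeta\bbeta^\top\big)e^{\|\bbeta\|^2}$, using that $\Sigma(\alpha_t)\Sigma(\alpha_t)^\top$ is diagonal with entries in $(0,2\tau^{(2)}]$ for either state of $\alpha_t$, and applying the dissipativity Assumption~\ref{assump: dissipitive} to each replica ($\langle\bbeta,\nabla G(\bbeta)\rangle=\sum_{h}\langle\bbeta^{(h)},\nabla L(\bbeta^{(h)})\rangle\ge a\|\bbeta\|^2-2b$), a direct computation yields
\begin{equation*}
\mathcal{A}V(\bbeta)\le e^{\|\bbeta\|^2}\Big[c(\tau,a)\,\|\bbeta\|^2+2\big(b+2d\tau^{(2)}\big)\Big],
\end{equation*}
where $c(\tau,a)$ is the net coefficient arising from the competition between the dissipative drift and the diffusion term; the temperature hypothesis $\tau\le 2/a$ is exactly what forces $c(\tau,a)\le 0$. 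Discarding that term gives the linear Lyapunov inequality $\mathcal{A}V(\bbeta)\le 2\big(b+2d\tau^{(2)}\big)\,V(\bbeta)$.

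Finally, pass from the pointwise inequality to the integrated bound by localization, which is the one genuinely delicate point, since finiteness of $\E[e^{\|\bbeta_t\|^2}]$ is precisely what is being established and one cannot take expectations before truncating. Let $\zeta_R:=\inf\{t\ge 0:\|\bbeta_t\|\ge R\}$; on $[0,\zeta_R]$ the integrand of $\int_0^{\cdot}\langle\nabla V(\bbeta_s),\Sigma(\alpha_s)\,d\bW_s\rangle$ is bounded, so that integral is a true martingale, and taking expectations in the integrated form of It\^o's formula with the Lyapunov inequality gives
\begin{equation*}
\psi_R(t):=\E\big[V(\bbeta_{t\wedge\zeta_R})\big]\le\E\big[V(\bbeta_0)\big]+2\big(b+2d\tau^{(2)}\big)\int_0^t\psi_R(s)\,ds,
\end{equation*}
hence $\psi_R(t)\le\E[V(\bbeta_0)]\exp\{2(b+2d\tau^{(2)})t\}$ by Gr\"onwall. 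The drift of~(\ref{reLD}) has at most linear growth (Assumption~\ref{assump: lip and alpha beta} and Lemma~\ref{grad_bound}) and the diffusion is bounded, so the process is non-explosive and $\zeta_R\uparrow\infty$ a.s.; Fatou's lemma then gives $\E[e^{\|\bbeta_t\|^2}]\le\E[e^{\|\bbeta_0\|^2}]\exp\{2(b+2d\tau^{(2)})t\}$, and taking logarithms is the claim with $\kappa_0=\log\E[e^{\|\bbeta_0\|^2}]$ (if $\kappa_0=\infty$ there is nothing to prove). The $\mathcal{A}V$ computation and the vanishing of the swap contribution by symmetry of $V$ are routine and already implicit in Lemma~\ref{L2_bound}; the truncation-plus-Fatou step, together with the non-explosion check, is what legitimizes the Gr\"onwall argument.
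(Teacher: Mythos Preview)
Your proposal is correct and follows exactly the paper's route: apply the generalized It\^o formula of \citet{yin_zhu_10} to $V(\bbeta)=e^{\|\bbeta\|^2}$, observe that the swap-induced jump term vanishes by the symmetry $V(\bbeta^{(1)},\bbeta^{(2)})=V(\bbeta^{(2)},\bbeta^{(1)})$ (as in Lemma~\ref{L2_bound}), and then reduce to the single-chain argument of Lemma~4 in \citet{Maxim17}. The paper's own proof is a two-line sketch that cites those two references; you have simply spelled out the pointwise generator estimate, the role of the temperature constraint in killing the quadratic coefficient, and the localization-plus-Fatou step that \citet{Maxim17} carries out in its Lemma~4.
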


\begin{proof}
The proof is marjorly adapted from Lemma 4 in \citet{Maxim17}. The only difference is that the generalized It\^{o} formula (formula 2.7 in \citet{yin_zhu_10}) is used again as in Lemma \ref{L2_bound}. Consider $L(t, \bbeta_t)=e^{\|\bbeta_t\|^2}$, where $\bbeta=(\bbeta_t^{(1)}, \bbeta_t^{(2)})\in\mathbb{R}^{2d}$. Due to the special structure that $L(t, \bbeta_t)$ is invariant under the swaps of $(\bbeta_t^{(1)}, \bbeta_t^{(2)})$, the generator of $L(t, \bbeta_t)$ with swaps is the same as the one without swaps. Therefore, the desired result follows directly by repeating the steps from Lemma 4 in \citet{Maxim17}.
\end{proof}

% \section{More Empirical Study on on Simulations}
% \label{subsec:Toy-multi-modal-example_main_body}

% % \ref{Ex_1_additional_traceplots}
% \begin{figure}
% \subfigure[$\theta^{(1)}$ of reSGLD]{\includegraphics[width=0.24\columnwidth]{Figures-Ex1-1Dmixture/trace_are_sgld_theta_1_ID=1_ETA1=0\lyxdot 00001_TAU1=1\lyxdot 0_ETA2=0\lyxdot 00001_TAU2=100000\lyxdot 0}}
% \subfigure[$\theta^{(2)}$ of reSGLD]{\includegraphics[width=0.24\columnwidth]{Figures-Ex1-1Dmixture/trace_are_sgld_theta_2_ID=1_ETA1=0\lyxdot 00001_TAU1=1\lyxdot 0_ETA2=0\lyxdot 00001_TAU2=100000\lyxdot 0}}
% \subfigure[$\theta^{(1)}$ of VR-reSGLD]{\includegraphics[width=0.24\columnwidth]{Figures-Ex1-1Dmixture/trace_svrg_are_sgld_theta_1_ID=1_ETA1=0\lyxdot 00001_TAU1=1\lyxdot 0_ETA2=0\lyxdot 00001_TAU2=500\lyxdot 0_RATECV=15_RATESIG2=17}}
% \subfigure[$\theta^{(2)}$ of VR-reSGLD]{\includegraphics[width=0.24\columnwidth]{Figures-Ex1-1Dmixture/trace_svrg_are_sgld_theta_2_ID=1_ETA1=0\lyxdot 00001_TAU1=1\lyxdot 0_ETA2=0\lyxdot 00001_TAU2=500\lyxdot 0_RATECV=15_RATESIG2=17}}\caption{Traceplots of reSGLD and VR-reSGLD\label{Ex_1_additional_traceplots}}
% \end{figure}

\newpage

\section{More Empirical Study on Image Classification}
\label{CV_more}

% \subsection{Hyperparameters}
% \label{hyper-par}
% For SGD, SGHMC, reSGHMC and VR-reSGHMC, we fix the learning rate $\eta_k=\text{2e-6}$ in the first 200 epochs and decay it by 0.984 every epoch afterwards. For SGHMC and the low-temperature processes of reSGHMC and VR-reSGHMC, we set an annealing temperature following $\tau_k^{(1)}={0.01}/{1.02^k}$ to conduct simulated annealing and accelerate the optimization; with respect to the high temperature process, we adopt a larger learning rate $\eta_k^{(2)}=1.5\eta_k^{(1)}$ and temperature $\tau_k^{(2)}=5\tau_k^{(1)}$. The correction factor is set to $F_k=F_0 1.02^k$ to counteract annealing temperatures and $F_0$ is fixed at $\text{1e5}$.

\subsection{Training cost}
\label{cost}

The batch size of $n=512$ almost doubles the training time and memory, which becomes too costly in larger experiments. A frequent update of control variates using $m=50$ is even more time-consuming and is not acceptable in practice. The choice of $m$ gives rise to a tradeoff between computational cost and variance reduction. As such, we choose $m=392$, which still obtains significant reductions of the variance at the cost of 40\% increase on the training time. Note that when we set $m=2000$, the training cost is only increased by 8\% while the variance reduction can be still at most 6 times on CIFAR10 and 10 times on CIFAR100.

\subsection{Adaptive coefficient}
\label{adaptive_c}
We study the correlation coefficient of the noise from the current parameter $\bbeta_k^{(h)}$, where $h\in\{1,2\}$, and the control variate $\bbeta^{(h)}_{m\lfloor \frac{k}{m}\rfloor}$. As shown in Fig.\ref{cifar_adaptive_c_figs}, the correlation coefficients are only around -0.5 due to the large learning rate in the early period. This implies that VR-reSGHMC may overuse the noise from the control variates and thus fails to fully exploit the potential in variance reduction. In spirit to the adaptive variance, we try the adaptive correlation coefficients to capture the pattern of the time-varying correlation coefficients and present it in Algorithm \ref{adaptive_alg}. 
\begin{algorithm}[tb]
  \caption{Adaptive variance-reduced replica exchange SGLD. The learning rate and temperature can be set to dynamic to speed up the computations. A larger smoothing factor $\gamma$ captures the trend better but becomes less robust.}
  \label{adaptive_alg}
\begin{algorithmic}
\footnotesize
% \STATE{\textbf{Input } Control variate $ \bbeta^{(h)}$, learning rate $\eta^{(h)}$ and temperature $\tau^{(h)}$ for $h\in\{1,2\}$, correction factor $F$.}
\STATE{\textbf{Input } Initial parameters $\bbeta_0^{(1)}$ and $\bbeta_0^{(2)}$, learning rate $\eta$ and temperatures $\tau^{(1)}$ and $\tau^{(2)}$, correction factor $F$.}
\REPEAT
  \STATE{\textbf{Parallel sampling} \text{Randomly pick a mini-batch set $B_{k}$ of size $n$.}}
  \begin{equation*}
      \bbeta^{(h)}_{k} =  \bbeta^{(h)}_{k-1}- \eta \frac{N}{n}\sum_{i\in B_{k}}\nabla  L( \bx_i|\bbeta^{(h)}_{k-1})+\sqrt{2\eta\tau^{(h)}} \bxi_{k}^{(h)}, \text{ for } h\in\{1,2\}.
  \end{equation*}
%   \vskip -0.5in
  \STATE{\textbf{Variance-reduced energy estimators} Update $\tiny{\widehat L^{(h)}=\sum_{i=1}^N L\left(\bx_i\Big| \bbeta^{(h)}_{m\lfloor \frac{k}{m}\rfloor}\right)}$ every $m$ iterations.}
  \begin{equation*}
  \small
%   \label{__vr_loss}
      \widetilde L(B_{k}|\bbeta_{k}^{(h)})=\frac{N}{n}\sum_{i\in B_{k}} L(\bx_i| \bbeta_{k}^{(h)}) + \widetilde {c}_k\cdot\left[ \frac{N}{n}\sum_{i\in B_{k}} L\left(\bx_i\Big| \bbeta^{(h)}_{m\lfloor \frac{k}{m}\rfloor}\right) -\widehat L^{(h)}\right] , \text{ for } h\in\{1,2\}. 
  \end{equation*}
  \IF{$k\ \text{mod}\ m=0$} 
%   \STATE{Compute sample variance $\tilde \sigma_k^2$ based on $\widetilde L(\widetilde B_{i}|\bbeta_{k}^{(1)})$ and $\widetilde L(\widetilde B_{i}|\bbeta_{k}^{(2)})$, where $i\in\{1,\cdots, \tilde m\}$.}
%   \STATE{Obtain an estimator $\tilde \sigma_k^2$ for $\Var(\widetilde L( B_k|\bbeta_{k}^{(1)})-\widetilde L( B_{k}|\bbeta_{k}^{(2)}))$.}
  \STATE{Update $\widetilde \sigma^2_{k} = (1-\gamma)\widetilde \sigma^2_{k-m}+\gamma \sigma^2_{k}$, where $\sigma_k^2$ is an estimate for $\Var\left(\widetilde L( B_k|\bbeta_{k}^{(1)})-\widetilde L( B_{k}|\bbeta_{k}^{(2)})\right)$.}
  \STATE{Update $\widetilde {c}_k=(1-\gamma)\widetilde {c}_{k-m}+\gamma c_k$, where $c_k$ is an estimate for $-\frac{\text{Cov}\big( L(B|\bbeta_k^{(h)}),  L\big(B|\bbeta_{m\lfloor \frac{k}{m}\rfloor}^{(h)}\big)\big)}{ \Var\big(L\big(B|\bbeta_{m\lfloor \frac{k}{m}\rfloor}^{(h)}\big)\big)}$.}
  \ENDIF
   
%   \STATE{Obtain an unbiased estimate $\tilde \sigma_{m+1}^2$ for $\sigma^2$.}
   
  \STATE{\textbf{Bias-reduced swaps} Swap $ \bbeta_{k+1}^{(1)}$ and $ \bbeta_{k+1}^{(2)}$ if $u<\widetilde S_{\eta,m,n}$, where $u\sim \text{Unif }[0,1]$, and $\widetilde S_{\eta,m,n}$ follows}
  \begin{equation*}
  \footnotesize
      \textstyle \widetilde S_{\eta,m,n}=\exp\left\{ \left(\frac{1}{\tau^{(1)}}-\frac{1}{\tau^{(2)}}\right)\left(  \widetilde L( B_{k+1}|\bbeta_{k+1}^{(1)})-  \widetilde L( B_{k+1}|\bbeta_{k+1}^{(2)})-\frac{1}{F}\left(\frac{1}{\tau^{(1)}}-\frac{1}{\tau^{(2)}}\right)\widetilde \sigma^2_{m\lfloor \frac{k}{m}\rfloor}\right)\right\}.
  \end{equation*}
%   \IF{$u<\widehat S$}
%   \STATE Swap $ \bbeta_{k+1}^{(1)}$ and $ \bbeta_{k+1}^{(2)}$.
%   \ENDIF
    
  \UNTIL{$k=k_{\max}$.}
    \vskip -1 in
\STATE{\textbf{Output:}  $\{\bbeta_{i\mathbb{T}}^{(1)}\}_{i=1}^{\lfloor k_{\max}/\mathbb{T}\rfloor}$, where $\mathbb{T}$ is the thinning factor.}
\end{algorithmic}
\end{algorithm}

As a result, we can further improve the performance of variance reduction by as much as 40\% on CIFAR10 and 30\% on CIFAR100 in the first 200 epochs. As the training continues and the learning rate decreases, the correlation coefficient is becoming closer to -1. In the late period, there is still 10\% improvement compared to the standard VR-reSGHMC.

In a nut shell, we can try adaptive coefficients in the early period when the absolute value of the correlation is lower than 0.5 or just use the vanilla replica exchange stochastic gradient Monte Carlo to avoid the computations of variance reduction.

\begin{figure*}[!ht]
  \centering
  \subfigure[\footnotesize{CIFAR10 \& m=50} ]{\includegraphics[width=3.2cm, height=2.6cm]{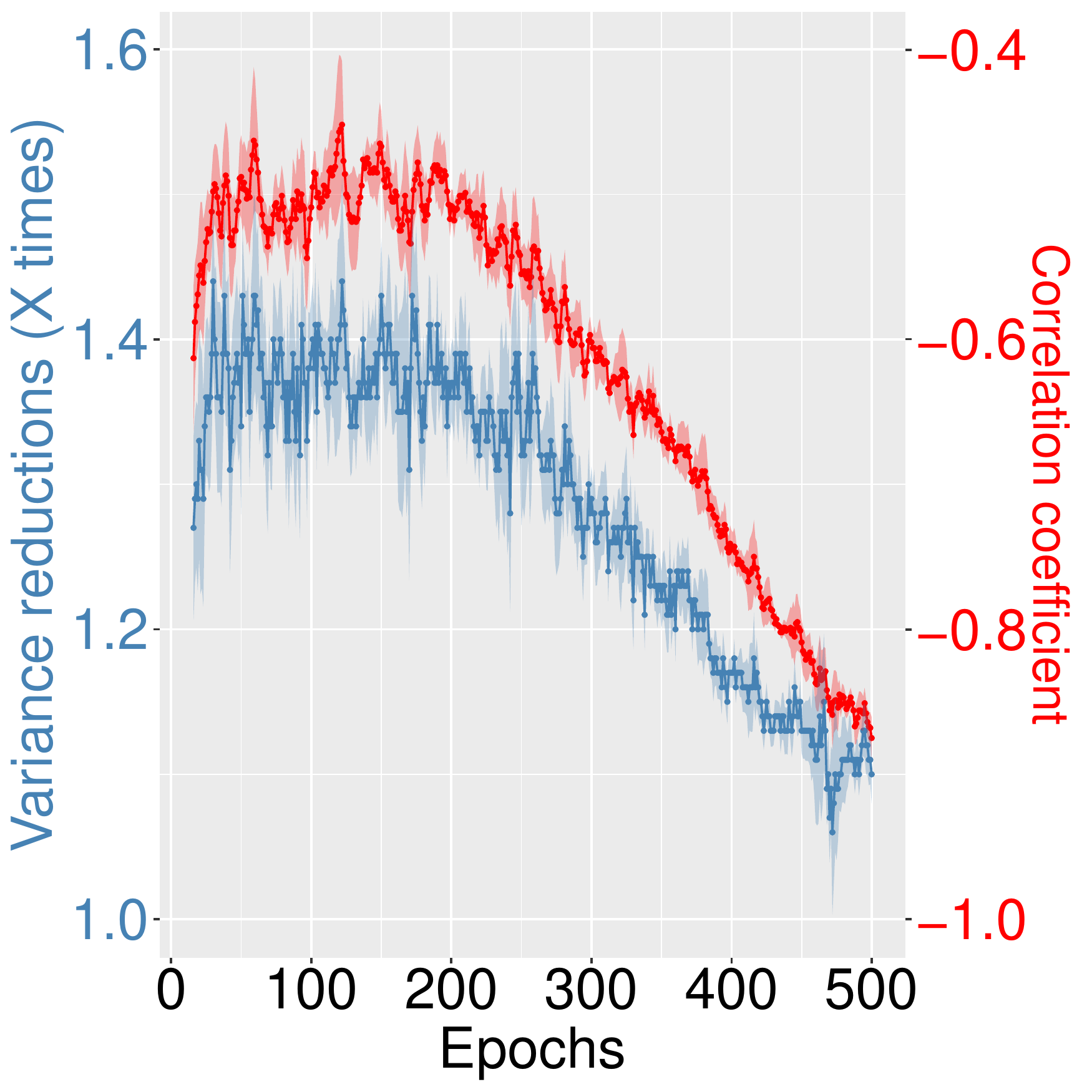}}\label{fig: 2a}\enskip
  \subfigure[\footnotesize{CIFAR100 \& m=50} ]{\includegraphics[width=3.2cm, height=2.6cm]{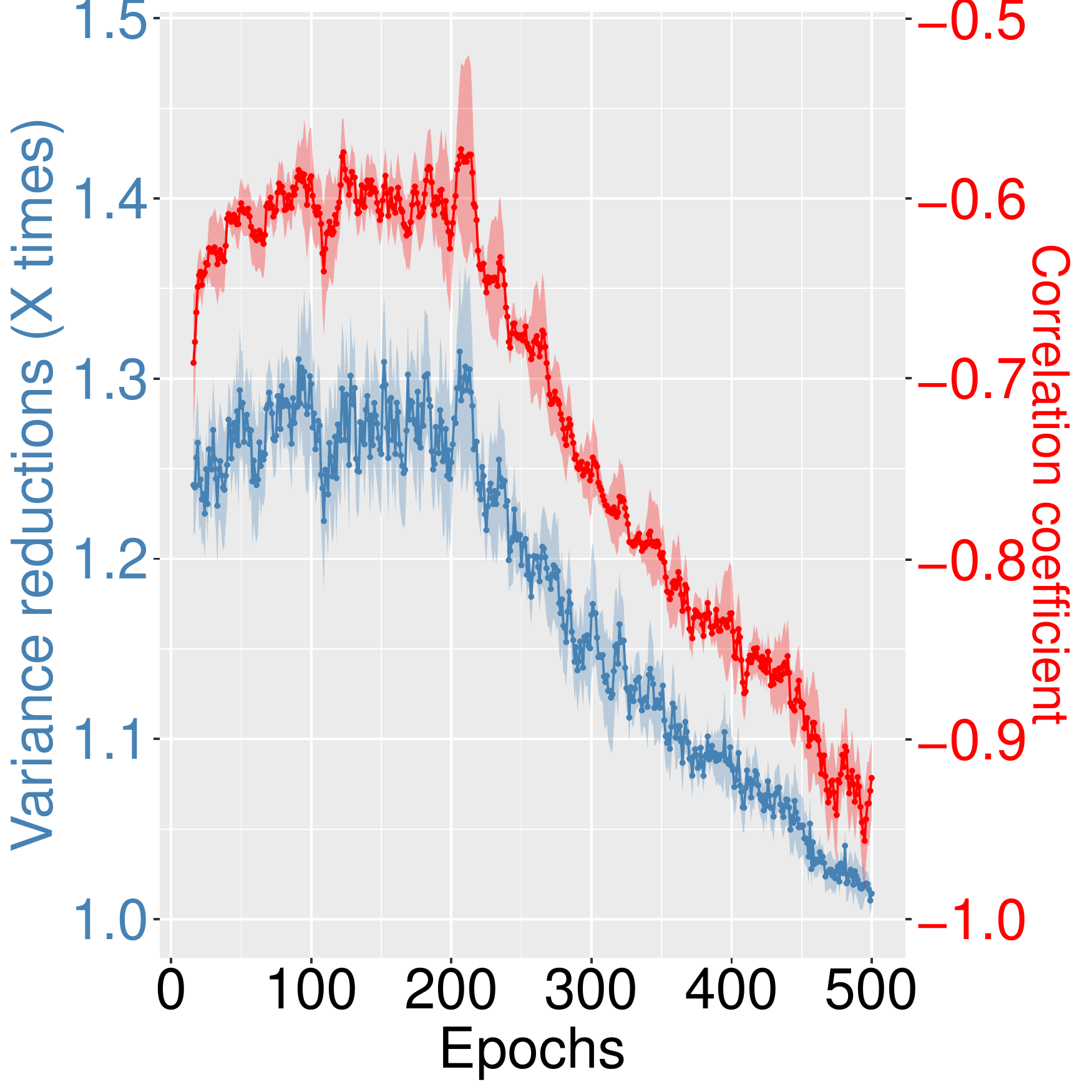}}\label{fig: 2b}\enskip
  \subfigure[CIFAR10 \& m=392]{\includegraphics[width=3.2cm, height=2.6cm]{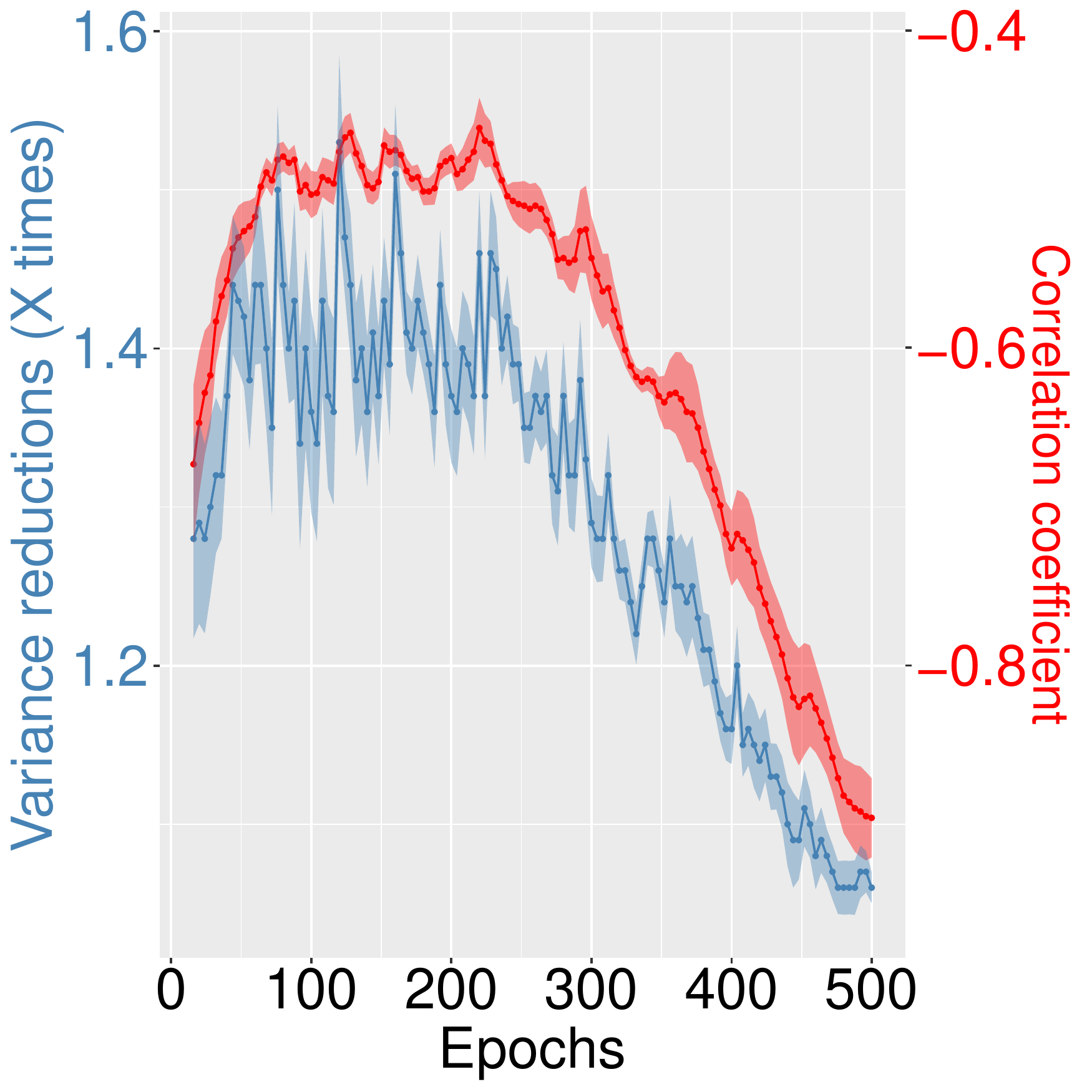}}\label{fig: 2c}\enskip
  \subfigure[CIFAR100 \& m=392]{\includegraphics[width=3.2cm, height=2.6cm]{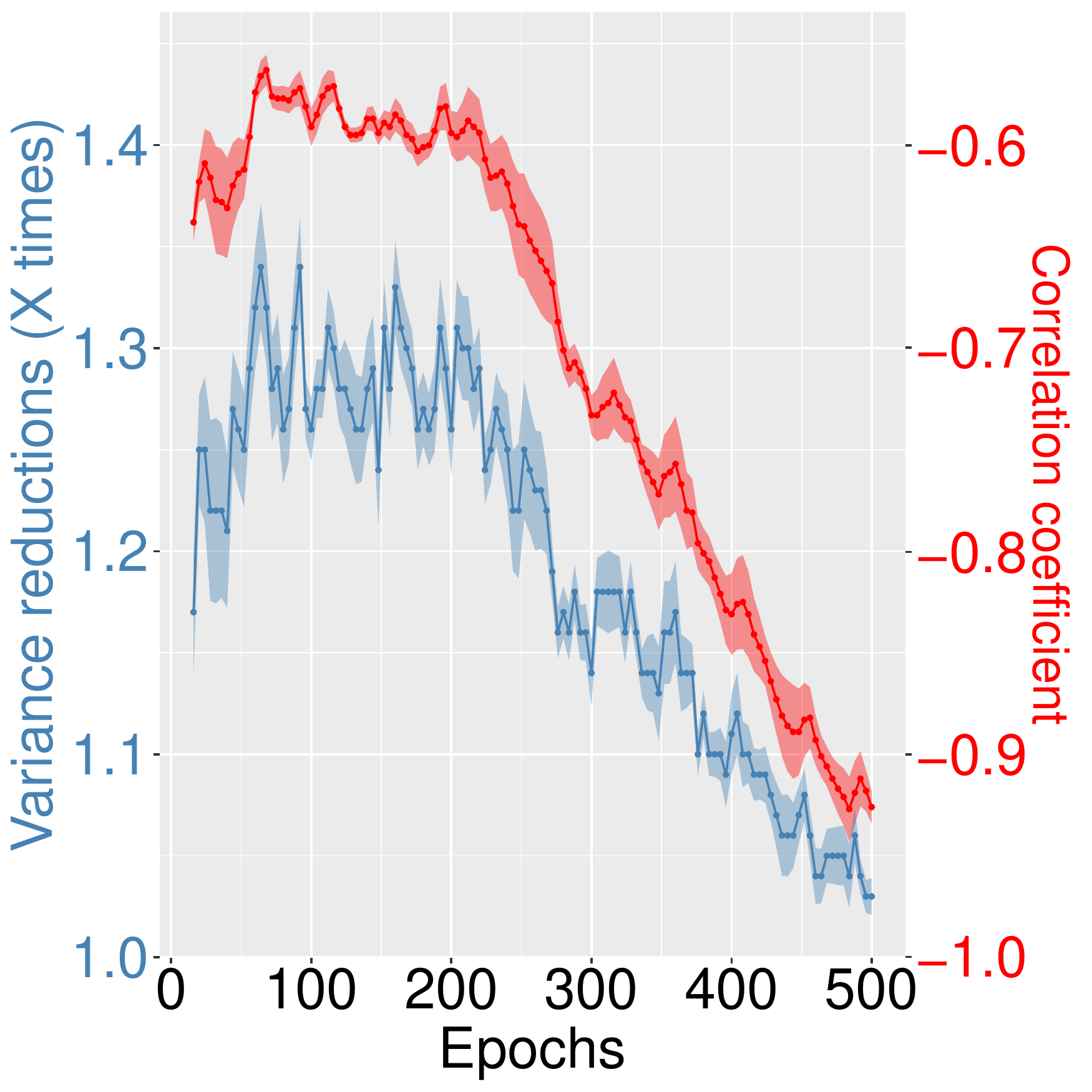}}\label{fig: 2d}
    \vskip -0.15in
  \caption{A study of variance reduction techniques using adaptive coefficient and non-adaptive coefficient on CIFAR10 \& CIFAR100 datasets.}
  \label{cifar_adaptive_c_figs}
%   \vspace{-1.3em}
\end{figure*}

\section{More Empirical Study on Uncertainty Quantification}
\label{UQ_more}

To avoid sacrificing the prediction power for the known classes, we also include the uncertainty estimate on CIFAR10 using the Brier score  (BS) \footnote{$\text{BS}=\frac{1}{N}\sum_{i=1}^N \sum_{j=1}^R (f_{ij}-o_{ij})^2$, where $f_i$ is the predictive probability and $o_i$ is actual output of the event which is 1 if it happens and 0 otherwise; $N$ is the number of instances and $R$ is the number of classes.} 
and compare it  with  the  estimates on  SVHN. The optimal BS scores on the seen CIFAR10 dataset and the unseen SVHN dataset are 0 and 0.1, respectively. As shown in Table.\ref{BS_score}, we see that the scores before calibration in the seen CIFAR10 is much lower than the ones in the unseen SVHN. This implies that all the models perform quite well in terms of what it knows, although cSGHMC are slightly better than the alternatives. To alleviate this issue, we propose to calibrate the predictive probability through the temperature scaling \citep{temperature_scaling} and obtain much better results.
Regarding the BS score on the unseen dataset, we see that M-SGD still performs the worst for frequently making over-confident predictions; SGHMC performs better but is far away from satisfying. reSGHMC obtains much better performance by allowing interactions between different chains. However, the large correction term affects the efficiency of the swaps significantly. In the end, our proposed algorithm increases the efficiency of the swaps via variance reduction and further improves the highly-optimized BS score based on reSGHMC from 0.29 to 0.27, which is much closer to the ideal 0.1. Note that the accurate uncertainty estimates of cVR-reSGHMC on the seen dataset is still maintained. Together with the lowest BS score in the unseen SVHN dataset, cVR-reSGHMC shows its strength in uncertainty quantification.

\begin{table*}[ht]
\begin{sc}
% \footnotesize
\caption[Table caption text]{Uncertainty estimates on SVHN using CIFAR10 models.}\label{BS_score}
\begin{center} 
\begin{tabular}{c|cc|cc}
\hline
\multirow{2}{*}{Method} & \multicolumn{2}{c|}{Brier Score (\upshape{before calibration})} & \multicolumn{2}{c}{Brier Score (\upshape{after calibration})}  \\
\cline{2-5}
 & \footnotesize{\upshape{CIFAR10 (seen)}} & \footnotesize{\upshape{SVHN (unseen)}} & \footnotesize{\upshape{CIFAR10 (seen)}} & \footnotesize{\upshape{SVHN (unseen)}} \\
\hline
\hline
M-SGD &   0.090$\pm$0.001   &    0.48$\pm$0.02 &   0.098$\pm$0.001   &    0.33$\pm$0.02    \\
SGHMC &  0.089$\pm$0.001  &    0.47$\pm$0.02  &  0.099$\pm$0.001  &    0.31$\pm$0.02  \\ 
\hline
\upshape{re}SGHMC & 0.086$\pm$0.002  &  0.41$\pm$0.03 & 0.097$\pm$0.001  &  0.29$\pm$0.02   \\ 
% \hline
\footnotesize{\upshape{c}SGHMC} &  0.084$\pm$0.001   & 0.43$\pm$0.02 &  0.092$\pm$0.001   & 0.30$\pm$0.02   \\
\footnotesize{\upshape{c}VR-\upshape{re}SGHMC} &   0.085$\pm$0.001  &   {0.38$\pm$0.02}  &   0.094$\pm$0.001  &   {0.27$\pm$0.02}  \\
\hline
\end{tabular}
\end{center} 
\end{sc}
\vspace{-1.5em}
\end{table*}

\section{Modified Example \ref{subsec:Toy-multi-mo}}
\label{subsec:Toy-multi-mo-modif}

We revisit Example \ref{subsec:Toy-multi-mo}, and re-run the procedures %under consideration targeting the same posterior of interest but 
with temperature
$\tau^{(1)}=1.0$. In Fig. \ref{fig:Trace-plots-and}, we present
trace plots and kernel density estimates (KDE) of samples generated
from VR-reSGLD, reSGLD, and SGLD. In particular, we run VR-reSGLD
with $m=40$, $\tau^{(1)}=1$, $\tau^{(2)}=500$, $\eta=1e-5$, and
$F=1$; reSGLD with the same hyper-parameters as VR-reSGLD except
for $F=500$; and SGLD with $\eta=1e-5$ and $\tau=1$. Note that
here, we run reSGLD with a greater $F$ than in Example \ref{subsec:Toy-multi-mo} in order to 
prohibit the drastic reduction of the swapping rate which is caused by the pickier target density. As in Example \ref{subsec:Toy-multi-mo}, for
the ground truth, we run replica exchange Langevin dynamics with long
enough iterations. In Figs \ref{fig:Trace-plot-forwetwert} and \ref{fig:Trace-plot-forsdfgdaf},
we observe that, even though the distribution of interest has a
pickier density, our proposed algorithm VR-reSGLD was able to detect both modes and acceptably jump between them. On the other hand, the competitor algorithm
SGLD was trapped in the first mode visited and never escaped. reSGLD
was able to jump some times between modes only after considering a
substantial factor $F=500$ which, according to the theory, introduces
bias. 

\begin{figure}
\center
\subfigure[Trace plot for VR-reSGLD and ground truth\label{fig:Trace-plot-forwetwert}]{\includegraphics[width=0.35\columnwidth]{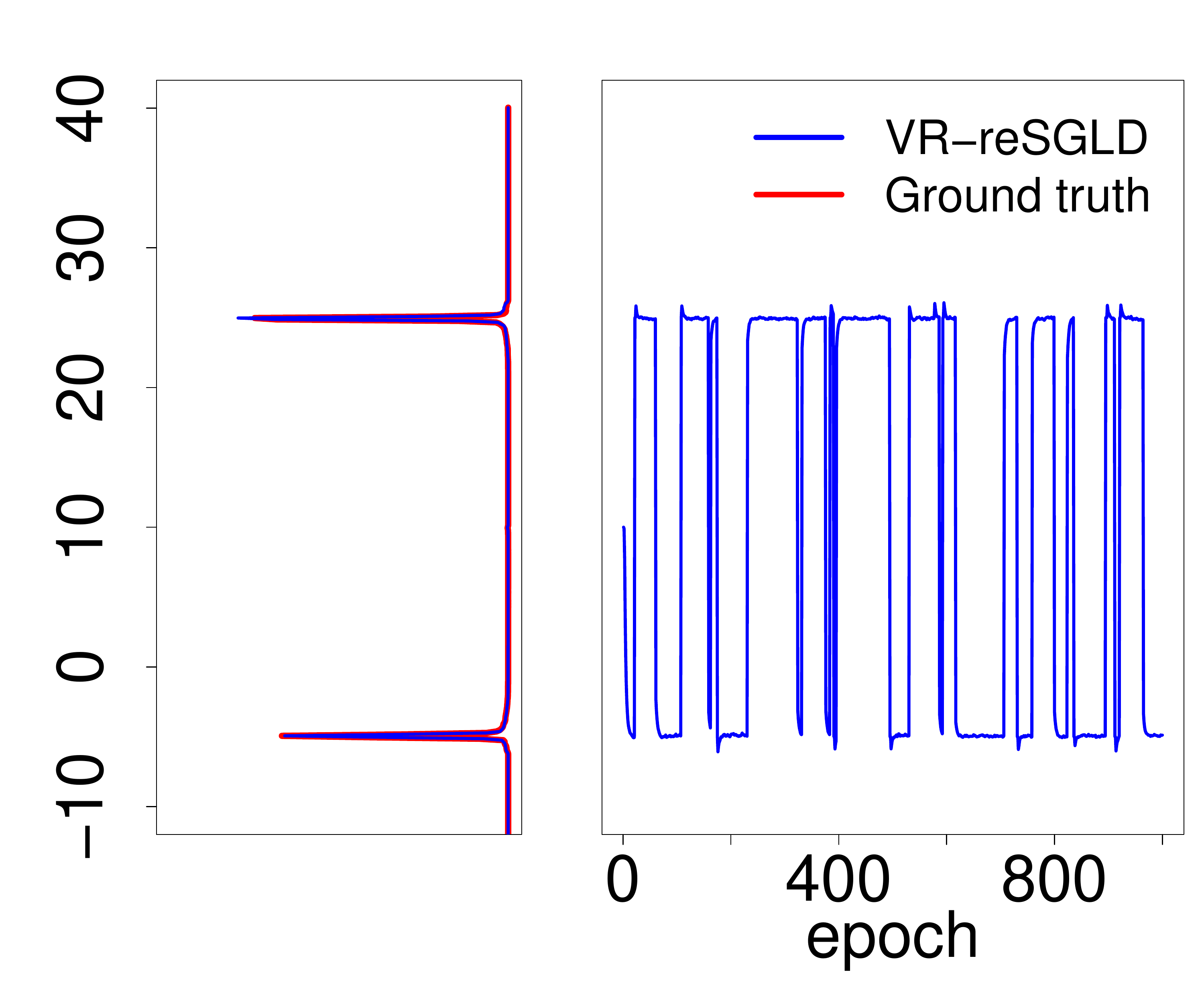}
}
\qquad{}
\subfigure[Trace plot for SGLD, reSGLD and ground truth\label{fig:Trace-plot-forsdfgdaf}]{\includegraphics[width=0.35\columnwidth]{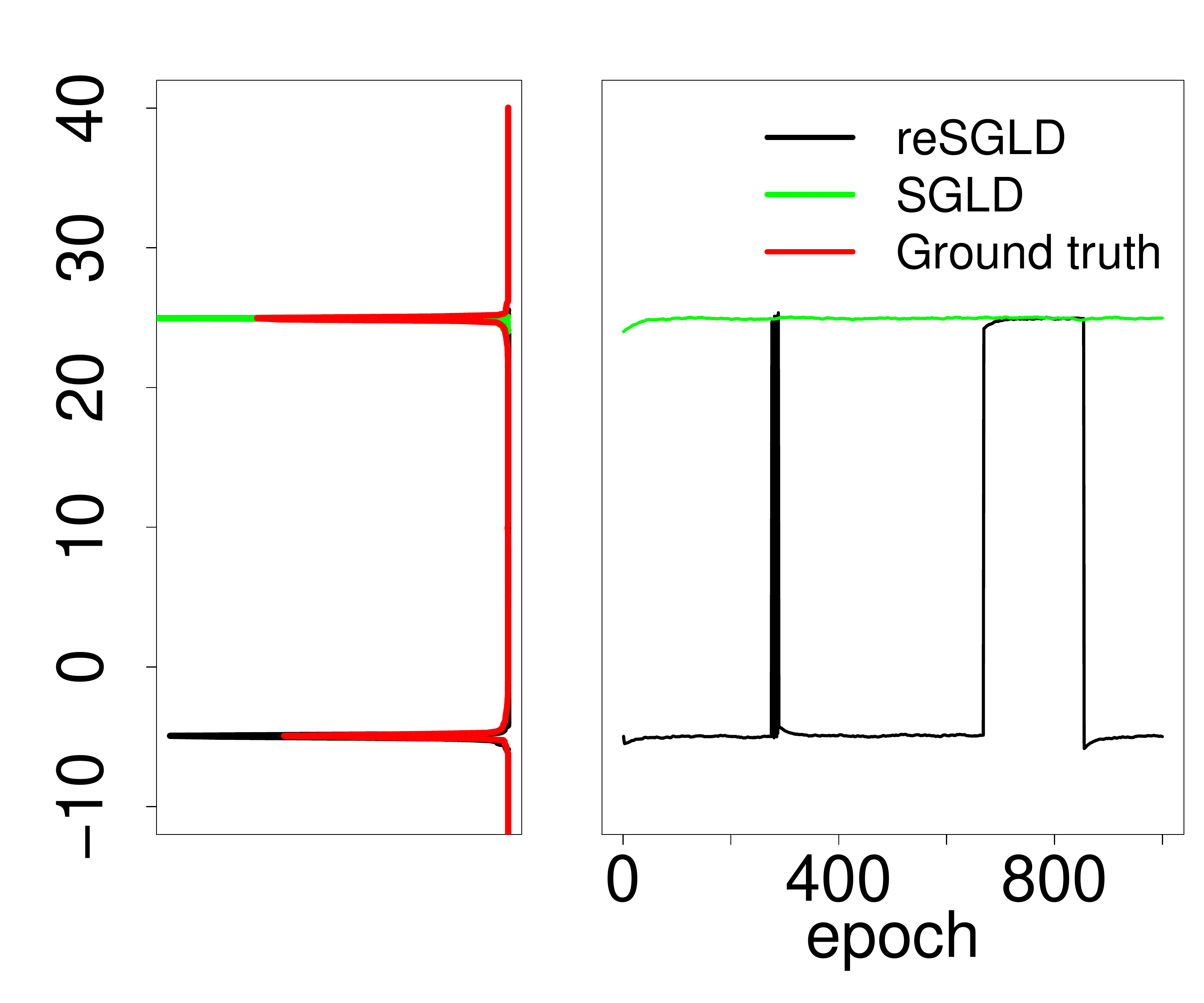}}\caption{Trace plots and KDEs of $\beta^{(1)}$\label{fig:Trace-plots-and}}

\end{figure}

% \bibliography{iclr2021_conference}
% \bibliographystyle{iclr2021_conference}

% \appendix
% \section{Appendix}
% You may include other additional sections here.

% \end{document}

\end{document}